\algnewcommand{\LineComment}[1]{\State \(\triangleright\) {#1}}
\setlist[enumerate]{wide}
\newcommand{\mexp}{{m_{\tiny {\rm xplr}}}}
\newcommand{\yd}{y^{D}}
\newcommand{\cH}{{\mathcal{H}}}
\newcommand{\cF}{{\mathcal{F}}}
\newcommand{\cG}{{\mathcal{G}}}
\newcommand{\cX}{\mathcal{X}}
\newcommand{\cB}{\mathcal{B}}
\newcommand{\cP}{{\mathcal{P}}}
\newcommand{\cPdiff}{{\mathcal{P}_{\tiny {\rm diff}}}}
\newcommand{\Weak}{\textup{Weak}}
\newcommand{\Str}{\textup{Str}}
\newcommand{\MAP}{\textup{MAP}}
\newcommand{\DEEMplus}{\ensuremath{\textup{DEEM}^{+}}\xspace}
\newcommand{\tDEEM}{{{\textup{\tiny DEEM}_N}}}
\newcommand{\eps}{\varepsilon}
\newcommand{\E}{\ensuremath{\mathbb{E}}}
\renewcommand{\epsilon}{\varepsilon}
\newcommand{\cJ}{\ensuremath{\mathcal{J}}}
\newcommand{\cI}{\ensuremath{\mathcal{I}}}
\newcommand{\cS}{\ensuremath{\mathcal{J}}}
\newcommand{\Sk}{\ensuremath{\mathcal{S}}}
\newcommand{\cC}{\ensuremath{\mathcal{I}}}
\newcommand{\cD}{\ensuremath{\mathcal{D}}}
\newcommand{\bV}{\ensuremath{\overline{V}}}
\newcommand{\ls}{\textup{limsup}}
\newcommand{\cE}{\ensuremath{\mathcal{E}}}
\newcommand{\ind}{{\mathbb{I}}}
\newcommand{\uT}{{\underline{T}}}
\newcommand{\Bern}{\textup{Bernoulli}}
\newcommand{\KL}{{\textup{KL}}}
\definecolor{darkjunglegreen}{rgb}{0.1, 0.14, 0.13}
\newcommand{\approxt}[1]{\ensuremath{\stackrel{#1}{\approx}}}
\newcommand{\piwho}{{\underline{\pi}}}
\newcommand{\unu}{{\underline{\nu}}}
\newcommand{\echide}[1]{}
\newcommand{\cJfull}{{\cJ^*_{\tiny {\rm full}}}}
\newcommand{\ty}{{\tilde{y}}}
\newcommand{\tx}{{\tilde{x}}}
\newcommand{\ub}{{\underline{b}}}
\newcommand{\ob}{{\overline{b}}}
\newtheorem{fact}{Fact}
\begin{document}
\RUNAUTHOR{Johari et al.}

\RUNTITLE{Matching While Learning}

\TITLE{Matching While Learning\footnote{To appear in {\it Operations Research}.}}

\ARTICLEAUTHORS{%
\AUTHOR{Ramesh Johari}
\AFF{Department of Management Science and Engineering\\ Stanford University\\ \EMAIL{rjohari@stanford.edu}} 
\AUTHOR{Vijay Kamble}
\AFF{Department of Information and Decision Sciences\\ The University of Illinois at Chicago\\ \EMAIL{kamble@uic.edu}}
\AUTHOR{Yash Kanoria}
\AFF{Decision, Risk, and Operations Division\\ Columbia Business School\\ \EMAIL{ykanoria@gsb.columbia.edu}}
\vspace{0.1in}\today
} 


\ABSTRACT{
We consider the problem faced by a service platform that needs to match limited supply with demand but also to learn the attributes of new users in order to match them better in the future. We introduce a benchmark model with heterogeneous ``workers'' (demand)  and a limited supply of ``jobs'' that arrive over time. Job types are known to the platform, but worker types are unknown and must be learned by observing match outcomes. Workers depart after performing a certain number of jobs. The expected payoff from a match depends on the pair of types and the goal is to maximize the steady-state rate of accumulation of payoff.  Though we use terminology inspired by labor markets, our framework applies more broadly to platforms where a limited supply of heterogeneous products is matched to users
 over time.

Our main contribution is a complete characterization of the structure of the optimal policy in the limit that each worker performs many jobs.
The platform faces a trade-off for each worker between myopically maximizing payoffs (\emph{exploitation}) and learning the type of the worker (\emph{exploration}). This creates a multitude of multi-armed bandit problems, one  for each worker, coupled together by the constraint on availability of jobs of different types (\emph{capacity constraints}).  We find that the platform should estimate a shadow price for each job type, and use the payoffs adjusted by these prices, first, to determine its learning goals and then, for each worker, (i) to balance learning with payoffs during  the ``exploration phase,'' and  (ii) to myopically match after it has achieved its learning goals during the ``exploitation phase.''
}

\KEYWORDS{matching, learning, two-sided platform, multi-armed bandit, capacity constraints.}

\maketitle
\section{Introduction}
\label{sec:intro}
A wide range of online platforms serve as matchmakers between demand and supply; for example, online labor markets match workers to jobs (e.g., Upwork for remote work, Handy for housecleaning, Thumbtack and Taskrabbit for local tasks, etc.); e-commerce platforms match consumers to goods (e.g., eBay, Amazon); and online fashion retailers  match clients to clothing items (e.g., Rent The Runway, Stitch Fix).  These platforms are characterized by two salient features that motivate our work.  {\em First}, they have a {\em limited supply} available; for example, in online labor markets, the supply of jobs is limited; while in e-commerce and online fashion platforms, the supply of goods is limited.  {\em Second}, these platforms need to {\em learn} enough about their users (the demand side) to be able to match them to the right units of supply.  Our paper addresses this twin challenge of matching while learning.

The problem we address is a version of the {\em exploration-exploitation} trade-off: on the one hand, efficient operation involves making matches that generate the most value (``exploitation''); on the other hand, the platform must continuously learn about newly arriving participants, so that they can be efficiently matched (``exploration'').  
The task is complicated in our setting due to the fact that supply is limited: matching a unit of supply to one user renders it unavailable to other users, an externality that cannot be ignored, whether exploring or exploiting.
In this paper, we develop a structurally simple and nearly optimal approach to resolving the exploration-exploitation trade-off in settings with limited supply.

For convenience, the terminology in our model will be inspired by online labor markets: we call the demand side of the platform the {\em workers}, and the supply side of the platform the {\em jobs}.  Jobs are in limited supply in the platform.  Despite this specific terminology, our model should be viewed as a stylized abstraction of many platforms where supply is matched to users in the presence of limited inventory, e.g., via algorithmic recommendation or matching engines.  Examples include online commerce and fashion retail platforms mentioned above, and similar platforms in other industries.

In our model, workers and jobs arrive over discrete time.  Workers depart after $N$ periods, while jobs each take one period for a single worker to complete (hence each worker performs $N$ jobs over her lifetime).  The supply of jobs at each period is limited.  Each time a worker and job are matched, a (random) payoff is generated and observed by the platform, where the payoff distribution depends on the worker type and the job type. (We assume a Bernoulli distribution for the payoffs.)  To incorporate the limited supply of jobs in the simplest possible way, our model considers a continuum of workers and jobs. As a consequence, in our analysis, we find that for a suitable class of policies, there is stochasticity only at the level of individual workers and not at the level of the overall system.

As our emphasis is on the interaction between matching and learning, our model has several features that focus our analysis on that interaction. First, we assume that the platform centrally controls matching: at the beginning of each time period, the platform matches each worker in the system to an available job.  Second, strategic considerations are not modeled; this remains an interesting direction for future work.  Finally, we focus on the prototypical goal of maximizing the steady-state rate of payoff generation. (This is a reasonable proxy for the goal of a platform that takes a fraction of the total surplus generated through matches.)

We assume the platform has system-level knowledge of the arrival rates of workers and jobs, as well as the expected payoff generated when workers of a given type are matched to jobs of a given type.  However, while we assume job types are known to the platform, we assume the platform is initially {\em unaware} of any specific worker's type on arrival.  (This is consistent with the observation that in most platforms, more is known about one side than the other.)

The platform learns about workers' types through the payoffs obtained when they are matched to jobs.  This gives rise to the central learning challenge: because the supply of jobs is limited, using jobs to learn can reduce immediate payoffs, as well as deplete the supply of jobs available to the rest of the marketplace.  Thus the presence of capacity constraints forces us to carefully design both exploration and exploitation in the matching algorithm in order to optimize the rate of payoff generation.





Our main contribution in this paper is the development of a matching and learning policy that is nearly payoff optimal.  Our algorithm is divided into two phases in each worker's lifetime: {\em exploration} (identification of the worker type) and {\em exploitation} (optimal matching given the worker's identified type).  We refer to our policy as {\em DEEM: Decentralized Explore-then-Exploit for Matching}.

DEEM is an algorithm that assigns jobs to workers over time.  We begin by noting that DEEM has a natural decentralization property: it determines the choice of job type for a worker based only on that worker's history, and not based on any other workers' histories. (We note, however, that DEEM itself is designed with knowledge of the global system-level statistics described above.)
This decentralization is inspired by the fact that in large-scale online platforms, matching is typically carried out on an individual basis.  For example, if a worker searches for jobs on an online labor market platform, the platform will generally display available jobs in a personalized rank order based on metadata about that worker. (In practice, this decentralization arises in part due to the inherent asynchronous nature of these platforms: workers and jobs arrive continuously over time, and batched centralized matching may be infeasible as a product design.)

At a high level, DEEM operates as follows during the lifetime of a given worker.  First, DEEM {\em explores} to make a confident estimate of the type of this worker.  This exploration phase consists of two modes: a {\em guessing} mode, where DEEM initially samples job types uniformly at random to develop a reasonable maximum a posteriori (MAP) estimate of the worker's type; and a {\em confirmation} mode, when DEEM chooses jobs to confirm the MAP type as efficiently as possible.  The exploration phase is followed by an {\em exploitation} phase, during which jobs are assigned based on the worker type that was confirmed during exploration.  Each of these phases is carefully designed to optimize the rate of payoff generation while ensuring that capacity constraints are met.

To develop intuition for our solution, 
consider a simple example with two types of jobs (Easy and Hard) and two types of workers (Expert and Novice).  Experts can do both types of tasks well; but novices can only do easy tasks well.  Suppose that there is a limited supply of easy jobs: more than the mass of novices available, but less than the total mass of novices and experts.  In particular, to maximize payoff the platform must learn enough to match some experts to hard jobs.

DEEM has several key features, each of which can be understood in the context of this example.  {\em First}, because DEEM operates at the level of a given worker, we must ensure that the algorithm nevertheless does not violate capacity constraints.  In particular, it is essential for the algorithm to account for the externality to the rest of the market when a worker is matched to a given job.  For example, if easy jobs are relatively scarce, then matching a worker to such a job makes it unavailable to the rest of the market.  Our approach is to ``price'' this externality: we find {\em shadow prices} for the capacity constraints, and adjust all per-match payoffs downward using these prices.

{\em Second}, our algorithm design specifies {\em learning goals} that ensure an efficient balance between exploration and exploitation. 
In particular, in our example, we note that there are two kinds of errors possible while exploring: misclassifying a novice as an expert, and vice versa.  Occasionally mislabeling experts as novices 
is not catastrophic: some experts need to do easy jobs anyway, and so the algorithm can account for such errors in the exploitation phase.  Thus, relatively less effort can be invested in minimizing this error type.  However, mistakenly labeling novices as experts {\em can} be catastrophic: in this case, novices will be matched to hard jobs in the exploitation phase, causing substantial loss of payoff; thus the probability of such errors must be kept very small.  A major contribution of our work is to precisely identify the correct learning goals that determine progression of the algorithm from the exploration phase to the exploitation phase, and to then design DEEM to meet these learning goals while maximizing payoff generation.

{\em Third}, the exploitation phase in DEEM is carefully constructed to ensure that capacity constraints are met while maximizing payoffs. A naive approach during the exploitation phase would match a worker to any job type that yields the maximum externality-adjusted payoff corresponding to his type label. It turns out that such an approach leads to significant violations of capacity constraints, and hence poor performance. The reason is that in a generic capacitated problem instance, one or more worker types are indifferent between multiple job types, and appropriate allocation across multiple optimal job types is necessary to achieve good performance. In our theoretical development, we achieve this by modifying the solution to the static optimization problem with known worker types, whereas our practical implementation of DEEM achieves appropriate allocation via simple but dynamically updated shadow prices.

Our main result (Theorem \ref{thm:mainresult}) shows that DEEM achieves essentially optimal regret as the number of jobs $N$ performed by each worker during her lifetime grows, where regret is the loss in payoff accumulation rate relative to the maximum achievable with known worker types.  In our setting, a lower bound on the regret is $(C\log N/N)(1+ o(1))$ for some $C \in [0,\infty)$ that is a function of system parameters (we use the technical machinery developed in \cite{agrawal1989asymptotically} for a related problem to prove this bound).  DEEM achieves this level of regret to leading order when $C>0$, while it achieves a regret of $O(\log\log N/N)$ when $C=0$.

Situations where $C>0$ are those in which there is an inherent tension between the goals of learning and payoff maximization.  To develop intuition, consider an expanded version of the above example, where each worker can be either an expert or novice programmer, as well as an expert or novice graphic designer.  Suppose that the supply of jobs is such that if worker types were known, only expert graphic designers who are also novice programmers would be matched to graphic design jobs. (This would be the case, e.g., if there were an excess supply of programming jobs, whereas the supply of graphic design jobs were less than the volume of available workers who are both expert graphic designers and novice programmers.)
But if we are learning worker types, then expert graphic designers must be matched to approximately $\Omega(\log N)$ programming jobs to distinguish between novice and expert programmers, so that they can be matched to graphic design and programming jobs, respectively.  Thus $\Omega(\log N/N)$ average regret per period is incurred relative to the optimal solution with known types. DEEM precisely minimizes the regret incurred while these distinctions are made, thus achieving the lower bound on the regret.

Our theory is complemented by a practical heuristic that we call $\textrm{DEEM}^+$, which optimizes performance for small values of $N$, an implementation leveraging queue-length based shadow prices that demonstrates a natural way of translating our work into practice, and supporting simulations. In particular, our simulations reveal substantial benefit from jointly managing capacity constraints and learning, as we do in DEEM and $\textrm{DEEM}^+$.


The remainder of the paper is organized as follows.  After discussing related work in Section \ref{sec:relatedwork}, we present our model and outline the optimization problem of interest to the platform in Section \ref{sec:model}.
In Section~\ref{sec:DEEM}, we discuss the above three key ideas in the design of DEEM, and present its formal definition.   In Section \ref{sec:mainresult}, we present our main theorem, and discuss the optimal regret scaling.  In Section~\ref{sec:proofsketch} we present a sketch of the proof of the main result. 
In Section~\ref{sec:deemplus}, we discuss the practical implementation of DEEM and present the heuristic \DEEMplus. In Section~\ref{sec:simulations}, we use simulations to compare the performance of \DEEMplus with benchmark multi-armed bandit algorithms. We conclude in Section \ref{sec:conclusion}. All proofs are in the appendices. 

\section{Related literature}
\label{sec:relatedwork}

Below we discuss the relationship between our work and several related threads in the literature on (1) general stochastic multi-armed bandits, (2) dynamic pricing and learning, (3) combinatorial bandits, including bandits with matching constraints, and (4) dynamic stochastic matching models.

Before surveying these threads of literature, we note here that in the period since the initial development of our results, our paper has inspired a subsequent paper, \cite{hsu2018integrating}, which studies a very similar matching while learning setting, and shows near optimality of a ``backpressure'' algorithm similar to the finite $N$ heuristic $\DEEMplus$ that we propose here (see Sections~\ref{sec:deemplus} and \ref{sec:simulations}).  {\em Backpressure} is a celebrated methodology that prescribes using current queue-lengths as shadow prices \citep{tassiulas1990stability}.  \cite{hsu2018integrating} goes beyond this paper by showing near optimality of backpressure in their setting, but at a cost: their bounds on the bandit (learning) problem are loose with a $1/\sqrt{N}$ upper (achievability) bound on the regret, which is much larger than their $\log N/N$ lower bound. By contrast, our theoretical analysis is focused on a tight characterization of regret; we obtain tight $\log N/N$ bounds on the regret, which match even in the constant factor.

{\bf Stochastic multi-armed bandits}.  A foundational model for investigating the exploration-exploitation tradeoff is the stochastic multi-armed bandit (MAB) problem  \citep{lai1985asymptotically,bubeck2012regret, gittins2011multi,icml_tutorial}.  The goal in this problem is to find an adaptive expected-regret-minimizing policy for choosing among arms with unknown payoff distributions, where regret is measured against the expected payoff of the best arm \citep{lai1985asymptotically,auer2002finite, agrawal2011analysis}.

The closest work in this literature to the MAB problem we tackle is by \citet{agrawal1989asymptotically}.  In their model, they assume that the joint vector of arm distributions can only take on one of finitely many values.  This introduces correlation across different arms.  Depending on certain identifiability conditions, the optimal regret is either $\Theta(1/N)$ or $\Theta(\log N/N)$.  In our model, the analog is that job types are arms, and for each worker, we solve a MAB problem to identify the true type of a worker from among a finite set of possible worker types.
In fact, the model of \cite{agrawal1989asymptotically} is a special case of our model with no capacity constraints on jobs. Like us, they study the limit $N \to \infty$ and find a policy that achieves regret that is optimal to leading order as $N \to \infty$. To the best of our knowledge, their result remains the state of the art in their setting. Capacity constraints are of course the innovation and focus of the present paper. Notably, our main result generalizing that of \cite{agrawal1989asymptotically} to allow capacity constraints is \emph{as sharp as the result they obtained in their much simpler setting} in the case where there is a tension between learning and exploitation (i.e., the case where regret is $\Theta(\log N/N)$).

As demonstrated in ~\cite{agrawal1989asymptotically}, the key to attaining the instance-dependent optimal leading-order regret in such multi-armed bandit problems is the following intuition. Given a potential true model, there is a regret-optimal policy that distinguishes this model from all competing models that entail different optimal decisions (defined in \eqref{def:ci} for our setting). Hence, to minimize regret, the challenge is to utilize this model-specific regret-optimal policy to learn the true model, without a priori knowing the true model. This is precisely the challenge we tackle using the guess-then-confirm approach in the exploration phase of DEEM.
Recently, \cite{modaresi2019learning} have addressed a similar challenge in a general combinatorial bandit setting.

On a related note, \citet{massoulie2018capacity} study a pure learning problem in a setting similar to ours with capacity constraints on each type of server/expert; while there are some similarities in the style of analysis, that paper focuses exclusively on learning the exact type, rather than balancing exploration and exploitation as we do in this paper.

{\bf Dynamic pricing and learning}.  Some of the techniques used in our work have parallels in works on dynamic pricing and learning with a finite inventory of products (for a recent comprehensive survey of dynamic pricing and learning, see \cite{den2015dynamic2}). These are essentially MAB problems where the decisions involve choosing product prices dynamically over a selling horizon, with a capacity constraint arising from the finite inventory. 
A typical approach in these settings is to consider a regime where both the inventory and the demand grow large (although there are exceptions, notably \cite{den2015dynamic}). This is similar to the regime we consider for our technical results, which is equivalent to having both the job arrival rates and the worker lifetimes simultaneously approach infinity.\footnote{\label{fn:scaling-regime}We in fact consider a regime where the job capacities are held constant and we reduce the worker arrival rates as their lifetime increases. But it is straightforward to see that we can equivalently keep the worker arrival rates fixed and increase the job capacities as the worker lifetimes increase, without impacting any of our results or insights.}  
Such a regime was first analyzed in the case of a single product in \cite{besbes2009dynamic}, which proposed algorithms with an explore-then-exploit structure for settings with both parametric and non-parametric uncertainty. A more sophisticated algorithm that mixes exploration and exploitation with an improved regret performance in both settings is presented in \cite{wang2014close}. \cite{besbes2012blind}  and, recently, \cite{ferreira2018online} extend the analysis to network revenue management settings involving multiple products using multiple resources with finite inventories. More generally, a recently proposed formulation for MAB problems with capacity constraints, broadly referred to as {\em bandits with knapsacks} \citep{badanidiyuru2013bandits} and its extensions \citep{badanidiyuru2014resourceful,agrawal2014bandits,agrawal2015contextual,agrawal2015linear}, subsume several problems in revenue management under demand uncertainty; see for instance \cite{saure2013optimal} and \cite{babaioff2015dynamic}, in addition to the models discussed above.

The algorithms designed in all of these works critically leverage the solution to the optimal pricing problem in the full information setting in a deterministic world where stochastic quantities are replaced by their means. Similar to these works, we also crucially utilize the full information optimal assignment problem (which is a linear program in our case), and in particular the optimal shadow prices for the jobs from the dual of this optimization problem, in determining the job assignments under DEEM. It is known 
that simply using the optimal price corresponding to the best model estimate from the obtained information at any step (also known as ``certainty equivalent'' control) can potentially lead to incomplete learning and hence linear regret; see for instance Proposition 1 in \cite{den2014simultaneously}. Thus judicious experimentation with prices is necessary.

In a similar fashion, naively using the optimal shadow prices from the full information optimization problem to greedily assign jobs based on current estimates of the worker type typically leads to linear regret in our setting (see Fact~\ref{fact:tiebreaking_necessary} in Section~\ref{subsec:example}). The problem is twofold in our case: the issue is not only that learning may  stop prematurely under such a policy, but also that {appropriate allocation} across {\it multiple} optimal job types is typically necessary in our setting to satisfy capacity constraints. Thus a good algorithm in our setting needs to achieve both goals, judicious experimentation and effective {allocation} across optimal assignments, to achieve low regret. In fact we go one step further, obtaining a policy that achieves not just sublinear but near-optimal regret.

Another key difference in our work compared to these models is that they consider a single MAB problem over a fixed time horizon. 
Our setting on the other hand can be seen as a system with an ongoing arriving {\em stream} of MAB problems, one per worker, that are coupled together by the capacity constraints on arriving jobs.

{\bf Bandits with matching constraints and combinatorial bandits.}  Several MAB problems with matching constraints can be seen as instances of a larger class of models typically referred to as combinatorial bandits \citep{gai2010learning,gai2012combinatorial, liu2012adaptive,chen2013combinatorial,saure2013optimal,kveton2015tight}. Considering the problem of matching all the workers that exist on a platform to the set of available jobs in a particular time period, one can think of the combinatorial set of all possible matchings as being the arms in a MAB setting (sometimes called ``superarms''); this formulation is the closest to the one in \cite{gai2010learning}. Several works have looked at exploiting the structure of such problems in various settings to yield efficient learning algorithms (e.g., \cite{saure2013optimal, gai2010learning, liu2012adaptive}).

In our case, there are two key aspects that make such a reduction to combinatorial bandits infeasible. First, the number of workers and jobs on real-world platforms is large, 
and hence the number of possible matchings is prohibitively large, even when one accounts for limited variety in job types (worker types are unknown and there is a vast heterogeneity in worker histories). Thus decentralization is critical to obtaining a practically feasible solution, which is a feature rarely seen in combinatorial bandit algorithms. 
Second, the fact that the workers are arriving and leaving asynchronously means that the set of possible matchings, and hence the set of combinatorial arms, is changing over time, which is another feature that is relatively uncommon in the extant literature. An example is \cite{chakrabarti2009mortal}, which considers this problem in a non-combinatorial setting.

{\bf Other dynamic stochastic matching models.}  We briefly discuss a few other directions that are related to this paper.  There are a number of recent studies that consider efficient matching in dynamic two-sided matching markets \citep{akbarpour2014dynamic, anderson2015dynamic, baccara2015optimal, ozkan2017dynamic, hu2015dynamic, kadam2015multi, damiano2005stability, kurino2020credibility, das2005two}.  A related class of dynamic resource allocation problems, online bipartite matching, is also well studied in the computer science community (see \cite{mehta2012online} for a survey). 
Similar to the present paper, \citet{fershtman2015dynamic} also study matching with learning, mediated by a central platform.  Relative to our model, their work does not have constraints on the number of matches per agent, while it does consider agent incentives.

\section{The model and the optimization problem}
\label{sec:model}

In this section we first describe our model.  In particular, we describe the primitives of our platform (``workers'' and ``jobs''), and give a formal specification of the matching process we study.  We conclude by precisely defining the optimization problem addressed in this paper.

{A key aspect of our approach is that we consider a model with a {\em continuum} of workers in the system.  The policies we propose for matching workers to jobs will recommend a job type independently for each worker as a function of the ``history'' of that worker alone.  In our analysis, we leverage the general framework provided by \cite[][esp. Section 2.4]{sun2006exact}, which provides a formal mathematical basis for a continuum of independent stochastic processes, including the exact law of large numbers (ELLN) for cross-sectional averages 
(Theorem 2.16 of \cite{sun2006exact}).   Informally, applying this framework allows the interchange of worker-level probabilistic statements with population-level statements about the evolution of the cross-sectional worker measure over time, yielding the tractable (though challenging) optimization problem we study in this paper (see Section \ref{sec:optimization}).  We apply the ELLN throughout our development below to yield such interchanges, as appropriate.}

\subsection{Preliminaries: A continuum model}
\label{subsec:model-prelims}

In this section, we describe the basic model that we work with.

{\bf Time}.  We assume that time is discrete $t = 0, 1, 2, \ldots$.

{\bf Probability space}.  We fix a probability space $(\Omega, \cF, P)$. An element $\omega \in \Omega$ is a state of the world.  All randomness throughout our development below is resolved by the state of the world $\omega \in \Omega$.  An event is a measurable subset $B$ of $\Omega$ (that is, an element of $\cF$), whose probability is $P(B)$. Any statements of events occurring ``with probability 1'' refer to almost sure events w.r.t.~the measure $P$.

{\bf Workers and jobs}.  For convenience we adopt the terminology of {\em workers} and {\em jobs} to describe the two sides of the market.  Each job in the system is of one of a fixed finite set of {\em job types} $\cS$, and each worker in the system is one of a fixed finite set of {\em worker types} $\cC$. {We consider a continuum model with infinitesimal workers and jobs, and thus refer to \emph{masses} of workers and jobs.} Informally, this approach is intended to capture a large market, i.e., where many workers and jobs are present at each time step.

We assume a fixed unit mass of workers; we view the space of workers as a measure space, endowed with the Lebesgue measure on $[0,1]$ and the Borel $\sigma$-algebra.
Each element $g \in [0,1]$ represents a worker. \cite{sun2006exact} provides a {\em Fubini extension} of the product measure corresponding to the worker measure space and the probability space $(\Omega, \cF, P)$; this extension is, roughly, a rich enough probability measure on the product space such that the Fubini property holds.  We leverage this extension in our development. 

We wish to model a process by which workers arrive and depart from the system; however, for technical simplicity we also wish to consider a system where the mass of workers remains finite at all times.
To achieve both goals, we consider a system where each worker {\em regenerates} after every $N$ time periods; we refer to $N$ as the {\em lifetime} of a worker.\footnote{Our analysis and results generalize to random (exogenous) worker lifetimes that are i.i.d.~across workers of different types, with mean $N$ and any distribution such that the lifetime exceeds $N/\textup{polylog}(N)$ with high probability. In particular, the definition of our DEEM policy in Figures~\ref{fig:def-DEEM} and \ref{fig:def-alphai-ystar} remains unchanged except that the condition $k<N$ in the while commands in lines 9 and 21 of Figure~\ref{fig:def-DEEM} is replaced by the condition that the worker has not yet left the system.
Theorem~\ref{thm:mainresult} remains unchanged as well. Note that the platform only needs to know the mean lifetime $N$ beforehand to implement DEEM; it suffices for the platform to find out about the departure (as per the realized lifetime) of a worker only when it occurs.}  We assume the platform knows $N$.

Formally, fix a distribution $\rho$ over worker types, i.e., $\rho_i>0 \ \forall i \in \cI$ such that $\sum_{i \in \cC} \rho_i = 1$. 
We assume that the system initially starts empty prior to $t = 0$, and in each time period $t = 0, \ldots, N-1$, a mass $1/N$ of workers arrives to the system. (In what follows we ultimately consider a steady-state analysis of the dynamical system, and initial conditions will be irrelevant.) 
Each worker is of type $i$ with probability $\rho_i$; these realizations are independent across workers.\footnote{Here and throughout, ``independence'' of a continuum of random variables means that any finite subcollection is mutually independent.} 
No further arrivals take place after time period $N$.  Instead, each worker subsequently regenerates every $N$ periods after their arrival: at a regeneration time, the worker type is resampled from the distribution $\rho$; i.e., the new type is $i$ with probability $\rho_i$, and these regenerations are also independent across workers {and across time}.  
The ELLN (Theorem 2.16 of \cite{sun2006exact}) ensures that, at each time $t$ subsequent to time $N$, the mass of workers of type $i$ in the system is exactly $\rho_i$.  (In what follows we will consider the scaling regime where $\rho_i$ is held constant and $N \to \infty$.)
When the meaning is clear from the context, we sometimes refer to a worker type regeneration as an ``arrival.'' 
Correspondingly, we sometimes refer to $\rho_i$ as the arrival rate of workers of type $i$.

Each worker has the opportunity to do at most one job during each time period of their lifetime.  We assume that in each time period a mass $\mu_j> 0 $ of jobs of type $j$ arrive to be matched to workers; each job lives for only a single time period; {we call $\mu_j$ the \emph{capacity constraint} of job type $j$}.  The platform's {\em matching policy} determines how workers are matched to jobs; we elaborate further on matching policies below.

We assume that type uncertainty exists only for workers; i.e., the platform knows the types of arriving jobs exactly, but only knows that each newly arrived worker has type drawn i.i.d. from $\rho$ and needs to learn the types of workers.  We also assume that the arrival rates of jobs $(\mu_j)_{j \in \cJ}$ and the distribution of worker types $(\rho_i)_{i \in \cI}$ are known to the platform. 

{\bf Matching and the payoff matrix}.  If a worker of type $i \in \cI$ is matched to a job of type $j \in \cJ$, then the resulting match, independent of everything else, generates a Bernoulli reward with success probability $A(i,j) \in [0,1]$.  The matrix $A$ thus characterizes compatibility between workers and jobs.  We call the matrix $A$ the {\em payoff matrix}.  Throughout, we assume that no two rows of $A$ are identical. (This mild requirement simply ensures that it is possible, in principle, to distinguish between each pair of worker types.) As we will only be concerned with the long-run rate of payoff generation, we do not concern ourselves with the division of this payoff between workers and employers. We assume that realized payoffs are observed by the platform.  

For ease of exposition, we define an ``empty'' job type $\kappa$, such that all worker types matched to $\kappa$ generate zero reward, i.e., $A(i,\kappa) = 0$ for all $i$.  We view $\kappa$ as representing the possibility that a worker goes unmatched, and thus assume that an unbounded capacity of job type $\kappa$ is available, i.e., $\mu_\kappa = \infty$. We assume that $\kappa$ is included in $\cJ$.

A key assumption in our work is that the platform {\em knows} the matrix $A$.  In particular, we are considering a platform that has enough aggregate information to precisely decipher the compatibility between different worker and job types.

We note here that a platform can estimate $\mu$, $\rho$, and $A$ from data: the job arrival rates $\mu$ can be directly estimated empirically since job types are observed, while the worker arrival rates $\rho$ and payoff matrix $A$ can be indirectly estimated using the observed outcome data as described in Appendix~\ref{apx:estimation}. 


{\bf Generalized imbalance}.  Throughout our technical development, we make a mild structural assumption on the problem instance, defined by the tuple $(\rho, \mu, A)$.  This is captured by the following definition.
We say that arrival rates $\rho = (\rho_i)_{i \in \cC}$ and $\mu = (\mu_j)_{ j \in \cS}$ satisfy the \emph{generalized imbalance condition} if there is no pair of nonempty subsets of worker types and job types $(\cC', \cS')$, such that the total worker arrival rate of $\cC'$ exactly matches the total job capacity of $\cS'$. Formally,
\begin{align}
  \sum_{i\in\cC'}\rho_i \neq \sum_{j\in\cS'}\mu_j \quad \forall   \cC'\subseteq \cC,  \cS'\subseteq \cS,  \cC' \neq \phi \, \label{eqn:genimb}.
\end{align}
The generalized imbalance condition holds generically.\footnote{The set $(\rho, \mu)$ for which the condition holds is open and dense in $\textup{Relint}(\Delta_{|\cI|}) \times \mathbb{R}_{++}^{|\cJ|}$, where $\Delta_{|\cI|}$ is the probability simplex in $|\cI|$ dimensions, $\textup{Relint}(\cdot)$ denotes the relative interior, and $\mathbb{R}_{++}$ are the strictly positive real numbers. 
} Note that this condition does not depend on the matrix $A$. (The condition will ensure that the shadow prices corresponding to capacity constraints under full information are uniquely determined; see Proposition~\ref{prop:uniqueness_of_prices} in Section~\ref{sec:preliminaries}.)

\subsection{Matching policies and platform objective}
\label{subsec:matching-policies}


A {\em matching policy} is what the platform uses to match jobs to workers.  Informally, we model the following process. The operator knows, at any point in time, the history of each worker in the platform, and also knows the job arrival rates $\mu_j$ for $j \in \cS$.  The matching policy of the platform decides how to match workers and jobs; in particular, it decides which job type each worker is assigned to, while respecting the capacity constraints on job types.  

With this intuition in mind, we now formally define a matching policy, and then define the platform's goal: to choose a matching policy that maximizes the long-run average rate of payoff generation. 

{\bf Worker history}.  To define the state of the system and the resulting matching dynamics, we need the notion of a worker history; informally, this is the full history of a given worker since her last regeneration.  Formally, a {\em worker history of length $k$} is a tuple $H_k = ((j_1, r_1), \ldots, (j_k, r_k))$, where $j_{k'}$ is the job type this worker was matched to at her ${k'}$-th time step in the system since her last regeneration, for $1 \leq {k'} \leq k$; and $r_{k'} \in \{0, 1\}$ is the corresponding reward obtained.  Note that since workers persist for $N$ jobs between regenerations, the histories will have lengths $k = 0, \ldots, N-1$.  We use $H$ to denote a generic history. We let $\phi$ denote the empty history (for $k = 0$). We let $\cH = \cup_{k = 0}^{N-1}{(\cJ \times \{0,1\})^k}$ denote the set of possible histories.

{\bf Full system state.}  The {\em full system state} (also referred to as the {\em full state} or simply the {\em state}) at time $t$ is a mapping from workers to their histories and true types, $\xi_t: [0,1] \rightarrow \cH \times \cI$.

{\bf Observable system state.}  Note that the platform is not able to observe the true type of a worker; in particular, for any $g \in [0,1]$, the platform only observes the history of the worker $g$.  Define $\hat{\xi}_t : [0,1] \rightarrow \cH$ as the projection of the full state $\xi_t$ onto the set of histories $\cH$; this is the {\em observable state} at time $t$.  Any policy the platform implements must depend on only the observable state.

Recall that the system starts with no workers in the system before time $t=0$.  Our subsequent development will ensure that $\hat{\xi}_t$ is Borel measurable with probability 1 for all times $t = 0, 1, \dots$. 
%
%

{\bf Matching policy.}
The platform uses a {\em matching policy} to assign each worker to a job type in $\cJ$ (recall that we think of unmatched workers as being matched to the empty job type $\kappa$).  
As mentioned above, we assume that any mass of jobs left unmatched in a given period disappears at the end of that period, though our results do not depend on this assumption. Fix $N$, and recall that the platform is assumed to know $N$.

Formally, a matching policy is a mapping, for each $t$, from the observable states $\hat{\xi}_t$ to assignments of workers to job types. 
We restrict attention to matching policies such that for all $t = 0, 1, \dots$ and for any measurable $\hat{\xi}_t$, 
with probability 1, the set of workers with each history in $\cH$ assigned to each job type in $\cJ$ is Borel measurable; we refer to these as {\em measurable} matching policies.

Further, we restrict attention to matching policies that are \emph{capacity-feasible}; a policy is capacity-feasible if, for all $t \in \mathbb{N}$ and for any measurable $\hat{\xi}_t$, w.p.~1, the set of workers assigned to each job type $j$ has mass (i.e., Lebesgue measure) no more than the capacity $\mu_j$ for each $j \in \cJ$. 

The matching policy can choose a randomized assignment; in this case all relevant randomness used by the policy is encompassed by $\omega$, the state of the world.

Note that the definition of a matching policy and the definitions of measurability and capacity feasibility all appeal only to the notion of the observable state. We also note in passing that the platform can define a matching policy and check that it is measurable and capacity-feasible even without knowing $A$ and $\rho$.




{\bf System dynamics.}  {Next we will describe the system dynamics; we subsequently use these to specify the platform objective.} 

Fix a matching policy.  In each period $t$, for each worker $g$ (with history denoted by $H$), the matching policy determines the job type $j$ assigned to that worker.  If that worker is actually of type $i$, then the realized payoff is $r \sim \textup{Bernoulli}(A(i,j))$ and the new history of $g$ becomes $(H, (j,r))$ (if $g$ does not regenerate); otherwise, the payoff accrues but $g$ regenerates to an empty history with true type resampled (independently) from distribution $\rho$.

Note that by the same Fubini extension 
of \cite{sun2006exact}, for any measurable matching policy $\pi$, the set of workers of history $H$ with true type $i$ assigned to job type $j$ will be Borel measurable with probability 1 at all times $t$; for policy $\pi$, call this mass $m_{\pi,t}(H, i, j)$.  
Then it follows by the ELLN of \cite{sun2006exact} that the reward generated from these assignments at time $t$ is $m_{\pi,t}(H,i,j) A(i,j)$. {Note that for a general policy $\pi$, the mass $m_{\pi, t}(H,i,j)$ is a random variable. Also observe that for any candidate policy $\pi$, the platform can compute the distribution of $m_{\pi, t}(H,i,j)$ and hence the reward generated using its knowledge of $A$ and $\rho$. (The platform can perform this computation offline for any candidate policy, notwithstanding the fact that the true worker types are unobservable.)  For brevity we skip the details of the computation for general policies, 
but provide the full calculation for the ``sufficient'' subclass of policies that we identify in the next section.}

For later reference, we let $x_{\pi,t}(i,j)$ be the derived (random) quantity representing the {\em fraction} of workers of true type $i$ matched to jobs of type $j$ at time $t$ under policy $\pi$; we refer to $x_{\pi,t}$ as the {\em routing matrix at time $t$} of policy $\pi$.  This is a (row) stochastic matrix for each $t$; i.e., each row sums to $1$.  Note that for times $t \geq N-1$, the mass of workers of true type $i$ in the system is exactly equal to $\rho_i$.  Therefore, for $t \geq N-1$, it follows that $x_{\pi,t}(i,j) = \frac{1}{\rho_i}\sum_{H \in \cH} m_{\pi,t}(H,i,j)$.



{\bf Platform objective: Rate of payoff generation.} Recall that each worker generates a payoff of 1 or 0, in each period. The platform then aims to maximize the long-run average of the mass of workers who generate a payoff of 1 in each period. 
(This choice of objective is the analog of the ``total payoff per period'' objective in a setting with finitely many workers.) As a result of the ELLN of \cite{sun2006exact}, the long-run average rate of payoff generation is identical to the long-run average of $\sum_{i \in \cI} \sum_{j \in \cJ} A(i,j) \sum_H m_{\pi,t}(H,i,j)$.  

The long-run average may not exist for an arbitrary measurable policy, and so formally we define the objective as the limit inferior of the expectation of this quantity:
\begin{align}
\underline{V}(\pi) &= \liminf_{T\rightarrow \infty} \E [ V_T(\pi) ] \label{eq:objective}\\
\textup{where}\ V_T(\pi) &= \frac{1}{T} \sum_{t=1}^T \sum_{i\in\cI}\rho_i\sum_{j\in\cJ} x_{\pi,t}(i,j) A(i,j) \, .
\label{eq:V_T}
\end{align}
Note that in the definition of $V_T$, we make the substitution that $\rho_i x_{\pi,t}(i,j) = \sum_{H \in \cH} m_{\pi,t}(H,i,j)$, since the latter relation holds for all $t \geq N$.  The goal is to find policies that maximize this objective. As per our earlier remark, the platform is able to compute offline the objective value $\underline{V}(\pi)$ for 
any candidate policy $\pi$, even though the true worker types are unobservable.

\subsection{Worker-history-only policies}
\label{subsec:WHO}

Note that, in general, policies may be time-varying, and may have complex dependence on the observable state $\hat{\xi}_t$.  In this subsection, we introduce a much simpler class of policies that we call {\em worker-history-only (WHO) policies}. These are policies where, as a function of the history of each individual worker, a job type is drawn independently from a given distribution, which does not depend on time or on the identity of the worker or on the state of the rest of the system. 

Formally, a WHO policy is associated with a mapping $\pi: \cH \rightarrow \Delta_\cJ$, where $\Delta_\cJ$ denotes the probability simplex of distributions on $\cJ$. (Thus, for WHO policies, we have chosen to identify the notation $\pi$ with the mapping that defines the policy.) For each worker $g$ with current history $H$, the job type for $g$ is sampled from the distribution $\pi(H)$, independently of the other workers. We use $\pi(H,j)$ to denote the $j$-th coordinate of $\pi(H)$.  {Note that WHO policies are {\em anonymous}; i.e., they do not depend on the worker's index.}  We let $\Pi^N$ denote the class of WHO policies, for a given $N$. 

The platform operator may choose the mapping $\pi$ using information available in aggregate, such as the payoff matrix $A$, the worker type distribution ${\rho}$, and the arrival rates of jobs $\mu$. However, the only way that observable state information influences the online matching of a worker to a job in a WHO policy is through the history of the individual worker.  For example, suppose that the platform uses a multi-armed bandit algorithm at the level of an individual worker's history to determine the next job they are matched to; in our model this would be a WHO policy.  In this sense, WHO policies are {\em decentralized} in their assignment of intended job types.

In the remainder of the section, we specialize our model to WHO policies; as we show, this yields a substantially more tractable setting. Observe that, a priori, there is no guarantee that a WHO policy will respect the capacity constraints on jobs.\footnote{In other words, it is possible that a WHO policy may lead to a mass of workers matched to jobs of type $j$ in some period that exceeds $\mu_j$. Formally, to ensure that WHO policies satisfy the capacity-feasibility requirement defined in Section~\ref{subsec:matching-policies}, we define that if the implied assignment under a WHO policy violates a capacity constraint, the WHO policy does not assign any jobs to workers in that period or in any subsequent period. This definition is merely for concreteness; it does not affect our results
because we will ensure that capacity violations occur with probability 0 (Lemma~\ref{lem:capacities-and-steady-state} below).}
To handle this issue, we begin by ignoring capacity constraints; we define the state dynamics and steady-state of a WHO policy, and use this to identify the steady-state rate of payoff generation for such policies.  We then characterize the subclass of WHO policies such that capacity constraints are satisfied.  Finally, we make the important observation that we may restrict attention to WHO policies {\em essentially without loss of optimality} (see Proposition \ref{prop:who_sufficiency} below).  For this reason, in the sequel, we focus on finding approximately optimal WHO policies.


{\bf Steady state of a WHO policy $\pi$.}  Assume no capacity constraints, i.e., $\mu_j = \infty$ for all $j$.
Because WHO policies are anonymous, in analyzing WHO policies it is convenient to work instead with a {\em reduced} state $\nu_t$, called the {\em system profile}, which only measures the aggregate mass of workers with history $H$ and true type $i$ just prior to period $t$, for each pair $(H,i)$.  Formally, $\nu_t(H,i) \triangleq | \xi_t^{-1}(H,i) |$,
where $|\cdot|$ denotes the Lebesgue measure of the set.  As before, we emphasize that this system profile is not observable to the platform, as it does not know the true types of workers.
Note that, using the ELLN {of \cite{sun2006exact}}, w.p.~1, we have that {$m_{\pi, t}(H, i, j) = \nu_t(H, i) \pi(H, j)$} is the total mass of workers of true type $i$ with history $H$ who are assigned to jobs of type $j$ at time $t$.  

The system dynamics are as follows. Since the system starts empty before $t=0$, we have
\begin{align}
\nu_0(H,i) = 0 \quad \textup{for all non-empty histories $H \in \cH \backslash \{\phi\}$ and all $i$}.
\label{eq:dynamics-init}
\end{align}
Worker arrivals and type regenerations lead to
\begin{align}
  \nu_{t}(\phi, i)  = {\rho}_i/N \qquad \textup{for all } t\geq 0 \dots
 \, . \label{eq:dynamics1}
\end{align}
For all $i$, $j$, $t\geq 1$, and histories $H \in \cH$ of length $\leq N-2$, we have
\begin{align}
\nu_{t}( (H, (j,1)), i ) & = \nu_{t-1}(H, i)  \pi(H, j )  A(i,j)\, ; \label{eq:dynamics2}\\
\nu_{t}( (H, (j,0)), i ) & = \nu_{t-1}(H, i)  \pi(H, j )  (1 - A(i,j))\, . \label{eq:dynamics3}
\end{align}
Since $\pi$ and $\rho$ are time independent,
the dynamics \eqref{eq:dynamics-init}--\eqref{eq:dynamics3} yield a unique steady-state after $N-1$ time periods; i.e., $\nu_s = \nu_t$ for all $s, t \geq N-1$ w.p. 1.
Abusing notation, we use $\nu_\pi$ to denote the {\em steady-state} system profile induced by the WHO policy $\pi$.
The steady-state can be inductively computed over histories of increasing length: for the empty history $\phi$ of length zero, we have
\begin{equation}
\nu_\pi(\phi, i)  = \rho_i/N\, . \label{eq:recursion1}
\end{equation}
Then for any history $H$ of length $0, \ldots, N-2$, we have
\begin{align}
\nu_\pi( (H, (j,1)), i ) & = \nu_\pi(H, i)  \pi(H, j )  A(i,j)\, ;  \label{eq:recursion2}\\
\nu_\pi( (H, (j,0)), i ) & = \nu_\pi(H, i)  \pi(H, j )  (1 - A(i,j))\, .   \label{eq:recursion3}
\end{align}

{\bf Routing matrix of a WHO policy $\pi$.}   In steady-state, $\pi$ induces a time-independent fraction $x_\pi(i,j)$ of the mass of workers of true type $i$ that are assigned to type $j$ jobs in each time step. 
In particular,
\begin{align}
x_{\pi}(i,j) \triangleq \frac{\sum_{H\in \cH} \nu_\pi(H,i)\pi(H,j)}{\sum_{H\in \cH} \nu_\pi(H,i)} = \frac{\sum_{H\in \cH}\nu_\pi(H,i)\pi(H,j)}{\rho_i}.
\label{eq:x_pi}
\end{align}

Let
\begin{align}
\cX^N\triangleq \left\{\, x_{\pi}:\pi\in \Pi^N\,\right\} \subseteq [0,1]^{|\cC|\times|\cS|}
\label{eq:cX-N-WHO}
\end{align}
be the set of (steady-state) routing matrices  achievable (when each worker does $N$ jobs) by WHO policies, i.e., for $\pi \in \Pi^N$.  Again, we emphasize that capacity constraints are ignored in the definition of $\cX^N$.  In Appendix~\ref{apx:polytope}, we show the following.
\begin{proposition}\label{prop:learningset}
The set $\cX^N$ is a convex polytope.
\end{proposition}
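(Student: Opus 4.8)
The plan is to reparametrize the family of worker‑history‑only (WHO) policies so that the routing map becomes \emph{linear}, and then to identify $\cX^N$ as the image of an explicit polytope under a linear map. The reason a direct argument does not work is that, writing $\pi(H_k,j)$ for the policy probabilities, the steady‑state mass $\nu_\pi(H_k,i)$ produced by the recursion \eqref{eq:recursion1}--\eqref{eq:recursion3} is a \emph{product} of the $\pi$‑values along the path to $H_k$ (times a factor that depends only on $A$ and on the outcomes recorded in $H_k$). Consequently $x_\pi$ is a multilinear function of $\pi$ of degree up to $N$, not a linear one, and the image of the product of simplices $\Pi^N$ under such a map need not be convex a priori.

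First I would record the closed form $\nu_\pi(H_k,i)=\hat\rho(i)\,\tilde w_\pi(H_k)\,\lambda_i(H_k)$, where $\tilde w_\pi(H_k)=\prod_{m=0}^{k-1}\pi(H_m,j_m)$ is the product of policy probabilities over the prefixes $H_0,\dots,H_{k-1}$ of $H_k$, and $\lambda_i(H_k)$ is the likelihood under type $i$ of the $0/1$ outcomes appearing in $H_k$. The crucial point is that $\lambda_i(H_k)$ does \emph{not} depend on $\pi$; the formula follows by induction on $k$ directly from \eqref{eq:recursion1}--\eqref{eq:recursion3}. Introducing, for $0\le k\le N-1$, all length‑$k$ histories $H_k$, and $j\in\cS$, the variables $w_\pi(H_k,j)=\tilde w_\pi(H_k)\,\pi(H_k,j)$, the defining formula $x_\pi(i,j)=\big(\sum_H \nu_\pi(H,i)\pi(H,j)\big)/\rho(i)$ together with $\rho(i)=\hat\rho(i)N$ becomes
$$x_\pi(i,j)\;=\;\frac1N\sum_{k=0}^{N-1}\sum_{H_k}\lambda_i(H_k)\,w_\pi(H_k,j),$$
which is a single fixed linear map $L\colon (w(H_k,j))_{H_k,j}\mapsto (x(i,j))_{i,j}$, the same for every $\pi$.

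Next I would characterize the set of achievable flow vectors $\mathcal W^N:=\{w_\pi:\pi\in\Pi^N\}$ as the polytope
$$\mathcal W^N=\Big\{\,w\ge 0\;:\;\textstyle\sum_{j\in\cS} w(\phi,j)=1,\ \ \sum_{j\in\cS} w\big((H_k,(j',x')),j\big)=w(H_k,j')\ \ \forall\,k\le N-2,\ H_k,\ j'\in\cS,\ x'\in\{0,1\}\,\Big\}.$$
The inclusion ``$\subseteq$'' is the computation $\sum_j w_\pi\big((H_k,(j',x')),j\big)=\tilde w_\pi\big((H_k,(j',x'))\big)=\tilde w_\pi(H_k)\pi(H_k,j')=w_\pi(H_k,j')$, together with $\sum_j w_\pi(\phi,j)=1$. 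For ``$\supseteq$'', given such a $w$ I would reconstruct a WHO policy by setting $\pi(H_k,j)=w(H_k,j)\big/\sum_{j'}w(H_k,j')$ whenever that denominator is positive, and choosing $\pi(H_k,\cdot)$ arbitrarily (say uniform) otherwise; the conservation equations force $\sum_{j'}w(H_k,j')$ to equal the value assigned to the parent edge, so by induction this $\pi$ reproduces $w$ on every history reached with positive probability, while on the remaining histories both $w$ and the realized $w_\pi$ vanish. Boundedness of $\mathcal W^N$ is immediate ($0\le w(H_k,j)\le 1$ from $w\ge 0$ and the conservation equations), so $\mathcal W^N$ is a polytope. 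Finally $\cX^N=L(\mathcal W^N)$ is the image of a polytope under a linear map, hence itself a convex polytope, which is the claim.

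The only step needing genuine care is the ``$\supseteq$'' direction, namely verifying that the reconstructed $\pi$ really induces the prescribed $w$, including on zero‑mass histories; but this is routine bookkeeping. All the conceptual content lies in the reparametrization $\pi\rightsquigarrow w_\pi$ that linearizes the otherwise high‑degree routing map.
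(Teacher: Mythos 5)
Your proof is correct, but it takes a genuinely different route from the paper's. The paper argues by induction on $N$: writing $\overline{\cX}^{N}=\{Nx:x\in\cX^N\}$, it decomposes the $(N{+}1)$-job problem into the first job plus a continuation, and exhibits $\overline{\cX}^{(N+1)}$ as a convex combination over $j\in\cS$ of sets $\mathcal{J}(j)$, each obtained from two copies of $\overline{\cX}^{N}$ by fixed diagonal scalings (by $A(\cdot,j)$ and $1-A(\cdot,j)$) and an affine shift; since these operations preserve polytopes, the induction closes. You instead linearize globally via the occupation-measure (``sequence-form'') reparametrization $w_\pi(H_k,j)=\tilde w_\pi(H_k)\pi(H_k,j)$, characterize the achievable $w$ exactly as an explicit flow polytope $\cW^N$ cut out by nonnegativity and conservation constraints on the history tree, and observe that $x_\pi$ is the image of $w_\pi$ under a fixed linear map because the outcome likelihoods $\lambda_i(H_k)$ factor out of $\nu_\pi$ independently of $\pi$. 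Your key verifications are all sound: the product form of $\nu_\pi$, the conservation identity $\sum_j w_\pi((H_k,(j',x')),j)=w_\pi(H_k,j')$, and the reconstruction of $\pi$ from $w$ with the zero-mass histories handled correctly (the invariant $\tilde w_\pi(H_k)=\sum_j w(H_k,j)$ propagates down the tree in both the positive and the degenerate case). The only blemish is a harmless indexing slip ($\tilde w_\pi(H_k)=\prod_{m=0}^{k-1}\pi(H_m,j_{m+1})$, since the job taken from the length-$m$ prefix is the $(m{+}1)$-th job). What your approach buys is an explicit $H$-description of a polytope whose linear image is $\cX^N$, which makes the convexity transparent in one step and is the standard device for turning such policy-optimization problems into linear programs; what the paper's recursion buys is a compact self-similar description of $\overline{\cX}^{N}$ that avoids introducing the exponentially many flow variables explicitly (though both descriptions are of exponential size in $N$).
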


{\bf Steady-state rate of payoff generation of a WHO policy $\pi$}. Recall the $T$-period average payoff generation rate defined in \eqref{eq:V_T}.  Since a WHO policy is in steady-state for all $t \geq N-1$, it follows that for such a policy the limit $\lim_{T \to \infty} V_T$ exists and is equal to the following {\em steady-state rate of payoff generation} $W^N(\pi)$:
\begin{equation}
\label{eq:payoffrate}
W^N(\pi) \triangleq \sum_{i \in \cC}  \rho_i \sum_{j\in \cS} x_\pi(i,j) A(i,j)\, .
\end{equation}
W.p.~1, this is the payoff generated per time step in steady-state by the policy $\pi$ across the entire population of jobs and workers, since $x_\pi(i,j)$ is the fraction of workers of true type $i$ matched to jobs of type $j$, and $A(i,j)$ is the fraction of these matches that generate a unit reward.  
In the sequel, our goal will be to maximize this rate of payoff generation.


{\bf Satisfying capacity constraints.}  We now return to enforcing the capacity constraints, i.e., $\mu_j < \infty$ for $j \neq \kappa$.  In our analysis, we restrict attention to {\em WHO policies that satisfy capacity constraints}.  Given the above definitions, this is straightforward: we restrict attention to WHO policies $\pi \in \Pi^N$ such that the steady-state routing matrix $x_\pi$ does not require any more than mass $\mu_j$ of jobs of type $j$:
\begin{equation}
\sum_{i\in \cC} \rho_ix_\pi(i,j) =\sum_{i\in \cC} \sum_{H\in \cH}\nu_\pi(H,i)\pi(H,j) \leq \mu_j\qquad \forall j \in \cS. \label{eq:capacity_constraints}
\end{equation}
Any WHO policy $\pi$ that satisfies this constraint will ensure that the capacity constraints are satisfied by the implied assignment in steady-state (i.e., for $t \geq N-1$) w.p.~1 by the ELLN of \cite{sun2006exact}.  In fact, because we assume the system starts empty, the following lemma establishes that for any such policy, w.p.~1, capacity constraints are \emph{never} violated. The lemma is proved in Appendix~\ref{app:transient-capacity}.

\begin{lemma}
Recall that the system starts empty, i.e., $\nu_0 (H,i) = 0$ for all $H \neq \phi, i \in \cI$. Suppose that the WHO policy $\pi$ satisfies \eqref{eq:capacity_constraints}.  Then at all times $t = 0, 1, \ldots,$ w.p. 1, the implied assignment satisfies the capacity constraint; i.e., at each time $t$ and for each job type $j$, the mass of workers matched to jobs of type $j$ does not exceed $\mu_j$:
\begin{align}
\sum_{i \in \cC} \sum_H \nu_t(H, i) \pi(H, j) \leq \mu_j\qquad \forall j \in \cS .
\label{eq:transient-capacity}
\end{align}
Furthermore, the system reaches steady-state at $t=N-1$ and remains in steady-state for all $t \geq N-1$.
\label{lem:capacities-and-steady-state}
\end{lemma}


{\bf Optimality of WHO policies.}  We now establish that the restriction to WHO policies 
is without loss of optimality.
Recall that $V_T(\pi)$ as defined in \eqref{eq:V_T} is the $T$-period average payoff achieved by the (arbitrary, possibly time-varying) measurable and capacity-feasible policy $\pi$.  Hence, the largest possible asymptotic rate of payoff accumulation under policy $\pi$ is $\bV(\pi) \triangleq \ls_{T \rightarrow \infty} \E[V_T(\pi)]$.  The next proposition establishes that a WHO policy exists that satisfies capacity constraints and yields a steady-state rate of payoff generation arbitrarily close to $\bV(\pi)$. The proof can be found in Appendix~\ref{app:who_sufficiency}.
\begin{proposition}
\label{prop:who_sufficiency}
Fix $A$, $\rho$, $\mu$, and $N$. Fix any feasible policy $\pi$ and any $\eps>0$. Then there is a worker-history-only (WHO) policy satisfying \eqref{eq:capacity_constraints} that achieves a steady-state rate of payoff accumulation exceeding $\bV(\pi) - \eps$.
\end{proposition}



\subsection{The optimization problem}
\label{sec:optimization}

We are now in position to state our optimization problem of interest.  We want to find a WHO policy $\pi$ that maximizes the steady-state rate of payoff generation $W^N(\pi)$, subject to the capacity constraints \eqref{eq:capacity_constraints}.  Formally, we have the following problem:
\begin{align}
\text{maximize}\ \ \  & W^N(\pi) \triangleq \sum_{i \in \cC}  \rho_i \sum_{j\in \cS} x_\pi(i,j) A(i,j) \label{prob:mixedbandit}\\
\text{subject to}\ \ \ & \sum_{i\in \cC} \rho_ix_\pi(i,j) \leq \mu_j\qquad \forall j \in \cS \, ; \label{eq:const}\\
& x_\pi \in \cX^N.\label{eq:feasibility}
\end{align}
Since $\cX^N$ (defined above in \eqref{eq:cX-N-WHO}) is a convex polytope, this is a linear program, albeit a complex one. The complexity of this problem is hidden in the complexity of the set $\cX^N$, which includes all possible routing matrices that can be obtained using WHO policies $\pi \in \Pi^N$.  The remainder of our paper is devoted to solving this problem and characterizing its value, by considering an asymptotic regime where $N \to \infty$.

\subsection{The benchmark: Full information setting}
\label{sec:preliminaries}

We evaluate our performance relative to a natural benchmark: the maximal rate of payoff generation possible if worker types are perfectly {\em known} upon arrival.  We will refer to this as the full information setting. In this case, {\em any} (row) stochastic matrix is feasible as a routing matrix.  Let $\cD$ denote the set of all row stochastic matrices:
\begin{equation}
\cD=\bigg\{x\in \mathbb{R}^{|\cC|\times |\cS|}: x(i,j)\geq 0;\,\,\sum_{j\in \cS}x(i,j)= 1\bigg\}\,.
\label{eq:cD}
\end{equation}
Note that any routing matrix in $\cD$ is implementable by a simple policy if worker types are perfectly known: given a desired routing matrix $x \in \cD$, 
at each time step $t$ we match a fraction $x(i,j)$ of workers of type $i$ to jobs of type $j$.

Thus, with known worker types, the maximal rate of payoff generation is given by the solution to the following optimization problem:
\begin{align}
\text{maximize}\ \ \  & \sum_{i \in \cC}  \rho_i \sum_{j\in \cS} x(i,j) A(i,j) \label{eq:opt1}\\
\text{subject to}\ \ \ & \sum_{i\in \cC} \rho_ix(i,j) \leq \mu_j\qquad \forall j \in \cS \, ; \label{eq:opt2}\\
& x \in \cD.\label{eq:opt3}
\end{align}
We let $V^*$ denote the maximal value of the preceding optimization problem, and let $x^*$ denote the solution (breaking ties arbitrarily). We further use $\cJfull$ to denote the set of fully utilized job types
\begin{align}
\cJfull \triangleq \{j \in \cJ: \sum_{i\in\cC}\rho_ix^*(i,j)=\mu_j\}\, .
\label{eq:Jfull}
\end{align}
This linear program is a special case of the ``static planning problem'' that arises frequently in the operations literature (see, e.g., \cite{ata2005heavy}).  The problem can also be viewed as a version of the assignment problem due to Shapley and Shubik \cite{shapley1971assignment}, in which the resources are divisible. We denote the 
shadow prices associated with the capacity constraints \eqref{eq:opt2} by\footnote{Formally, the shadow prices $p^*$ are the values of the corresponding dual variables at an optimum of dual linear program \eqref{eq:dual1}\textendash\eqref{eq:dual2} stated in the appendix.}  $p^* = (p^*_j)_{j\in\cJ}$.
We prove the following fact about these prices in Appendix~\ref{apx:uniqueprices}.
\begin{proposition}
\label{prop:uniqueness_of_prices}
Under the generalized imbalance condition \eqref{eqn:genimb}, the job shadow prices $p^*$ are uniquely determined.
\end{proposition}
As we shall see, these uniquely defined prices $p^*$ will be key to our solution to the problem.

\subsection{Regret}

We evaluate the performance of a given policy in terms of its {\em regret} relative to $V^*$.  In particular, given $N$ and a WHO policy $\pi$ satisfying \eqref{eq:capacity_constraints}, we define the regret of $\pi$ as $V^* - W^N(\pi)$.

We focus on the asymptotic regime where $N \to \infty$, and try to find policies that have ``small'' regret in this regime. 
This asymptotic regime provides tractability, allowing us to identify structural aspects of policies that perform well. In particular, we focus on developing policies that achieve a nearly optimal {\em rate} 
at which the regret $V^* - W^N(\pi_N)$ approaches zero. 


\subsection{Summary}

We summarize our model as follows.
\begin{itemize}
\setlength{\itemsep}{3pt}
  \item The platform chooses a matching policy. In particular, without loss of optimality, it chooses a WHO policy $\pi$.
\item The policy $\pi$ induces a steady-state system profile $\nu_\pi$ and associated steady-state routing matrix $x_\pi$; i.e., 
    at all times $t \geq N-1$, the system profile is $\nu_\pi$ and the mass of workers of type $i$ matched to jobs of type $j$ is $x_\pi(i,j)$.
\item The steady-state routing matrix $x_\pi$ induces a steady-state rate of payoff generation $W^N(\pi)$.
\item The regret of the policy $\pi$ is $V^* - W^N(\pi)$.  We focus on finding WHO policies that yield low regret.
\end{itemize}

\section{Decentralized Explore-then-Exploit for Matching (DEEM): A payoff-maximizing policy}
\label{sec:DEEM}
In this section we present our proposed policy
{\em DEEM: Decentralized Explore-then-Exploit for Matching}.  Our main result (Theorem \ref{thm:mainresult}) will quantify the regret performance of DEEM and characterize it as nearly optimal. 
DEEM is formally defined in Figure~\ref{fig:def-DEEM} with supporting definitions in Figure~\ref{fig:def-alphai-ystar}. To assist the reader, we provide an informal schematic of DEEM in Figure~\ref{fig:DEEM-cartoon}.

\begin{figure}
\fbox{{\footnotesize\begin{minipage}{\textwidth}
\begin{center}{\vspace{0.1in}\normalsize \bf DEEM: Decentralized Explore-then-Exploit for Matching}\vspace{0.1in} \end{center}
\textbf{Input parameters:} $\cI$, $\cJ$, $A$, $\rho$, $\mu$, $N$ {such that the generalized imbalance condition \eqref{eqn:genimb} holds}.\\
{\bf Pre-compute:}\begin{itemize}
\item The $\cJ$-vector $p^*$ of shadow prices for the capacity constraint \eqref{eq:opt2} in the problem with known types \eqref{eq:opt1}\textendash\eqref{eq:opt3}. (Recall from Proposition~\ref{prop:uniqueness_of_prices} that under the generalized imbalance condition \eqref{eqn:genimb} the prices $p^*$ are uniquely determined.)
\item For each $i \in \cI$, the set of worker types
\begin{align}
  \Str(i) \triangleq \{ i' : \cS(i) \setminus \cS(i') \neq \emptyset \} \quad \textup{where} \ \cS(i)\triangleq \arg\max_{j\in \cS} A(i,j)-p^*_j\, .
  \label{eq:Str-i}
\end{align}
\item The distribution $\alpha(i) = \alpha(i,\cI, \cJ, A, p^*, \Str(i) )$ over $\cJ$, for all $i\in \cI$. 
    Defined in Figure \ref{fig:def-alphai-ystar}.
\item $(|\cI| \times |\cJ|)$-right stochastic matrix $y^* = y^*(\cI, \cJ, A, \rho, \mu, N) \in \cD$. 
    Defined in Figure \ref{fig:def-alphai-ystar}.
\end{itemize}
\medskip
\hrule
\medskip
\begin{algorithmic}[1]
\LineComment{Main Routine}
\Procedure{DEEM}{} \Comment{Acts independently on each worker, over her lifetime, from arrival to departure}
\LineComment{Initialization:}
\State $\lambda(i) \gets \rho_i$ for all $i\in \cI$ \Comment{The un-normalized posterior probabilities; initialized to the prior}
\State $\MAP \gets \argmax_{i\in\cI} \lambda(i)$ \Comment{Initialization of the MAP estimate}
\State $Label \gets \phi$ \Comment{Worker label; initially unassigned, denoted by $\phi$}
\State $k \gets 0$ \Comment{Number of time steps the worker has been in the system = Length of the worker's history}
\medskip

\LineComment{Explore phase:}
\While{$Label = \phi$ and $k<N$}
	\State $k \gets k+1$ \Comment{At the next time step}
	\State Assign job type $j_k \sim$  \Call{Explore}{$N$, $\lambda$, \MAP, $\alpha(\MAP)$}
	\State Observe reward $r_k$
	\State $\lambda(i) \gets \lambda(i) \times (A(i,j_k)\mathbf{1}_{\{r_k=1\}}+(1-A(i,j_k))\mathbf{1}_{\{r_k=0\}})$, for all $i \in \cI$
	\State $\MAP \gets \argmax_{i\in\cI} \lambda(i)$
	 \If{$\min_{i\neq \MAP}\frac{\lambda(\MAP)}{\lambda(i)} \geq \log N$ and $\min_{i\in \Str(\MAP)}\frac{\lambda(\MAP)}{\lambda(i)} \geq N$} \label{pc:label-condition}\Comment{If Confirmation is complete}
		\State $Label \gets \MAP$  \Comment{Worker label assigned. Will end Explore phase.}
		\EndIf 
\EndWhile
\medskip

\LineComment{Exploit phase:}
\While{$k<N$}
 \State $k \gets k+1$	\Comment{At the next time step}
 	\State Assign job type $j_k \sim$ \Call{Exploit}{$Label$, $y^*$}
 \EndWhile
 \EndProcedure
 \medskip
 \hrule
 \medskip
 \LineComment{Functions}
\Function{Explore}{$N$, $\lambda$, \MAP, $\alpha_{\tiny \MAP}$}
\If{$\min_{i\neq \MAP}\frac{\lambda(\MAP)}{\lambda(i)} <\log N$} \Comment{If MAP estimate is noisy}
		\State $dist \gets \textup{Uniform}(\cJ)$ \Comment{{\bf Guessing}}
\Else \Comment{MAP estimate is somewhat confident}
		\State $dist \gets \alpha_{\tiny \MAP}$ \label{pc:confirmation} \Comment{{\bf Confirmation}}
\EndIf
\State {\bf return} $dist$
\EndFunction

\medskip

\Function{Exploit}{$Label$, $y^*$}
\State $dist \gets y^*(Label, \cdot)$ \Comment{Sample from distribution given by the $Label$-th row of $y^*$}\label{pc:exploit-sample}
	\State {\bf return} $dist$
\EndFunction
\end{algorithmic}
\end{minipage}}}
\linespread{1}
\caption{Definition of DEEM.}  
\label{fig:def-DEEM}
\end{figure}

\edef\myindent{\the\parindent}
\begin{figure}[htbp]
\fbox{{\footnotesize\begin{minipage}{\textwidth}\setlength{\parindent}{\myindent}
\begin{center} {\vspace{0.1in}\normalsize \bf Definitions \vspace{0.1in}} \end{center}

\noindent {\bf Definition of} $\alpha(i,\cI, \cJ, A, p^*, \Str(i))\in \Delta(\cJ).$

 Let $ U(i) \triangleq \max_{j \in \cS} A(i,j) - p^*_j $ be the maximal externality-adjusted payoff of worker type $i$. Define the set
\begin{align}\label{eqn:alpha}
\mathcal{A}(i) &\triangleq \argmin_{\alpha\in \Delta(\cS)}\frac{\sum_{j\in \cS} \alpha_j\big(U(i)-[A(i,j)-p^*_j]\big)}{\min_{i'\in \Str(i)}\sum_{j\in \cS}\alpha_j\KL(i,i'|j)},
\end{align}
where $\Delta(\cS)$ is the set of distributions over $\cS$ and $\KL(i,i'|j) \triangleq A(i,j) \log \frac{A(i,j)}{A(i',j)} + (1- A(i,j) ) \log \frac{1-A(i,j)}{1-A(i',j)}\,$ is the Kullback--Leibler divergence between $\Bern(A(i,j))$ and $\Bern(A(i',j))$.
Then choose $\alpha(i)$ as per
\begin{equation}
\alpha(i) \in \argmax_{\alpha'\in\mathcal{A}(i)}\min_{i'\in \Str(i)}\sum_{j\in \cS}\alpha'_j\KL(i,i'|j),
\label{def:alphai-fastest-learning}
\end{equation}
breaking ties arbitrarily.\\

\noindent{\bf Definition of} $y^*(\cI, \cJ, A, \rho, \mu, N) \in \cD$.

The Explore phase, in particular the function \textproc{Explore()} and the confirmation policy $\alpha(i)$ for all $i\in \cI$, defines the following masses, which are time invariant for all $t \geq N$:
\begin{itemize}
\item $\mexp(i,j)\triangleq$ The mass of type $i$ workers who are in the Explore phase and get assigned to type-$j$ jobs in a given time step.
\item $l(i,i')\triangleq$ The mass of type $i$ workers in the system who were labeled as being of type $i'$ at the end of the Explore phase, and are now in the Exploit phase.
\end{itemize}
Then, based on a solution $x^*$ to \eqref{eq:opt1}\textendash\eqref{eq:opt3} and $\cJfull$ defined in \eqref{eq:Jfull}, we choose a routing matrix $y^*$ in the Exploit phase that satisfies:
\begin{align}
& y(i,j) = 0 & \forall i\in \cI, j \in \cJ \textup{ s.t. } x^*(i,j)= 0 \, ;\label{eq:y-support}\\
& m(i,j) = \mexp(i,j) + \sum_{i' \in \cI}l(i,i') y(i',j) & \forall i \in \cI, j \in \cJ \, ;\label{eq:y-m}\\
& \sum_{i\in \cC} m(i,j) = \mu_j& \forall j \in \cJfull \, ; \label{eq:y-capfull}\\
& \sum_{i\in \cC} m(i,j) < \mu_j& \forall j \in \cJ\backslash \cJfull \, ; \label{eq:y-capslack}\\
& y \in \cD. &\label{eq:y-stochastic}
\end{align}
Since the generalized imbalance condition holds, using Proposition~\ref{prop:construct},  for any $N$ large enough, there exists a feasible routing matrix $y^*$ such that \eqref{eq:y-support}\textendash\eqref{eq:y-stochastic} hold.\footnote{We ensure that our practical policies derived from DEEM (DEEM-discrete discussed in Section~\ref{sec:deemdiscrete}, and \DEEMplus, specified in Figure \ref{fig:def-DEEMplus}) are well defined for any value of $N$.}

\end{minipage}}}
\linespread{1}
\caption[]{Definitions of $\alpha(i)$ and $y^*$. 
In Appendix~\ref{apx:alpha}, we show that \eqref{eqn:alpha} can be expressed as a small linear program (with $|\cI|$ constraints and $|\cJ|$ variables). The quantity $m(i,j)$ given by \eqref{eq:y-m} represents the mass of type $i$ workers that is matched to type $j$ jobs in steady state.}\label{fig:def-alphai-ystar}
\end{figure}

\begin{figure}[ht]
    \centering
    \includegraphics[width=0.9\textwidth]{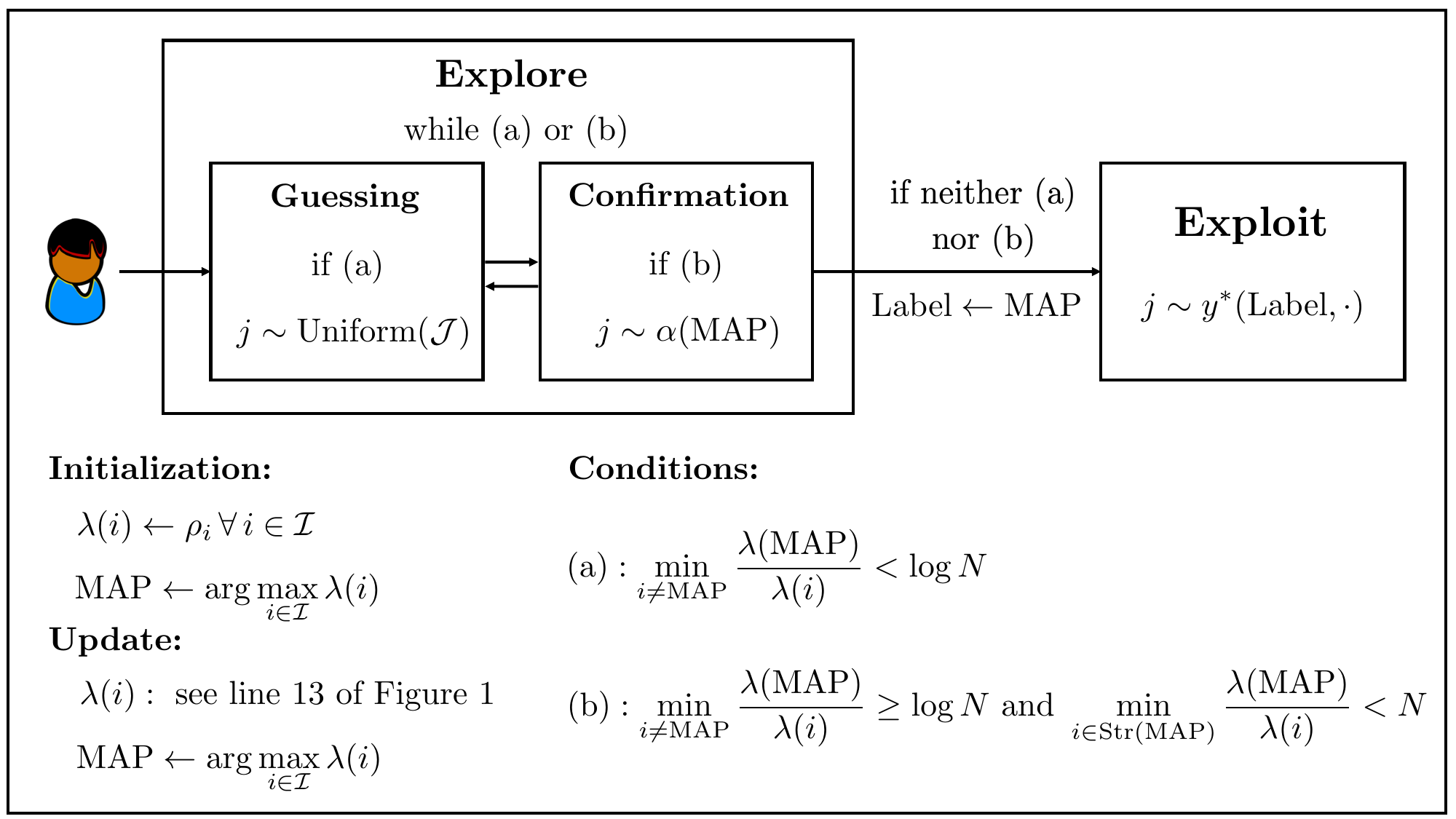}
    \caption{A schematic of DEEM at the individual worker level. $\lambda(i)$ for each $i\in\cI$ is the un-normalized posterior probability of the type being $i$ before each job assignment. It is initialized with the prior $\rho_i$, and is updated after each job assignment based on the observed reward (line 13 of Figure~\ref{fig:def-DEEM}). 
    $\alpha$ and $y^*$ are defined in Figure~\ref{fig:def-alphai-ystar}, while $\Str(i)$ is defined in \eqref{eq:Str-i}.} 
    \label{fig:DEEM-cartoon}
  \end{figure}

DEEM operates individually on every arriving worker; in fact DEEM is a WHO policy (WHO policies were defined in Section~\ref{subsec:WHO}). As shown in Figure \ref{fig:DEEM-cartoon}, DEEM is divided into two phases: Explore and Exploit. In the Explore phase the policy efficiently learns the type of the worker with appropriate confidence and generates a type label. In the Exploit phase, the algorithm focuses on payoff maximization for the given type label in a manner that accounts for system-level capacity constraints.

The Explore phase involves two possible modes of operation: Guessing and Confirmation, and starts in Guessing mode. The Guessing mode is in effect when there is not enough confidence in the maximum a posteriori (MAP) estimate of the worker type based on the observations so far (condition (a) in Figure~\ref{fig:DEEM-cartoon}). It assigns the worker to job types uniformly at random and aims to build confidence in the MAP estimate. The Confirmation mode is in effect when there is sufficient confidence in the MAP estimate to merit focusing on confirming that it is indeed the true type, but not enough to actually start exploiting (condition (b) in Figure~\ref{fig:DEEM-cartoon}). In this mode, DEEM boosts the confidence that the guessed type is correct, trying to rule out types in the carefully defined set $\Str(\textup{MAP})$ (defined in Figure \ref{fig:def-DEEM}, Eq.~\eqref{eq:Str-i}) that must be distinguished to facilitate exploitation. It achieves this goal while minimizing the loss in payoff by sampling job types from an appropriate distribution $\alpha(\textup{MAP})$, defined in Figure~\ref{fig:def-alphai-ystar}. Once an appropriate confidence level is reached in the MAP estimate (condition (c) in Figure~\ref{fig:DEEM-cartoon}), the worker is labeled according to this estimate and the algorithm permanently enters the Exploit phase, in which the label is treated as the true type of the worker.


DEEM uses externality adjustments on payoffs to capture the effect of system-wide aggregate capacity constraints. These adjustments are achieved by using {\em shadow prices} $p^*$ for the capacity constraint \eqref{eq:opt2} in the  problem with known worker types.
Under these adjusted payoffs, a particular worker type may have multiple optimal job types. Appropriate tie-breaking across these types in the Exploit phase is necessary to satisfy the aggregate capacity constraints. This is achieved by employing a specifically designed routing matrix $y^*$, 
which is a perturbed version of the solution $x^*$ to the problem with known worker types \eqref{eq:opt1}\textendash\eqref{eq:opt3}.
The matrix $y^*$ is defined in Figure~\ref{fig:def-alphai-ystar}, and the following proposition   shows the existence of $y^*$ satisfying the conditions specified in the figure.
\begin{proposition}\label{prop:construct}
Suppose that the generalized imbalance condition is satisfied. %
Then, for any $N$ large enough, there exists a feasible routing matrix $y^*$ such that \eqref{eq:y-support}\textendash\eqref{eq:y-stochastic} hold.
\end{proposition}
\noindent The proof is in Appendix~\ref{apx:construct} and shows, moreover, that as $N \to \infty$, a vanishing fraction of workers are in the Explore phase, i.e., $\mexp(i,j)= o (1) \ \forall i \in \cI, j \in \cJ$, that a vanishing fraction of workers are mislabeled at the end of the Explore phase, i.e., $l(i,i')= o (1) \ \forall i \neq i'  \in \cI$, and that there is a small perturbation of $x^*$ that is a feasible solution to \eqref{eq:y-support}\textendash\eqref{eq:y-stochastic}, i.e., $y^* = x^* +o(1)$.


In the next section, we use an example to illustrate the operation of DEEM.  Before we continue, we make three important remarks.

\begin{remark}[{\bf DEEM is a WHO policy}]
\label{rem:DEEM-is-WHO}
Observe that DEEM as defined in Figure \ref{fig:def-DEEM} is a WHO policy; in other words, DEEM defines a mapping from workers' histories to a distribution over job types. The input parameters are the model primitives, which are then used to pre-compute the derived quantities $p^*$, $(\Str(i))_{i \in \cI}$, $(\alpha(i))_{i \in \cI}$, and $y^*$; these are all functions of model primitives only, with no dependence on the current system state. For any individual worker, the control logic of \textproc{DEEM} relies only on the posterior $\lambda(\cdot)$, the current MAP estimate, and $Label$, all of which are functions of the history of the individual worker. Before $Label$ is set, job types are drawn using   \textproc{Explore()}, whereas after it is set, job types are drawn using \textproc{Exploit()}. These functions again construct a job type distribution based only on the history of the individual worker as captured by $\lambda(\cdot)$, MAP, and $Label$ (and model primitives and quantities derived from them).
\end{remark}

\begin{remark}[{\bf DEEM satisfies capacity constraints}]
Proposition \ref{prop:construct} shows that there is a feasible solution to the constraints \eqref{eq:y-support}\textendash\eqref{eq:y-stochastic}.  We use this solution $y^*$ as the routing matrix during exploitation. Since the capacity constraints are incorporated in \eqref{eq:y-capfull} and \eqref{eq:y-capslack}, it follows that DEEM does not run out of jobs of any type, in any time step.
\end{remark}

One possible concern could be that the definition of $y^*$ depends on DEEM and DEEM itself depends on $y^*$. In fact, there is no circularity in the definitions, as explained in the following remark. 
\begin{remark}
{Since DEEM is a WHO policy, and under DEEM each worker completes the Explore phase before entering the Exploit phase, the matrix $y^*$ satisfying \eqref{eq:y-support}\textendash\eqref{eq:y-stochastic} can be computed in an offline fashion using only aggregate statistics $\mexp(i,j)$ and $l(i,i')$ that depend on the Explore phase alone.  This matrix  $y^*$ is then only used to define the Exploit phase of DEEM, to ensure that capacity constraints are met.}
\end{remark}

\subsection{Key features of DEEM via an example}
\label{subsec:example}

In this section, we discuss the structure and the main features of DEEM by way of an example. Consider a simple setting in the context of a labor platform like Upwork, in which worker skills differ along two dimensions: Programming and Design. Further, suppose the skill level is binary in each dimension: each worker either has that skill or doesn't. Suppose that the worker population is composed of three worker types (in order): Programmers (who know only Programming), Designers (who know only Design), and All-rounders (who know both Programming and Design). Finally, there are three job types (in order) with the corresponding subsets of relevant skills: Programming (which depend on only the Programming skill), Design (which depend on only the Design skill) and Mixed (which depend on both skills). Let the payoff matrix (consistent with the subsets of relevant skills) be
\begin{align}
A =
  \begin{blockarray}{*{3}{c}  l}
    \begin{block}{*{3}{>{$\footnotesize}c<{$}} l}
      Programming & Design & Mixed & \\
    \end{block}
    \begin{block}{[*{3}{c}]>{$\footnotesize}l<{$}}
      0.5 & 0.2 & 0.1   \bigstrut[t] \bigstrut[t]& Programmers \\
      0.3 & 0.8 & 0.2  & Designers \\
      0.5 & 0.8 & 0.6  & All-rounders\\
    \end{block}
  \end{blockarray}
\end{align}
Let the arrival rates be $\rho = [\begin{matrix}0.4/1.9 & 0.6/1.9 & 0.9/1.9\end{matrix}]^T$ and the job type capacities be $\mu = [\begin{matrix}1/1.9 & 1/1.9 & 1/1.9\end{matrix}]^T$. In this example, the optimal solution to the benchmark problem \eqref{eq:opt1}\textendash\eqref{eq:opt3} with {\em known} types results in the following allocation of the masses of workers to jobs:
\begin{align}
\rho^T x^* = [\rho_ix^*(i,j)]_{i\in \cI, j \in \cJ} = \left [
\begin{matrix}
  0.4/1.9 & 0 & 0\\
  0 & 0.6/1.9 & 0\\
  0 & 0.4/1.9 & 0.5/1.9
\end{matrix} \right ].\label{benchmark}
\end{align}
There are three key features of DEEM, which we now discuss.
\begin{enumerate}

\item {\em Shadow prices to account for capacity constraints.} 
The intuition behind using $p^*$ for externality adjustment of payoffs under DEEM is that with large $N$, learning will occur quickly relative to the worker lifetime, and $p^*$ will approximate well the shadow prices even with unknown worker types.
At the high level, DEEM is a near-optimal policy for the unconstrained externality-adjusted bandit problem:
\begin{equation}
\textup{maximize}_{x_\pi \in \cX^N} \sum_{i \in \cC}  \rho_i \sum_{j\in \cS} x_\pi(i,j) ( A(i,j) - p^*_j ). \label{eq:optpricedlearn2}
\end{equation}
In our example, the shadow prices corresponding to the capacity constraints in the benchmark linear program under full information \eqref{eq:opt1}\textendash\eqref{eq:opt3} are\footnote{These shadow prices are consistent with the fact that all Design jobs are assigned (but this is not the case for other job types), and marginal All-rounders could generate $0.8-0.6=0.2$ more in payoffs per unit if there were more Design jobs available.} $p^* = [\begin{matrix}0 & 0.2 & 0\end{matrix}]^T$. 
DEEM makes job-assignment decisions based on  the externality-adjusted payoff matrix
\begin{align}
[A(i,j)-p^*_j]_{i\in \cI, j \in \cJ} = \left [
\begin{matrix}
  {\bf 0.5} & 0.0 & 0.1\\
  0.3 & {\bf 0.6} & 0.2\\
  0.5 & {\bf 0.6} & {\bf 0.6}
\end{matrix} \right ]\label{example}
\end{align}
instead of the original payoff matrix $A$. The sets $\cJ(i)$ defined in Figure~\ref{fig:def-DEEM}, Eq.~\eqref{eq:Str-i} capture the job types that maximize the externality-adjusted payoff for worker type $i$; {these maximal externality-adjusted payoffs for each worker type are shown in bold face in the above matrix.} Hence we have, $\cJ(\textup{Programmer})=\{\textup{Programming}\}$, $\cJ(\textup{Designer}) = \{\textup{Design}\}$ and $\cJ(\textup{All-rounder})=\{\textup{Design}, \textup{Mixed}\}$. The routing matrix $y^*$ in the Exploit phase of DEEM assigns labeled workers exclusively to these jobs during exploitation, just as $x^*$ in the benchmark solution exclusively assigns workers to these jobs (see \eqref{benchmark}).

But DEEM also needs to satisfy capacity constraints to be feasible. As implied by the following fact, in general the $\cJ(i)$ may not be singleton sets (as is the case for All-rounders) and appropriate \emph{tie-breaking} between multiple optimal job types for one or more worker types is \emph{necessary during exploitation} to avoid capacity violations.

\begin{fact}
\label{fact:tiebreaking_necessary}
Under the generalized imbalance condition, as long as there is at least one capacity constraint that is binding in some optimal solution $x^*$ to the benchmark problem \eqref{eq:opt1}\textendash\eqref{eq:opt3} with known types, there is at least one worker $i$ such that $x^*(i, \cdot)$ is supported on multiple job types. This implies that $\cJ(i)$ has more than one element.
\end{fact}

\begin{proof}{Proof of Fact~\ref{fact:tiebreaking_necessary}.}
Fix $x^*$ such that at least one capacity constraint binds; i.e., $\cJfull$ defined in \eqref{eq:Jfull} is non-empty. Consider worker types $\cI_{\textup{full}}^* = \{i: \exists j \in \cJfull \textup{ s.t. } x^*(i,j)>0\}$. By the generalized imbalance condition and the fact that all worker types are fully matched (recall that the ``empty'' job type $\kappa$, which serves as a proxy for remaining unmatched, is included in $\cJ$), there must be some  $i \in \cI_{\textup{full}}^*$  and $j' \notin \cJfull$  such that $x^*(i,j')>0$. (If not, $\sum_{i \in \cI_{\textup{full}}^*} \rho_i = \sum_{j \in \cJfull} \mu_j$, which contradicts the generalized imbalance condition.)\hfill $\Box$
\end{proof}

In DEEM, the exploitation-phase routing matrix $y^*$ is carefully constructed in \eqref{eq:y-support}\textendash\eqref{eq:y-stochastic} to achieve the proper tie-breaking (the feasibility of this construction is shown in Proposition~\ref{prop:construct}). Because of this construction, DEEM simultaneously satisfies a) aggregate capacity constraints  and b) complementary slackness conditions with respect to the prices $p^*$. These properties are key to showing the near-optimality of DEEM for the original capacity-constrained optimization problem \eqref{prob:mixedbandit}.

\item {\em Appropriate learning goals for the Explore phase.} DEEM's tolerance for labeling errors in the Explore phase depends on their impact on payoffs during exploitation.
For instance, in our example, suppose that at the end of the Explore phase, the algorithm mislabels a Programmer as an All-rounder.  This has a dire impact on payoffs: the Programmer is then assigned to Design or Mixed jobs in the Exploit phase (since $\cJ(\textup{All-rounder})=\{\textup{Design}, \textup{Mixed}\}$), neither of which is optimal for Programmers. These errors lead to a constant regret relative to the optimal externality-adjusted payoffs per unit mass of workers per time step.

In fact, in our example, every other kind of mislabeling is also similarly problematic, with one exception: If an All-rounder is labeled as a Designer, this is acceptable, since the worker will then be assigned Design jobs during exploitation, but $\textup{Design} \in \cJ(\textup{All-rounder})$; i.e., $\cJ(\textup{Designer})\subset \cJ(\textup{All-rounder})$. To ensure that capacity constraints do not pose a problem, we nevertheless ensure that even such ``acceptable'' mislabeling occurs for only $\Theta(1/\log N)$ fraction of workers.

This motivates our definition of the sets $\Str(i)$ for each worker type $i$: 
this is the set of types that $i$ must be distinguished from with high confidence, or ``strongly'' distinguished. 
In particular, if $\cJ(i)\setminus \cJ(i')\neq \phi$ then $i'\in\Str(i)$. The target error probability for types in $\Str(i)$ is chosen to be $1/N$ (see the second condition in line \ref{pc:label-condition} of Figure \ref{fig:def-DEEM}): if we choose a much larger target, we will incur a relatively large expected regret during exploitation due to misclassification; if we choose a smaller target, the Explore phase will be unnecessarily long, and we will thus incur a relatively large regret in the Explore phase. In our example $\Str(\textup{Programmer}) = \{\textup{Designer}, \textup{All-rounder}\}$, $\Str(\textup{Designer})=\{\textup{Programmer}\}$ and $\Str(\textup{All-rounder})=\{\textup{Programmer}, \textup{Designer}\}$. For every other type $i'\notin\Str(i)$, we have $\cJ(i)\subseteq\cJ(i')$, and we ``weakly'' distinguish $i$ from such $i'$, with a target misclassification probability of $1/\log N$ (see the first condition in line 15 of Figure~\ref{fig:def-DEEM}).

 \item {\em Minimizing regret during Confirmation:} After quickly obtaining a fairly confident estimate $i$ for the worker type using the Guessing mode, DEEM attempts to distinguish $i$ from all types in $\Str(i)$ with high confidence using the Confirmation mode. Regret relative to the largest possible externality-adjusted payoff $U(i) = \max_j (A(i,j) - p^*_j)$ for worker type $i$ may be inevitable in this process. 
     A key feature underlying the near-optimality of DEEM is that it tries to minimize the regret incurred during Confirmation.

For instance, the only job type that is optimal for Programmers; i.e., Programming, does not allow the policy to distinguish between Programmers and All-rounders. Thus if the guessed worker type is Programmer, then since All-rounder $\in \Str(\textup{Programmer})$, during Confirmation DEEM must assign the worker either Design or Mixed jobs in order to make sure she is not an All-rounder. Thus confirming the guess necessitates regret in the event that the guess is correct.

To minimize this regret, DEEM samples job types during Confirmation of worker type $i$ from the carefully chosen distribution $\alpha(i)$ defined in Figure~\ref{fig:def-alphai-ystar}. For a job type distribution $\alpha\in\Delta(\cJ)$, for workers of true type $i$, the smallest value of the log posterior odds, $\min_{i'\in \Str(i)}\log\lambda(i)/\lambda(i')$, increases at an expected rate of $\min_{i'\in \Str(i)}\sum_{j\in \cS}\alpha_j\KL(i,i'|j)$.
In order to confirm $i$ against worker types in $\Str(i)$ with a probability of error of $1/N$, the smallest value of log posterior odds needs to cross the threshold of $\log N$ (see second condition in line 15 in Figure~\ref{fig:def-DEEM}). The expected number of jobs needed to cross this threshold is $\log N/(\min_{i'\in \Str(i)}\sum_{j\in \cS}\alpha_j\KL(i,i'|j))$. Hence, the expected externality-adjusted regret incurred during Confirmation is 
$$ \frac{\log N}{N} \frac{\sum_{j\in \cS} \alpha_j\big(U(i)-[A(i,j)-p^*_j]\big)}{\min_{i'\in \Str(i)}\sum_{j\in \cS}\alpha_j \KL(i,i'|j)},$$
{where the factor $N$ in the denominator arises from the worker's lifetime, since regret is defined per period.}
DEEM choses a policy $\alpha(i)$ that minimizes this quantity, which in effect minimizes the ratio of the rate of regret accumulation and the rate of learning, or, informally, the ``regret per unit of learning.'' This minimal regret of $C(i)\frac{\log N}{N}$ to leading order
is inevitable per unit mass of workers of type $i$ for any optimal policy that solves $\eqref{eq:optpricedlearn2}$ for a large $N$, where
\begin{align}
C(i)&\triangleq \min_{\alpha\in\Delta(\cJ)} \frac{\sum_{j\in \cS} \alpha_j\big(U(i)-[A(i,j)-p^*_j]\big)}{\min_{i'\in \Str(i)}\sum_{j\in \cS}\alpha_j \KL(i,i'|j)}. \label{def:ci}
\end{align}

In Appendix~\ref{apx:alpha}, we show that the above optimization problem reduces to a linear program. In general, this problem can have several optimal solutions, denoted by the set $\mathcal{A}(i)$. Of these, DEEM chooses the one that gives the highest learning rate.\footnote{In Appendix \ref{apx:alpha}, we show that this choice is well defined, despite the fact that $\mathcal{A}(i)$ could be an open set.}

In our example, $U(\textup{Programmer}) = 0.5$, and thus
\begin{align}
&[U(\textup{Programmer}) - (A(\textup{Programmer},j)-p^*_j)]_{j\in\cJ} = [\begin{matrix}0 &0.5 &0.4\end{matrix}]^T;\\
&[\KL(\textup{Programmer}, \textup{Designer}|j)]_{j\in\cJ} = [\begin{matrix}D_{\KL}(0.5\Vert 0.3) & D_{\KL}(0.2\Vert 0.8) & D_{\KL}(0.1\Vert 0.2)\end{matrix}]^T;\\
&[\KL(\textup{Programmer}, \textup{All-rounder}|j)]_{j\in\cJ}= [\begin{matrix}0 & D_{\KL}(0.2\Vert 0.8) & D_{\KL}(0.1\Vert 0.6)\end{matrix}]^T\, .
\end{align}
In this case, one can show that $\mathcal{A}(\textup{Programmer}) = \{(1-\epsilon, \epsilon, 0): \epsilon\in (0,1]\}$ and $C(\textup{Programmer}) = 0.5/D_{\KL}(0.2\Vert 0.8)\approx 0.6011$. Of these solutions, DEEM picks $\alpha (\textup{Programmer}) = (0,1,0)$ as per \eqref{def:alphai-fastest-learning}, since this distribution is the quickest in confirming a Programmer while possessing the optimal regret per unit of learning (this $\alpha$ achieves the log posterior targets for $i' = $ Designer and for $i' = $ All-rounder in the same expected time).

On the other hand, Mixed jobs are optimal for All-rounders and further allow All-rounders to be distinguished from both Designers and Programmers (both these types are in $\Str(\textup{All-rounder})$). 
Thus no regret needs to be incurred while confirming an All-rounder. Recall that Design jobs are also optimal for All-rounders. It is straightforward to verify that $\mathcal{A}(\textup{All-rounder}) =  \{(0,1-\epsilon, \epsilon): \epsilon\in (0,1]\}$, and $C(\textup{All-rounder}) = 0$. The choice of $\alpha(\textup{All-rounder}) = (0,0,1)$ results in the highest learning rate as per \eqref{def:alphai-fastest-learning}. (One can similarly verify that $C(\textup{Designer}) = 0$ and $\alpha(\textup{Designer}) = (0,1,0)$.)

The distinction in $C(i)$ for $i = \textup{Programmer}$ and $i' = \textup{All-rounder}$ is fundamental, and motivates the following definition.
\begin{definition}
Consider a worker type $i$. Suppose that there exists another type $i' \in \cI \backslash \{i\}$ such that $A(i,j) = A(i',j)$ for all $j \in \cS(i)$, and $i'\in\Str(i) \, \Leftrightarrow \, \cJ(i) \not \subseteq \cJ(i')$,
where $\cJ(i)$ and $\Str(i)$ are as defined in \eqref{eq:Str-i}.
Then we say that the ordered pair $(i,i')$ is a {\em difficult type pair}.\footnote{A similar definition also appears in \cite{agrawal1989asymptotically}; the modification here is that the sets $\cS(i)$ are defined with respect to externality-adjusted payoffs to account for capacity constraints.}
\label{def:diff_pair}
\end{definition}
Note that $C(i) >0$ if and only if there is some other $i'$ such that $(i,i')$ is a difficult type pair. In this case, there is a non-trivial (asymptotic) trade-off between myopic payoffs and learning, and a regret of $\Omega(\log N)$ is necessary per unit mass of workers. In the above example, (Programmer, All-rounder) is a difficult type pair.

\end{enumerate}

\section{Main result}
\label{sec:mainresult}

Our main result is the following theorem.  In particular, we prove a lower bound on the regret of any policy, and show that DEEM (essentially) achieves this lower bound. For the result and discussion below, we denote $\textup{DEEM}_N$ to be the instantiation of DEEM for a given $N$.

\begin{theorem}
\label{thm:mainresult}
Fix $(\rho, \mu, A )$ such that (a) no two rows of $A$ are identical and (b) the generalized imbalance condition holds. 
Let $C\triangleq \sum_{i\in\cC}\rho_iC(i)$ for $C(i)$ as defined in \eqref{def:ci}. If $C>0$, we have:
\begin{enumerate}
\item (Lower bound) For any sequence of WHO policies $(\pi_N)_{N\in \mathbb{N}}$, indexed by worker lifetime $N$, that are feasible for \eqref{prob:mixedbandit}\textendash\eqref{eq:feasibility}, we have
\begin{equation}
\label{eq:lower}
\liminf_{N\rightarrow \infty} \frac{N}{\log N}(V^*-W^N(\pi_N)) \geq C \, .
\end{equation}
\item (Upper bound) There exists $N_0< \infty$ such that for all $N \geq N_0$, $\textup{DEEM}_N$ is feasible for \eqref{prob:mixedbandit}\textendash\eqref{eq:feasibility} with
\begin{equation}
\label{eq:upper}
\limsup_{N\rightarrow \infty} \frac{N}{\log N}(V^*-W^N(\textproc{DEEM}_N)) \leq C \, .
\end{equation}
\end{enumerate}
If instead $C= 0$, then there exists $N_0< \infty$ such that for all $N \geq N_0$, $\textup{DEEM}_N$ is feasible for \eqref{prob:mixedbandit}\textendash\eqref{eq:feasibility} with
\begin{equation}
\label{eq:upper}
\limsup_{N\rightarrow \infty} \frac{N}{\log\log N}(V^*-W^N(\textproc{DEEM}_N)) \leq K,\,
\end{equation}
where $K=K(\rho, \mu, A)\in [0,\infty)$ is some constant.
\end{theorem}
Recall that $C(i)$ and hence $C$ depend on the primitives of the problem; i.e., $(\rho, \mu, A)$, and that $C(i) >0$ if and only if there exists $i' \neq i$ such that $(i,i')$ is a difficult type pair. We immediately deduce that $C>0$ if and only if there is a difficult pair of worker types $(i,i')$.
\begin{remark}
The constant $C$ in Theorem~\ref{thm:mainresult} is strictly positive if and only if there exists at least one difficult pair of worker types $(i,i')$, 
i.e., a pair of distinct worker types $(i,i')$ such that $A(i,j) = A(i',j) \; \forall j \in \cJ(i)$ and $\cJ(i) \not \subseteq \cJ(i')$, where
 \begin{align}
   \cS(\tilde{i})\triangleq \arg\max_{j\in \cS} A(\tilde{i},j)-p^*_j  \qquad \forall \tilde{i} \in \cI \,
 \end{align}
and $p^*$ are the shadow prices for the capacity constraints \eqref{eq:opt2} in the problem with known worker types \eqref{eq:opt1}\textendash\eqref{eq:opt3}.
{Theorem~\ref{thm:mainresult} implies that the smallest achievable regret is $\Theta(\frac{\log N}{N})$ (``large'') if there is a difficult type pair, while one can achieve a regret of $\textup{O}\big(\frac{\log \log N}{N}\big )$ (``small'') if there is no difficult type pair.}
\label{rem:main-result-cases-from-primitives}
\end{remark}
Below in Section \ref{subsec:difficulty-with-many-skills} we show that in a setting where workers are heterogeneous along more than one skill dimension, and some job type allows one to distinguish only a subset of skills, ``many'' problem instances contain such difficult type pairs; i.e., there is often a non-trivial (asymptotic) trade-off between myopic payoffs and learning.

\subsection{Proof sketch}
\label{sec:proofsketch}
The proof of Theorem \ref{thm:mainresult} can be found in Appendix~\ref{apx:mainproof}. Here we present a sketch. The critical ingredient in the proof is the following relaxed optimization problem in which there are no capacity constraints, but capacity violations are charged non-negative prices $p^*$ from the optimization problem \eqref{eq:opt1}\textendash\eqref{eq:opt3} with known worker types.
\begin{eqnarray}
W^N_{p^*}&=&\max_{x \in \cX^N} \sum_{i\in\cC}\rho_i\sum_{j\in \cS} x(i,j) A(i,j)- \sum_{j\in \cS}p^*_j\Big [\sum_{i\in \cC}\rho_ix(i,j)-\mu_j\Big ].\label{approxpricedlearn}
\end{eqnarray}

\paragraph{Lower bound on regret.}
If $C>0$ (i.e., if there is at least one difficult pair of worker types; see~Section \ref{sec:mainresult}), there is a lower bound on the regret relative to $V^*$ under any policy in this problem. This result follows directly from Theorem 3.1 in \cite{agrawal1989asymptotically}:
$$\liminf_{N\rightarrow \infty} \frac{N}{\log N}(V^*-W^N_{p^*}) \geq C.$$
By a standard duality argument, we know that $W^N\leq W^N_{p^*}$, and hence this bound holds for $W^N$ as well, yielding the lower bound on regret in our original problem \eqref{prob:mixedbandit}. This is shown in Proposition~\ref{prop:lowerbound} in the Appendix~\ref{apx:mainproof}.

\paragraph{Upper bound on regret.}
There are two key steps in proving that $\textup{DEEM}_N$ is feasible for problem \eqref{prob:mixedbandit}--\eqref{eq:feasibility}, and that
$$\limsup_{N\rightarrow \infty} \frac{N}{\log N}(V^*-W^N(\textproc{DEEM}_N)) \leq C.$$
\begin{enumerate}
\item First, in Proposition~\ref{prop:upperbound}, we show that DEEM, with an arbitrary exploitation-phase routing matrix $y^*$ supported on $\cS(i)$ for each $i \in \cI$,  achieves near-optimal performance for the vanilla multi-armed bandit problem \eqref{approxpricedlearn}. Formally, if (with some abuse of notation) we let $W^N_{p^*}(\pi)$ denote the value attained by a policy $\pi$ in problem \eqref{approxpricedlearn}, i.e.,
$$W^N_{p^*}(\pi) = \sum_{i\in\cC}\rho_i\sum_{j\in \cS} x_{\pi}(i,j) A(i,j)- \sum_{j\in \cS}p^*_j\Big [\sum_{i\in \cC}\rho_ix_{\pi}(i,j)-\mu_j\Big ],$$
 then {we can show that}\footnote{\cite{agrawal1989asymptotically} proves a similar performance guarantee for a slightly different policy.}
$$\limsup_{N\rightarrow \infty} \frac{N}{\log N}(V^*-W^N_{p^*}(\textproc{DEEM}_N)) \leq C \qquad \textup{if }  C>0\, $$
and
$$\limsup_{N\rightarrow \infty} \frac{N}{\log \log N}(V^*-W^N_{p^*}( \textup{DEEM}_N)) \leq K \qquad \textup{if } C=0,$$
for some constant $K=K(\rho, \mu, A)\in [0,\infty)$. Thus, we have
$$\lim_{N\rightarrow \infty}\frac{N}{\log N}(W^N_{p^*}-W^N_{p^*}(\textup{DEEM}_N)) =0,$$
and hence, DEEM is near-optimal in problem \eqref{approxpricedlearn}. The proof of Proposition~\ref{prop:upperbound} utilizes two technical results presented as Lemma~\ref{lma:core} and Lemma~\ref{lma:driftreversal} in the appendix.

\item Finally, in Proposition~\ref{prop:mainupperbound}, we prove that if $\textproc{DEEM}_N$ uses an Exploit-phase routing matrix $y^*$ (that depends on $N$) that satisfies conditions \eqref{eq:y-support}\textendash\eqref{eq:y-stochastic} (recall that the existence of such a matrix for a large enough $N$ was shown in Proposition~\ref{prop:construct}), then $W^N(\textup{DEEM}_N) = W^N_{p^*}(\textup{DEEM}_N)$. In conjunction with Proposition~\ref{prop:upperbound}, this yields our upper bound on regret.  The proof of Proposition~\ref{prop:mainupperbound} crucially utilizes the fact that our specific choice of the exploitation-phase routing matrix $y^*$ ensures that the routing matrix $x_\tDEEM$, as defined in \eqref{eq:x_pi}, satisfies the following conditions:
\begin{enumerate}
\item (Complementary slackness) $\sum_{i\in \cC}\rho_ix_{\textup{\tiny DEEM}_N}(i,j)-\mu_j=0$ for all $j$ such that $p^*_j>0$, and
\item (Feasibility) $\sum_{i\in \cC}\rho_ix_{\tDEEM}(i,j)-\mu_j\leq 0$ for all other $j \in \cJ$.
\end{enumerate}
{This shows that $\textup{DEEM}_N$ with this choice of $y^*$ in the exploitation phase is feasible for problem \eqref{prob:mixedbandit}--\eqref{eq:feasibility} and the complementarity slackness property implies that $W^N(\textup{DEEM}_N) = W^N_{p^*}(\textup{DEEM}_N)$.

Note that this result crucially relies on Proposition~\ref{prop:construct}, which shows the existence of the routing matrix $y^*$ with the required properties. The basic idea behind this construction is as follows. At the end of the exploration phase of DEEM, the correct label of the worker is learned with a confidence of at least $(1-\mathrm{o}(1))$. This fact, coupled with the generalized imbalance condition (leading to flexibility in modifying $x^*$), is sufficient to ensure that an appropriate and feasible choice of $y^*=x^* + \mathrm{o}(1)$ will correct the deviations from $x^*$ in terms of the capacity utilizations of job types that were fully utilized under $x^*$, i.e., with $p^*_j> 0$ (these deviations arise because of the short exploration phase, and because of the infrequent cases in which exploitation is based on an incorrect worker label coming out of the exploration phase).}

\end{enumerate}

\subsection{Difficult type pairs occur frequently with multiple skill dimensions}
\label{subsec:difficulty-with-many-skills}

In Section \ref{subsec:example} we saw a simple and natural example with two skill dimensions that included a difficult type pair.
In this subsection, we show that difficult type pairs occur frequently when there is more than one skill dimension. 

Again taking Upwork as an example, one of the categories on this platform is ``Web Development'' and there are a number of relevant ``Skills,'' e.g., HTML development, WordPress, Python, JavaScript, Payment Gateway Integration, and Web Design. A typical web developer has a subset of these skills.  For each of these skill {\em dimensions} there may be distinct levels, e.g., missing/inexpert/expert. The platform may have a prior but learns about a developer's skill along a certain dimension chiefly by observing outcomes. Moreover, job listings are heterogeneous in terms of the relevant skills. For example, a project requiring the skills Web Design and WordPress would reveal some information about these skill dimensions, but not whether the developer is an expert in HTML development. In this subsection, we formalize a special case of our model with multiple skill dimensions, and show that difficult type pairs occur frequently (i.e., in many instances) in such a setup.

Suppose that there is a set $\Sk$ of skill dimensions, and a worker type $i$ is a tuple $i = (i_1, i_2, ..., i_{|\Sk|})$, where each $i_s \in \cI_s$ captures the skill level of the worker in dimension $s$. Here $\cI_s$ is a finite set for each $s$, and the set of worker types $\cI \subseteq \cI_1 \times \cI_2 \times \dots \times \cI_{|\Sk|}$. For each job type $j\in \cJ$, there is a non-empty subset of relevant skill dimensions $\Sk_j \subseteq \Sk$.  We then require the expected payoff $A(i,j)$ to be 
\emph{consistent} with $\Sk_j$; i.e., there must be a function $f_j: \prod_{s \in \Sk_j}\cI_s\rightarrow [0,1]$ such that $A(i, j)= f((i_s)_{s \in \Sk_j})$.

Without loss of generality, assume that $\cup_{j\in \cJ} \Sk_j = \Sk$, since if this were not true, we could safely ignore the (irrelevant) skills in $\Sk \backslash \big( \cup_{j\in \cJ} \Sk_j \big )$, given that they do not matter for any job type (multiple worker types that differ only in irrelevant skills can be collapsed into a single worker type). Then, the following assumption on $(\cI, \cJ, \Sk, (\Sk_j)_{j\in \cJ})$ is very plausible if there are at least two skill dimensions $|\Sk| \geq 2$.
\begin{assumption}
\label{ass:non-triviality-of-learning}
There is a job type $j \in \cJ$ and a pair of distinct worker types $i \in \cI$ and $i' \in \cI \backslash \{i\}$ such that
 (i) a strict subset $\Sk_j \subset \Sk$ of skills is relevant to job type $j$ and
  (ii) worker types $i$ and $i'$ differ only in skill dimensions in $\Sk \backslash \Sk_j$.
\end{assumption}
\noindent The assumption implies that job type $j$ cannot distinguish between worker types $i$ and $i'$.

We establish the following result, showing that under Assumption \ref{ass:non-triviality-of-learning}, many instances have difficult type pairs. In particular, difficult type pairs \emph{do not} occur only on a knife edge.



\begin{proposition}
\label{prop:difficulty-frequent-with-multiple-skills}
Fix $\cI$, $\cJ$, $\Sk$, and $(\Sk_j)_{j \in \cJ}$ such that Assumption \ref{ass:non-triviality-of-learning} holds and $|\cI| \geq 3$. Also fix capacity constraints $\mu = (\mu_j)_{j \in \cJ}$ such that\footnote{This technical requirement says that the total arrival rate of jobs should not be exactly equal to the total mass of workers present. We also assume that $|\cI|  \geq 3$. These assumptions help us get unique shadow prices $p^*$ in our construction, simplifying the analysis, though we expect this proposition can be stated and proved without these assumptions.} $\sum_{j \in \cJ} \mu_j \neq 1$. Consider the set of possible instances
 \begin{align}
  \cP= \{ (\rho, A) \in \Delta_{|\cI|} \times [0,1]^{|\cI| |\cJ|} : \textup{$A(\cdot, j)$ is consistent with $\Sk_j$ for all $j \in \cJ$} \} \, .
 \end{align}
 Note that the distribution over worker types $\rho$ has $|\cI|-1$ degrees of freedom, and for each $j \in \cJ$ the $j$-th column of the payoff matrix $A(\cdot, j)$ has $|\cI_{\Sk_j}|$ degrees of freedom, where
\begin{align}
\cI_{\Sk_j}  = \{(i_s)_{s \in \Sk_j}: \exists i' \in \cI \textup{ s.t. } i'_s = i_s \forall s \in \Sk_j \} \,
\end{align}
 is the set of distinct worker type classes that arise when worker types are projected onto $\Sk_j$. Then the subset of instances 
 \begin{align}
 \cPdiff = \{(\rho, A) \in \cP: \textup{there is a difficult type pair}\}
 \end{align}
 has full dimension $|\cI|-1 + \sum_{j \in \cJ}|\cI_{\Sk_j}|$, equal to the dimension of $\cP$.
\end{proposition}

Proposition~\ref{prop:difficulty-frequent-with-multiple-skills} is proved in Appendix~\ref{apx:multiskills}. Starting with Assumption \ref{ass:non-triviality-of-learning} and the corresponding $i \in \cI$, $i' \in \cI$, and $j \in \cJ$, the main idea is to construct an instance such that in any solution in the full information setting, worker type $i$ will be matched exclusively to job type $j$, whereas type $i'$ will be matched exclusively to jobs types other than $j$. We then show that this remains true in a neighborhood of the given set of parameters.

\section{Practical considerations and a heuristic}\label{sec:deemplus}

Although DEEM minimizes the leading-order term of regret as $N \to \infty$ in our continuum model, it is unclear how to obtain from it a policy that performs well in practice. The first concern is that it is defined in the context of the continuum model, whereas, in reality, we have finite arrivals of workers and jobs over time. The second concern is that, while DEEM is expected to perform well when $N$ is large, it may not perform well when $N$ is relatively small, which is the case in many practical settings. In Appendix~\ref{sec:deemdiscrete}, we present a fairly straightforward translation of DEEM to a discrete setting (with finite arrivals of workers and jobs at each time), which is a close analogue to our continuum setting.

Although this policy is simple and effectively addresses the first concern, it falls short of addressing the second concern: in fact we find that it performs well {\it only} when $N$ is large.
A main cause of this deficiency is that DEEM-discrete does not learn the right shadow prices for the constraints.
In fact, simulations tell us that for practical values of $N$, if we use the shadow prices $p^*$ for the capacity constraints from the static planning problem \eqref{eq:opt1}\textendash\eqref{eq:opt3} to define the Explore phase for each worker as in DEEM, then, in many instances, a subset of job types are fully (or substantially) utilized by workers in the Explore phase itself, and are thus unavailable for workers in the Exploit phase (see Remark~\ref{rem:deemdisc} in Appendix~\ref{sec:deemdiscrete}). In particular, {this suggests that} in many instances, $p^*$ constitutes a poor estimate of the shadow prices for finite $N$. Moreover, this is the case even for a large system with many workers and jobs.  Thus, an important practical issue is to learn the right shadow prices for the capacity constraints.

This is where we can leverage a key practical feature of most real platforms: jobs typically queue up rather than get assigned instantaneously.  In such systems, the ``inventory'' of jobs is dynamically evolving, and therefore the shadow prices should be responsive to the inventory of the system.  ``Backpressure'' \cite{tassiulas1990stability} is the  method {of choice} for learning shadow prices in a dynamic queueing context, by estimating shadow prices based on queue lengths. This prompts us to adopt it for the DEEM-inspired practical heuristic we present in this section. Besides learning the right shadow prices, backpressure has the added advantage of seamlessly handling stochasticity in the (finite) arrivals of workers and jobs.

In the remainder of this section, we present our heuristic derived from DEEM called \DEEMplus, which is (1) practically implementable in a market environment with discrete arrivals, and (2) is designed for good small $N$ performance. \DEEMplus is defined in Figures~\ref{fig:def-DEEMplus} and~\ref{fig:def-DEEMplus-cont}. Similar to DEEM, \DEEMplus is a decentralized algorithm that acts individually on each worker and hence is defined at the worker level. \DEEMplus
makes use of dynamic (queue-based) shadow prices $p^q$ as a key input at the worker level.
In our numerical simulations in Section~\ref{sec:simulations}, we will define an environment where pending jobs can queue up and specify a backpressure-like approach to computing queue-based prices. Notably, besides its use of queue-based shadow prices, \DEEMplus incorporates a few key modifications to the Explore and Exploit phases of DEEM that improve performance when $N$ is small. Below, we discuss the important features of \DEEMplus in detail.

\begin{figure}
\fbox{{\footnotesize\begin{minipage}{\textwidth}
\begin{center} {\vspace{0.1in}\normalsize \bf \DEEMplus: A practical WHO heuristic for finite $N$\vspace{0.1in}} \end{center}
\textbf{Input parameters:} $\cI$, $\cJ$, $A$, $\rho$, $N$, queue-based prices $p^q$.\\
{\bf Pre-compute:}\begin{itemize}
\item The sets $\cJ^q(i)$ defined in \eqref{eqn:opt-jobs-queue-prices} for each worker type $i\in\cI$.
\item The set of worker types $\Str^q(i)$ and $\Weak^q(i)$ defined in Definition~\ref{def-weak} (Section~\ref{subsec:finiteN}) for all $i \in \cI$.
\item The maximal externality-adjusted payoffs $U^q(i)$ for all $i\in\cI$ defined in \eqref{def-max-utility-queue-prices} and the maximal {per-job} mislabeling regrets $R(i,i')$ for all $i,\,i'\in \cI$ defined in \eqref{def-onestepreg-str}--\eqref{def-onestepreg-weak}.
\end{itemize}
\medskip
\hrule
\medskip
\begin{algorithmic}[1]
\LineComment{Main Routine}
\Procedure{\DEEMplus}{} \Comment{Acts independently on each worker, over her lifetime, from arrival to departure}

\medskip
\LineComment{Initialization:}
\State $\lambda(i) \gets \rho_i$ for all $i\in \cI$ \Comment{The un-normalized posterior probabilities; initialized to the prior}
\State $\MAP \gets \argmax_{i\in\cI} \lambda(i)$ \Comment{Initialization of the MAP estimate}
\State $Label \gets \emptyset$ \Comment{Worker label; initially unassigned, denoted by $\emptyset$}
\State $k \gets 0$ \Comment{{ Number of jobs} the worker has performed in the system}
\medskip

\LineComment{Explore phase:}
\While{$Label = \emptyset$ and $k<N$}
	\State Assign job type $j_k\sim$  \Call{Explore}{$N$, $\lambda$, \MAP} \Comment{{At the next job opportunity}}
	\State Observe payoff $r_k$
	\State $\lambda(i) \gets \lambda(i) \times (A(i,j_k)\mathbf{1}_{\{r_k=1\}}+(1-A(i,j_k))\mathbf{1}_{\{r_k=0\}})$, for all $i \in \cI$
	\State $\MAP \gets \argmax_{i\in\cI} \lambda(i)$
	\State \If{$\min_{i\in \Str^q(\MAP)}\frac{\lambda(\MAP)}{\lambda(i)R(\MAP,i)} \geq N$} \label{pc:label-condition}\Comment{if Confirmation is complete}
		\State $Label \gets \MAP$  \Comment{Worker label assigned. Will cause while loop to exit.}
		\EndIf 
	\State $k \gets k+1$
\EndWhile
\medskip

\LineComment{Exploit phase:}
\While{$k<N$}
 		\State Assign job type $j_k=$ \Call{Exploit}{$\lambda$} \Comment{{At the next job opportunity}}
 	\State $k \gets k+1$
 \EndWhile
 \EndProcedure
 \medskip
 \hrule
 \medskip
 \LineComment{Functions}
\Function{Explore}{$N$, $\lambda$, $\MAP$ }
\If{$\min_{i\in \Str^q(\MAP)\cup\Weak^q(\MAP)}\frac{\lambda(\MAP)}{\lambda(i)R(\MAP,i)} <\log N$} \Comment{if MAP estimate is noisy}
		\State $dist \gets\alpha_{\textup{guess}}(\lambda)$ \Comment{Guessing}
\Else \Comment{MAP estimate is somewhat confident}
		\State $dist \gets\alpha_{\textup{conf}}(\MAP,\,\lambda)$ \label{pc:confirmation} \Comment{Confirmation}
\EndIf
\State {\bf return} $dist$
\EndFunction

\medskip

\Function{Exploit}{$\lambda$}
\State $j^* = \argmax_{j\in \cJ}\sum_{i\in\cI} \lambda(i)\big[A(i,j)-p^q_j\big]$ \Comment{Greedy}\label{pc:exploit-sample}
	\State {\bf return} $j^*$
\EndFunction

\algstore{DEEMplus}
\end{algorithmic}
\end{minipage}}}
\linespread{1}
\caption{Definition of \DEEMplus. The prices $p^q$ depend on the queue lengths of the different job types, as discussed in Section~\ref{subsec:queueprices}, and formally defined in Appendix~\ref{sec:pd-control}.}
\label{fig:def-DEEMplus}
\end{figure}

\edef\myindent{\the\parindent}
\begin{figure}
\fbox{{\footnotesize\begin{minipage}{\textwidth}\setlength{\parindent}{\myindent}
\begin{algorithmic}[1]
\algrestore{DEEMplus}
\Function{$\alpha_{\textup{guess}}$}{$\lambda$}
\State  For each $j\in\cJ$, define \Comment{Thompson sampling}
\begin{align}
\alpha_j &= \bigg(\sum_{i\in\cI}\lambda(i)\frac{\ind_{j\in\cJ^q(i)}}{|\cJ^q(i)|}\bigg)\bigg/ \sum_{i\in\cI}\lambda(i) \nonumber\end{align}

\State {\bf return} $\alpha=(\alpha_j)_{j\in\cJ}$
\EndFunction
\medskip
\Function{$\alpha_{\textup{conf}}$}{\MAP, $\lambda$}
\State Define the set
\begin{align}
\mathcal{A} &= \argmin_{\alpha\in \Delta(\cS)}\bigg\{\bigg[\sum_{i\in\cI} \lambda(i)\sum_{j\in \cS} \alpha_j\big(U^q(i)-[A(i,j)-p^q_j]\big)\bigg]\bigg[\max_{i'\in\Str^q(\MAP)} \frac{\log N + \log R(\MAP,i')}{\sum_{j\in \cS}\alpha_j \KL(\MAP,i'|j)}\bigg]\bigg\}\nonumber
\end{align}
\State Choose any
\begin{align}
\alpha &\in \argmin_{\alpha'\in\mathcal{A}}\max_{i\in \Str^q(\MAP)}\frac{\log N + \log R(\MAP,i)}{\sum_{j\in \cS}\alpha'_j\KL(\MAP,i|j)}\nonumber
\end{align}
\State {\bf return} $\alpha$
\EndFunction
\end{algorithmic}
\end{minipage}}}
\linespread{1}
\caption{Definition of \DEEMplus (continued). }
\label{fig:def-DEEMplus-cont}
\end{figure}


\subsection{Dynamic queue-based shadow prices}\label{subsec:queueprices}

\DEEMplus uses dynamic shadow prices computed using job queue length information. 
Each new worker corresponds to a multi-armed bandit problem that gets ``adjusted'' under \DEEMplus by the instantaneous prices in the market when the worker arrives, to account for the externalities due to the presence of the capacity constraints. These prices remain fixed throughout the lifetime of the worker; this is analogous to having $p^*$ be the fixed prices under DEEM.
We refer to these instantaneous queue-based prices as $p^q$ in the definition of \DEEMplus in Figures~\ref{fig:def-DEEMplus} and~\ref{fig:def-DEEMplus-cont}.

Suitably defined prices as functions of job queue lengths can be used as a feedback control mechanism to stabilize the queue lengths, 
and hence stabilize the prices themselves. (Such ``backpressure'' pricing based on queue lengths is commonly utilized in designing distributed algorithms for resource allocation, e.g., for congestion control in communication networks \cite{shakkottai2008network,srikant2012mathematics}.) For our simulations in Section~\ref{sec:simulations}, we use a proportional-derivative (PD) control-based definition of these prices as an illustrative example (the technical details of the controller design are presented in Appendix~\ref{sec:pd-control}). 
The instantaneous shadow price for each job type is the sum of a ``proportional'' term and a ``derivative'' term.
The ``proportional'' term is an affine function of the queue length with a negative coefficient for the queue length. The idea is that if the queue length is small then this indicates that the job type is in high demand and hence the price for this type should be high. The derivative term is proportional to the negative of the recent rate of change of the queue length, to counteract rapid changes in queue length and prevent oscillatory behavior.  

Our definition of these prices does not depend on the vector of job arrival rates $\mu$. It also obviates the need to explicitly compute $y^*$ in the exploitation phase of DEEM; instead the Exploit phase can be implemented by allocating optimally for each worker given the queue-based prices that were supplied to the worker when the worker arrived. Natural (small) fluctuations in prices that arise in the process of stabilizing queue lengths result in appropriate tie-breaking in allocation (tie-breaking is typically needed; see~Fact \ref{fact:tiebreaking_necessary}), with randomization occurring \emph{across workers}. The resulting stability of the queue lengths implies that the capacity constraints are satisfied.

\subsection{Optimizing the Explore phase for finite $N$} \label{subsec:finiteN}

\DEEMplus shares the same explore-then-exploit structure as DEEM, but with a few changes to the exploration phase to optimize learning for finite $N$. These modifications lead to a significant improvement in performance. In Appendix~\ref{apx:dpconfirm}, we show that the regret estimate that the Explore phase of DEEM minimizes is a poor estimate of the true regret for small $N$. By contrast, the regret estimate that the Explore phase of \DEEMplus minimizes captures the true regret reasonably well even for small $N$, hence explaining the improvement in performance produced by these modifications.

The changes to the Explore phase in \DEEMplus are discussed in detail below.


\begin{enumerate}
\item  {\bf Refining learning goals.} Recall that in DEEM, we achieve a $1/\log N$ probability of error of misclassifying $i'$ as $i$ even if $\cJ(i) \subseteq \cJ(i')$. 
    In \DEEMplus, because we use queue-based prices, the counterparts of the sets $\cJ(i)$ are the sets
\begin{equation}
\cJ^q(i)\triangleq\arg\max_{j\in \cS} A(i,j)-p^q_j \label{eqn:opt-jobs-queue-prices}
\end{equation}
for each $i\in\cI$. These are sets of job types that are optimal for the different worker types under the queue-based prices $p^q$. Because of natural fluctuations in the prices across jobs, these sets are typically singletons and hence for any two types $i$ and $i'$, either $\cJ^q(i)\cap \cJ^q(i')= \emptyset$ or $\cJ^q(i)=\cJ^q(i')$. In the latter case, it would appear wasteful to achieve a $1/\log N$ probability of error of misclassifying $i'$ as $i$, except in the case
where the optimal confirmation policies for the two types are different, since confirming using a suboptimal job sampling policy leads to an increase in leading-order regret.\footnote{In DEEM, we made this distinction even when the confirmation policies for the two types are the same since leading-order regret is unaffected, and because it allowed us to leverage the solution to the problem with known worker types to obtain a policy with provable guarantees that satisfies capacity constraints 
(see the construction of $y^*$ in the proof of Proposition~\ref{prop:construct} in Appendix~\ref{apx:mainproof}). The resulting key property is that shadow prices remain close to those under known worker types, and, happily, we observe that this latter property holds under \DEEMplus in our numerical experiments (see Table~\ref{tbl:p} in Section~\ref{sec:simulations} below).}
Thus, in \DEEMplus, for each worker type $i$, we define the sets of types it needs to be distinguished from as follows (the definition does not assume that the sets $\cJ^q(i)$ are singletons).

\begin{definition} \label{def-weak}For each $i,\,i'\in\cI$,
\begin{enumerate}
\item $\Str^q(i) \triangleq \{ i': \cJ^q(i) \setminus \cJ^q(i') \neq \emptyset \}$.
\item $\Weak^q(i) \triangleq \{ i': \cJ^q(i) \subseteq \cJ^q(i') \textup{ and } \alpha(i,\cI, \cJ, A, p^q, \Str^q(i)) \neq \alpha(i',\cI, \cJ, A, p^q, \Str^q(i'))\}$, where the function $\alpha$ is defined as in Figure~\ref{fig:def-alphai-ystar}.
\end{enumerate}
\end{definition}
In the Guessing mode, \DEEMplus explicitly distinguishes $i$ only from types in $\Str^q(i)\cup\Weak^q(i)$ (see line 28 and the first condition in line 15 in Figure~\ref{fig:def-DEEMplus}). If, at some point in the Explore phase, type $i$ has been confirmed against all types in $\Str^q(i)$, then the algorithm can safely proceed to the Exploit phase even if $i$ has not been weakly distinguished from some $i'\in\Weak^q(i)$; the latter distinction is rendered unnecessary in this case.  This change accounts for the difference in the conditions for transitioning from Explore to Exploit in line 15 of DEEM versus line 15 of \DEEMplus.

Next, recall that DEEM tries to achieve a probability $1/N$ of misclassifying a worker of type $i'$ as type $i$ for any $i\in\Str(i)$.  For small $N$, however, we can do better: the desired probability of error should depend on the largest regret that type $i'$ could incur by performing a job that is optimal for $i$ but not for $i'$: if this regret is very small, then it isn't worth trying to make this distinction with high precision. For each $i\in\cI$, define the maximal externality-adjusted payoffs
\begin{align}
U^q(i)&\triangleq \max_{j \in \cJ}A(i,j)-p^q_j.\label{def-max-utility-queue-prices}
\end{align}
Then we define
\begin{align}
R(i,i') &\triangleq \max_{j \in \cJ^q(i),\,\, j \notin \cJ^q(i')}U^q(i') - [A(i',j)-p^q_j] \qquad \forall \, i'\in \Str^q(i)\, .\label{def-onestepreg-str}
\end{align}
We informally refer to $R(i,i')$ as the maximal per-step regret of mislabeling $i'$ as $i$.
It will further be convenient to (arbitrarily) define\footnote{The definition in \eqref{def-onestepreg-weak} plays a role only in line 28 of $\DEEMplus$ (Figure~\ref{fig:def-DEEMplus}), which contains the criterion for being in Guessing mode.}
\begin{align}
R(i,i') &\triangleq 1 \qquad  \forall i'\in\Weak^q(i) \, .\label{def-onestepreg-weak}
\end{align}
$\DEEMplus$ aims for a probability of mislabeling $i'$ as $i$ of $1/(R(i,i')\log N)$ instead of $1/\log N$ at the end of Guessing for all $i' \in \Str^q(i)\cup\Weak^q(i)$, and of $1/(N R(i,i'))$ instead of $1/N$ at the end of Explore for all $i'\in\Str^q(i)$.  These changes account for the fact that if $R(i,i')$ is small, then we can tolerate a higher probability of error (see line 15 in Figure~\ref{fig:def-DEEMplus}). This modification is especially important in preventing wasteful learning in cases where price fluctuations (necessary for tie-breaking; see Section~\ref{subsec:queueprices}) result in $\cJ^q(i) \subsetneq \cJ^q(i')$ even though $\cJ(i) \subseteq \cJ(i')$.

\item {\bf Incorporating the posterior during Explore.} The final change we propose is to explicitly incorporate the posterior into the Explore phase. First, recall that in DEEM, Guessing involves the naive approach of exploring uniformly at random.  When $N$ is finite, we can reduce regret by instead leveraging the posterior at each round to appropriately allocate learning effort across the different types until we form a good guess of the worker type. In \DEEMplus, we do so by using a price-adjusted version of Thompson sampling (TS), which is a popular Bayesian multi-armed bandit algorithm, in the Guessing mode. Under this algorithm, a worker type is sampled from the posterior distribution and the job type that maximizes the price-adjusted payoff for this sampled worker type is assigned to the worker; if there are multiple such types, one type is chosen uniformly at random.
This is defined in the function $\alpha_{\textup{guess}}$ in Figure~\ref{fig:def-DEEMplus-cont}.

Similarly, the posterior can also be incorporated in minimizing expected regret in the Confirmation mode.  This is captured by the function $\alpha_{\textup{conf}}$ defined in Figure~\ref{fig:def-DEEMplus-cont}. This function returns a sampling distribution that maximizes the product of the expected per-time-step regret relative to the maximal externality-adjusted payoffs (the term in the first parenthesis in the objective on line 44) and the estimated time until the confirmation of the MAP estimate assuming it is correct (the term in the second parenthesis in the objective in line 44). If there are multiple solutions, $\alpha_{\textup{conf}}$ is chosen to be the one with the smallest time until confirmation (line 45). Observe that as $\lambda(\MAP)/\big (\sum_{i\in\cI}\lambda(i)\big )\rightarrow 1$ for the $\MAP$ estimate, the objective function in line 44  in the definition of $\alpha_{\textup{conf}}$ converges to the objective function used while computing $\alpha(\MAP)$ in $\eqref{eqn:alpha}$, modulo the $(\log N+\log R(i,i'))$ factors that simply capture the fact that the learning goals have been adjusted for small $N$.

\end{enumerate}

\subsection{Optimizing the Exploit phase for finite $N$} \label{subsec:exploit-finiteN}

In {\DEEMplus}, we tweak the approach in the Exploit phase of DEEM to improve performance: instead of optimizing the price-adjusted payoff for the confirmed worker label, we can maximize the expected payoff with respect to the posterior distribution, thus accounting for the possibility that we may have confirmed incorrectly. This is reflected in the new definition of the \textproc{Exploit()} function in line 35. We find that this change significantly improves performance.

\section{Simulations}\label{sec:simulations}
In this section, we simulate \DEEMplus in a market environment with queue-based shadow prices and compare its performance against other policies. We consider $350$ instances with $|\cI| = 4 $ worker types and $|\cJ|=3$ job types. In each instance, the arrival rates of all the worker types are identical, i.e., $\rho_i = 0.25$ for all $i\in\cI$, while the arrival rates of the job types are randomly chosen. The choice of these instances is discussed in detail in Appendix~\ref{apx:instances}.
For our simulations, we consider $N\in \{10,20,30,40\}$. Given an instance $({\rho}, N ,\mu,A)$, our simulated marketplace is described as follows.


{\bf Arrival process.} Time is discrete, $t=1,2,\cdots,T$, where we assume that $T \in (200,400,600,800)$ for $N\in (10,20,30,40)$, respectively. At the beginning of each time period $t$, a fixed $M_N(i)$ number of workers of type $i$ arrive and they stay for $N$ periods. We choose $M_N(i) = 600/N$ for the different values of $N$ for each $i$, so that, irrespective of $N$, the total number of workers of any type $i$ present in the market at any time $t$ is $M_N(i)\times N = 600$ (making a total of $600\times |\cI| = 600 \times 4 = 2400$ workers present in the market at any time). Relating back to the continuum model, here we implicitly assume that a mass of 0.25 corresponds to 600 workers or jobs. Observe that the fact that $M_N(i)$ is the same for all $i\in\cI$ reflects the fact that $\rho_i=0.25$ for all $i$.

Job-matching decisions are sequentially made for each of the $2400$ workers in each time step in an arbitrary order. Between successive allocations, a job of type $j$ arrives with probability $\mu_j/\sum_{i\in\cI}\rho_i$ for each type $j$. Thus the relative proportions of the number of workers  of different types that are present in the market at any time, and the expected number of jobs that arrive in the market in any time step, are proportional to $\rho_i$ and $\mu_j$ for $i\in\cI$ and $j\in \cJ$, where a mass of 1 corresponds to 1200 workers or jobs. We assume that each job takes one period to complete.



{\bf Queues and queue-based prices.}   
As described in Section~\ref{subsec:queueprices}, we utilize a PD-control mechanism to set prices with the goal of stabilizing queue lengths; the details are presented in Appendix~\ref{sec:pd-control}. Queue lengths change at epochs where either a job arrives or it is assigned to a worker. The arriving jobs accumulate in queues for the different types, each with a finite buffer of capacity $B$, where we choose $B=50,000$. If the buffer capacity is exceeded for some job type then the remaining jobs are lost. 
(We use a large buffer in our simulations to keep the price fluctuations small, to ensure that differences in the observed performance of \DEEMplus and other benchmark policies are not caused by large price fluctuations.)


{\bf Matching process.}  In the period when a worker arrives, when a job-matching decision is made for the worker for the very first time, the instantaneous price at that time is assigned to the worker. This price is utilized by the policy in determining the job allocations for the worker throughout her lifetime.

At every job-matching opportunity, an assignment is generated by the chosen policy based on the assigned price and the history of assignments and outcomes for the worker. If a job of the required type is unavailable, then the worker remains unmatched. For each worker-job match, a random payoff is realized, drawn from the distribution specified by $A$, and the assignment-payoff tuple is added to the history of the worker.


{\bf Simulation output.} We look at two main outputs: the average performance and the prices assigned to the workers.
\begin{enumerate}
\item {\bf Performance ratio (PR):} For the performance measure, we compute the average reward per worker over the last $T/4$ periods of the simulation horizon (so that the mean queue lengths have had a sufficient time to stabilize) and divide it by the optimal per-worker reward in the full information setting (i.e., the optimal value of \eqref{eq:opt1} divided by $\sum_{i\in\cI}\rho_i$). We call this quantity the \emph{performance ratio} (PR) of the instance. The average PR over the 350 instances is a proxy for the performance of a fixed policy for a given $N$.

\item {\bf Average queue-based prices $\bar{p}^q_j$:} We also examine the prices for the different job types seen by the workers that arrived in the last $T/4$ periods. We look at the average price over these periods for each job type $j$, denoted by $\bar{p}^q_j$, and also the magnitude of typical fluctuations in prices around their average values. The main goal here is to investigate how these prices compare with $p^*$.
\end{enumerate}
 %


\subsection{Benchmark policies}
Along with \DEEMplus, we implement two other policies. These policies are formally defined in Appendix~\ref{apx:other}.
\begin{enumerate}
 \item  {\bf PA-TS: } We consider an extension of Thompson sampling (TS)~\cite{agrawal2011analysis} adapted to our setting, in which the payoffs are adjusted by queue-based prices to account for capacity constraints. We refer to this policy as PA-TS (for ``price-adjusted Thompson sampling''). PA-TS is formally defined in Figure~\ref{fig:def-PA-TS} in the appendix. TS is a popular Bayesian multi-armed bandit algorithm and is a natural benchmark for our setting (with the incorporation of queue-based prices, as we propose). TS explores due to sampling from the posterior distribution of the true (worker) type; the exploration is more aggressive in the early stages when the posterior is not sufficiently concentrated. Owing to sufficient exploration, it is known to attain the optimal leading-order of regret in several settings \cite{agrawal2012analysis,russo2016information}, but not necessarily the optimal constant factor for the leading term.
 \item {\bf TS-\DEEMplus: }We also consider a policy that is a hybrid of \DEEMplus and TS that also uses queue-based prices for payoff adjustment to account for capacity constraints. We will refer to this policy as TS-\DEEMplus. TS-\DEEMplus is formally defined in Figure~\ref{fig:def-DTS} in the appendix. This policy operates individually on each worker and has the same two-phase structure of \DEEMplus, consisting of an Explore phase and an Exploit phase. The Exploit phase is identical to that in \DEEMplus, in which the algorithm simply assigns a job that maximizes the expected price-adjusted payoff at each opportunity. The learning goals of the Explore phase are also identical to those of \DEEMplus: in order to label a worker as being of type $i$, the probability of misclassifying $i'$ as $i$ for any $i'\in\Str^q(i)$ must be less than $1/N$. The only difference is in the way the policy explores. TS-\DEEMplus uses Thompson sampling throughout the Explore phase until the learning goals are met, unlike \DEEMplus, which uses Thompson sampling only in the Guessing mode and a different sampling policy that optimizes the trade-off between learning and payoff maximization in the Confirmation mode.
\end{enumerate}
Comparing the performance of \DEEMplus with PA-TS and TS-\DEEMplus allows us to investigate the relative importance of the two main features of \DEEMplus that it inherits from DEEM (the other important feature being the use of shadow prices for externality adjustment): (1) setting appropriate learning goals and (2) achieving these goals while minimizing regret in the Confirmation mode. As we have seen earlier, both these features are critical to the leading-order regret optimality of DEEM.

We conjecture that both PA-TS and TS-\DEEMplus explore sufficiently and are asymptotically optimal; i.e., the average regret per unit mass of workers converges to $0$  as $N\rightarrow \infty$.  But unlike the Confirmation mode in DEEM or \DEEMplus, these algorithms don't optimize the exploration vs. exploitation trade-off in a way that is tailored to the instance. Hence, we conjecture that they do not achieve the optimal leading term of regret.

\subsection{Results}\label{sec:simresults}

\begin{figure}[ht]
  \begin{subfigure}[b]{0.5\textwidth}
    \centering
    \includegraphics[width=0.9\textwidth]{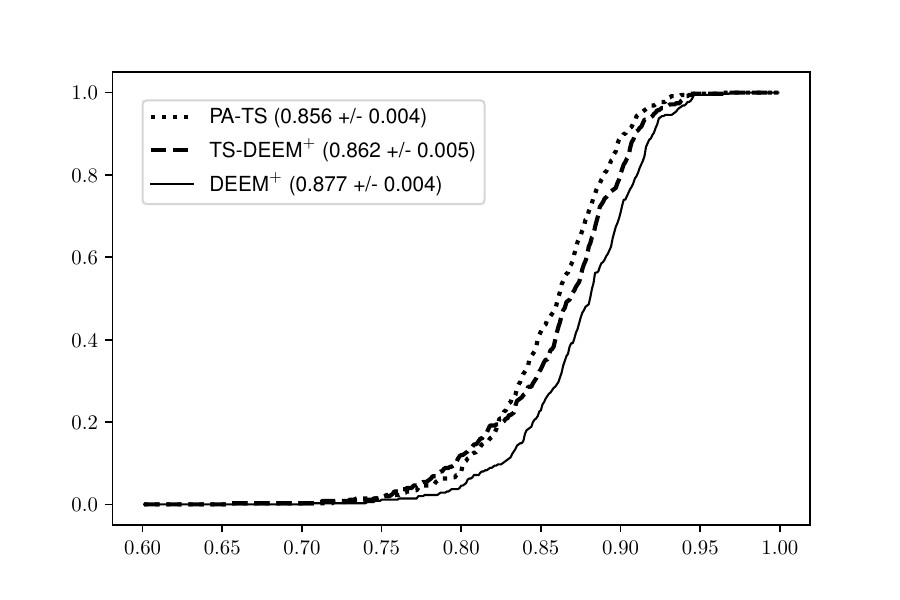}
    \caption{$N=10$}
    \label{perf:a}
    \vspace{4ex}
  \end{subfigure}
  \begin{subfigure}[b]{0.5\textwidth}
    \centering
    \includegraphics[width=0.9\textwidth]{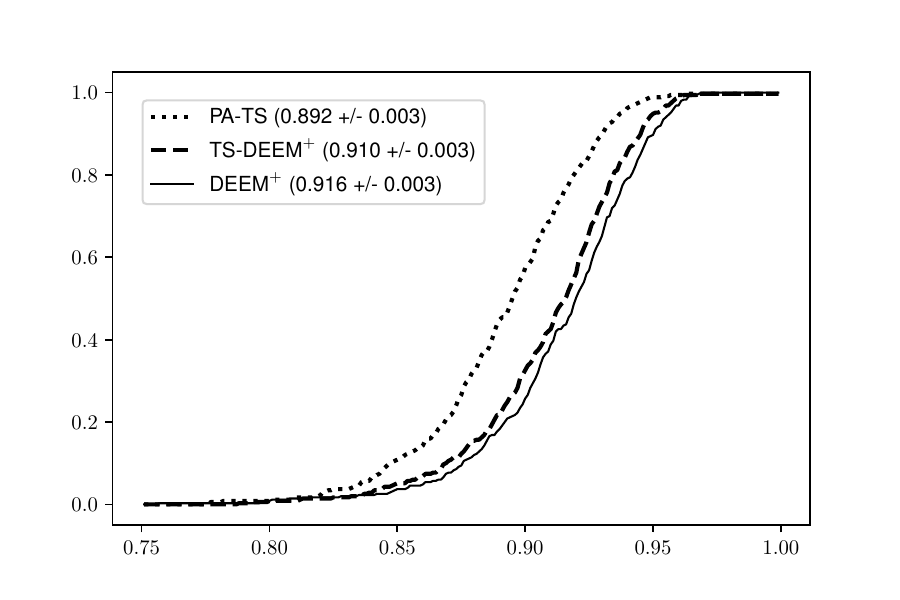}
    \caption{$N=20$}
    \label{perf:b}
    \vspace{4ex}
  \end{subfigure}
  \begin{subfigure}[b]{0.5\textwidth}
    \centering
    \includegraphics[width=0.9\textwidth]{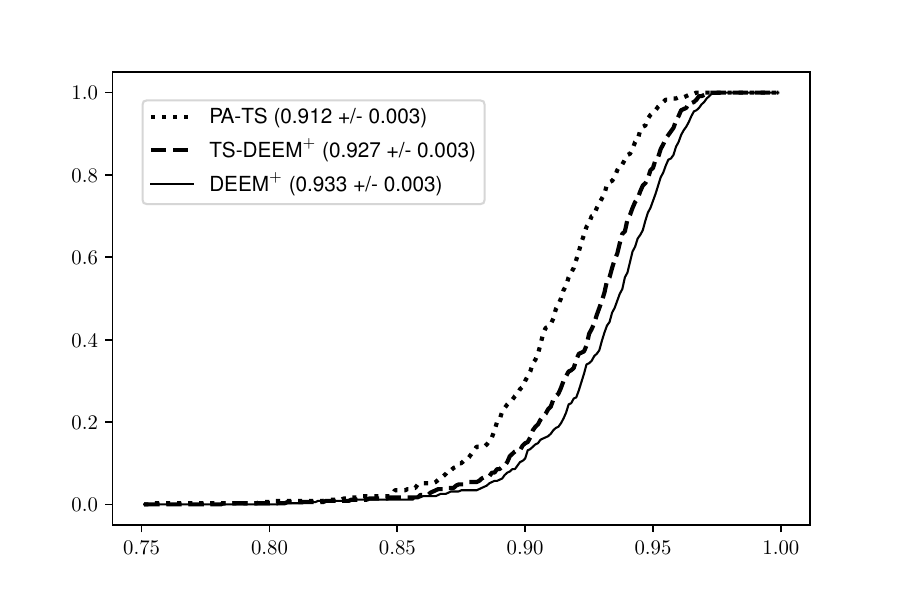}
    \caption{$N=30$}
    \label{perf:c}
  \end{subfigure}
  \begin{subfigure}[b]{0.5\textwidth}
    \centering
    \includegraphics[width=0.9\textwidth]{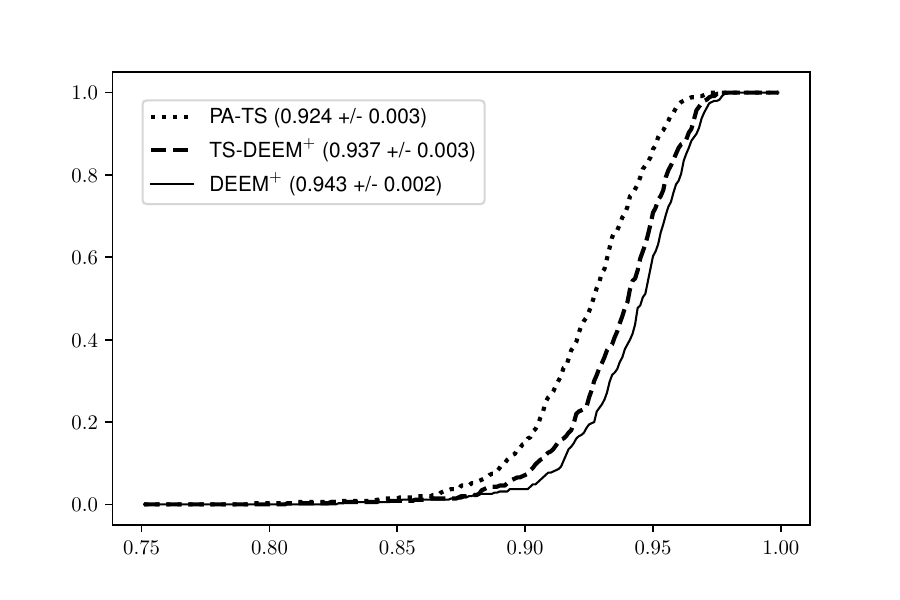}
    \caption{$N=40$}
    \label{perf:d}
  \end{subfigure}
  \caption{The empirical cumulative distributions of the PRs under the different policies over 350 instances for different values of $N$. The average PR along with the standard error under a policy is given in the corresponding parenthesis.}
  \label{fig:perf}
\end{figure}

{\bf Performance comparison}.  The four panels of Figure~\ref{fig:perf} show the empirical cumulative distribution function (CDF) over the $350$ instances of the per-instance PRs (defined in the discussion on ``simulation output'' above) for the three candidate policies, for different values of $N$. The average of these ratios over the set of instances for each policy, along with the standard error, is presented in the legends. As one can observe, both \DEEMplus and TS-\DEEMplus significantly outperform PA-TS by a wide margin in all four settings. This suggests the importance of setting appropriate learning goals and of focusing only on payoff maximization after the learning goals have been met. By contrast, PA-TS explores excessively: it keeps exploring beyond the point where such experimentation has any significant benefit for future payoffs. \DEEMplus outperforms TS-\DEEMplus in all four settings as well. This points to the added benefit from having an optimized Confirmation mode, which is central to the leading-order regret optimality of DEEM.

Figure~\ref{fig:cdf} shows the performance improvement of \DEEMplus with growing $N$.

{ {\bf Quality of our regret estimate.} In Appendix~\ref{apx:dpconfirm} we numerically investigate the quality of our finite $N$ regret estimate. Since \DEEMplus is a modified version of DEEM, this regret estimate is a corresponding modified version of the asymptotic regret estimate $C \log N/N$ in Theorem~\ref{thm:mainresult}. We find that our regret estimate captures the true regret reasonably well, which suggests that our specific design of the Confirmation mode under \DEEMplus is approximately minimizing regret, and explains why \DEEMplus outperforms the TS-\DEEMplus policy (see above).}

\begin{table*}[t]
\centering
    \begin{minipage}[c]{0.42\textwidth}
      \vspace{0pt}
      \includegraphics[width=\textwidth]{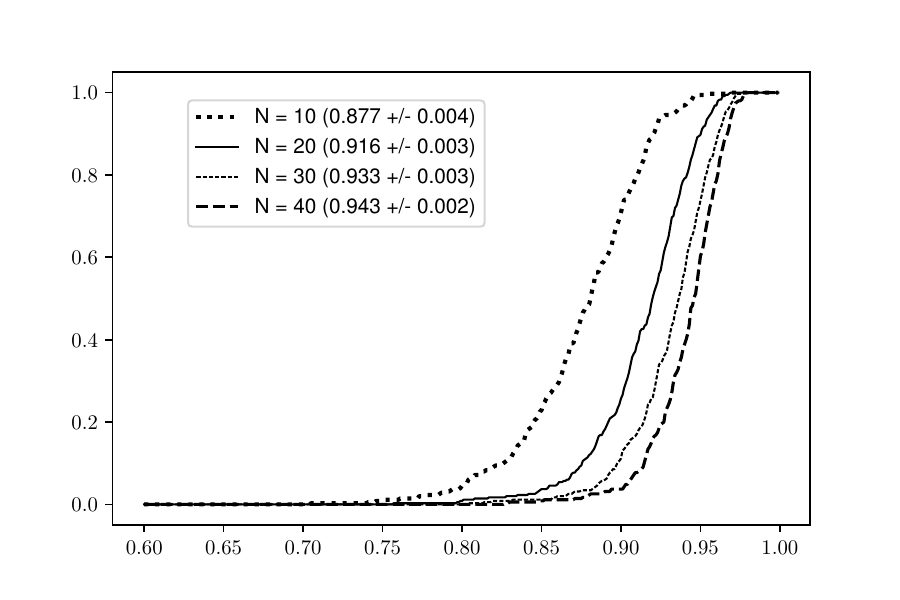}
      \captionof{figure}{The empirical cumulative distribution of the PRs under $\DEEMplus$ for different values of $N$. The average PRs along with standard errors are given in the parentheses.}
      \label{fig:cdf}
    \end{minipage}
    \hspace{0.05\textwidth}
    \begin{minipage}[c]{0.5\textwidth }
  \begin{tabular}{c c c}
        \toprule
         $N$ &  Med. $\|p^*-\bar{p}^q\|_\infty$ & (Med. of stdev($p_j^q$))$_{j\in\cJ}$\\ \midrule
          10           & 0.119  &   (0.011, 0.010, 0.011)  \\
          20     & 0.075  &  (0.006, 0.006, 0.006)  \\
         30            & 0.057  &  (0.006, 0.006, 0.006)  \\
        40 & 0.053 &  (0.007, 0.006, 0.007) \\ \bottomrule
       \end{tabular}
       \vspace{0.1in}
       \caption{First column: median over 350 instances of difference between the queue-based prices $\bar{p}^q$ under \DEEMplus and $p^*$ for different values of $N$ under the $\mathcal{L}^\infty$ norm. Second column: median over 350 instances of the standard deviation of the prices seen by workers in the last $T/4$ periods, under \DEEMplus, for the different values of $N$. All values are rounded to three decimal places.}
       \label{tbl:p}
       \end{minipage}%
\end{table*}

{\bf Prices.} Under generalized imbalance (recall condition~\eqref{eqn:genimb}), the potential capacity violations due to deviations from the optimal routing during the Explore phase can be corrected in the Exploit phase by designing a routing matrix $y^*$ that perturbs $x^*$ appropriately, but without changing the shadow prices for large enough $N$ (Proposition~\ref{prop:construct}).  For finite $N$, such a correction that leaves the shadow prices unaffected may
not be possible if GI is violated or near-violated. In these cases, even if $p^*$ is unique, the shadow prices under unknown worker types may a priori be quite different from $p^*$, and using $p^*$ for externality adjustment in these cases could prove to be detrimental to performance. On the other hand, our queue-length-based implementation organically discovers the appropriate shadow prices. It is nevertheless interesting to investigate how different these prices are from $p^*$.

The first column of Table~\ref{tbl:p} shows the average and median value of $\|p^*-\bar{p}^q\|_\infty$ under \DEEMplus for different values of $N$. As expected, the average queue-length-based prices $\bar{p}^q$ are close to $p^*$, and they get closer for larger $N$. The difference in these prices is small as compared to the typical range of variation of these prices: the standard deviation of $p^*_j$ for $j\in\cJ$ (rounded to three decimal places) is $(0.512, 0.530, 0.484)$. 
The closeness of $p^*$ and $\bar{p}^q$ supports our theoretical device of using $p^*$ to approximately capture the externalities due to capacity constraints in the large $N$ regime.
Moreover, we find that the price fluctuations around the mean values $\bar{p}^q$ are relatively small. This can be observed in the second column of Table~\ref{tbl:p}, which shows the median (rounded to three decimal places) across the 350 instances of the standard deviation of the prices seen by workers in the last $T/4$ periods, for different values of $N$.

Finally, since the fluctuations of $p^q$ around $\bar{p}^q$ are very small and $\bar{p}^q$ itself is typically close to $p^*$, we expect to frequently encounter difficult type pairs in the price-adjusted multi-armed bandit problems corresponding to the different workers (recall that all 350 instances have a difficult type pair assuming that the shadow prices are $p^*$). Indeed, we observe that under prices $\bar{p}^q$, each of the 350 instances possesses difficult type pairs. 

\section{Conclusion}
\label{sec:conclusion}


This work suggests a novel and practical algorithm for learning while matching, applicable across a range of online matching platforms.  Several directions of generalization remain open for future work.  First, while we consider a finite-type model, a richer model of types would admit a wider range of applications; e.g., workers and jobs may be characterized by features in a vector-valued space, with compatibility determined by the inner product between feature vectors.  Second, while our model includes only one-sided uncertainty, in general a market will include two-sided uncertainty (i.e., both supply and demand will exhibit type uncertainty). Finally, although our results extend immediately to settings where all jobs have a fixed duration larger than one time period, 
it would be interesting to consider settings where job durations are random and possibly depend on the job type and/or the worker type. 
We expect that a similar approach using shadow prices for capacity constraints, first to set learning objectives and then to achieve them while incurring minimum regret, should be applicable even in these more general settings. 


We conclude by noting that our model ignores strategic behavior by participants.  A simple extension might be to presume that workers are less likely to return after several bad experiences; this would dramatically alter the model, forcing the policy to become more conservative.  The modeling and analysis of these and other strategic behaviors remain important challenges.

\bibliographystyle{informs2014}
\bibliography{banknap,references_from_dan,matchingmarket,information_disclosure,revenue_management,Fluid_and_mean_field,directed_search,platform,prior_johari,dynmech}

\newpage


\begin{APPENDIX}{}
\section{Appendix to Section~\ref{sec:model}}\label{apx:model}
\subsection{Estimating system-level parameters}\label{apx:estimation}
As mentioned in Section~\ref{sec:model}, the job arrival rates $\mu$ can be directly estimated empirically since job types are observed, while the worker arrival rates $\rho$ and payoff matrix $A$ can be estimated using the observed outcome data. The latter can be done as follows.
For each job type $j$, consider a collection of workers $\mathcal{L}_j$ who have performed $N' \geq 2|\cI|-1$ jobs of type $j$. Let $U^l_j\in \{0,1,\ldots, N'\}$ denote the number of successes of worker $l\in \mathcal{L}_j$ on job type $j$. From our model, it is clear that $U^l_j$ is a mixture of Binomial random variables; in particular, $\textup{P}(U^l_j = u) = \sum_{i \in \cI}\rho_i {N' \choose u}A(i,j)^u(1-A(i,j))^{N'-u}$. It was shown in \cite{blischke1964estimating} that $N' \geq 2|\cI|-1$ is a necessary and sufficient condition that the model is identifiable, i.e., no two different sets of $(A(i,j))_{i\in\cI}$ and $\rho$ can lead to the same distribution on $U^l_j$. Under this condition, there are several methods for constructing efficient estimators for $(A(i,j))_{i\in\cI}$ and $\rho$ from the collection $(U^l_j)_{j\in \mathcal{L}_j}$, whose accuracy increases with $|\mathcal{L}_j|$; for instance \cite{blischke1964estimating} presents three different methods based on population moments. \cite{feldman2008learning} describes an estimator for mixtures of general discrete distributions with bounded support, which can also be utilized here. We can estimate $(A(i,j))_{i\in\cI}$ for each job type $j\in\cJ$ in this manner.

The aforementioned estimators are consistent, i.e., the estimator error vanishes asymptotically in the number of samples. Formally speaking, given that our model in Section~\ref{sec:model} assumes a continuum of workers, there are indeed an infinite number of workers available.  In particular,  in such a setting $A$ and $\rho$ could in principle be estimated with arbitrary accuracy; in this sense, our continuum model is consistent with the formal assumption in Section~\ref{sec:model} that $A$ and $\rho$ are exactly known.



\subsection{Proof of Lemma~\ref{lem:capacities-and-steady-state}: Assignments under WHO policies satisfying \eqref{eq:capacity_constraints} satisfy capacity constraints.}

In this section, we prove Lemma~\ref{lem:capacities-and-steady-state}, which shows that the implied assignments under WHO policies that satisfy condition \eqref{eq:capacity_constraints} satisfy capacity constraints at all times.
\label{app:transient-capacity}

\begin{proof}{Proof of Lemma~\ref{lem:capacities-and-steady-state}.}
We prove the result by establishing the following fact, using induction: At all times $t$, and for all $i \in \cI$,
\begin{align}
\nu_t(H, i) =
\left \{
\begin{array}{ll}
\nu_\pi(H, i) & \textup{if length}(H) \leq \min(t, N-1)\, , \\
0 & \textup{otherwise}\, .
\end{array} \right .
\label{eq:transient-induction}
\end{align}  
The claim clearly holds at $t = 0$. The new arrivals ensure $\nu_0 (\phi, i) = {\rho}_i = \nu_\pi(\phi,i)$, and since the system started empty, $\nu_0 (H,i) = 0$ for all $H \neq \phi, i \in \cI$.

Assume the inductive hypothesis at time $t$.  The inductive hypothesis, together with the definition of $x_\pi$ and the capacity constraint \eqref{eq:capacity_constraints} imply that
\[ \sum_{i \in \cC} \sum_H \nu_t(H, i) \pi(H, j) \leq \sum_{i \in \cC} \sum_H \nu_\pi(H, i) \pi(H,j) \leq \mu_j\, , \quad \forall j \in \cJ\, .\]
Not only does this establish \eqref{eq:transient-capacity} for time $t$, but it also ensures that sufficient capacity exists to ensure that all workers at time $t$ can be matched to their intended job type based on their respective histories, according to policy $\pi$.
As a result, we have prior to $t+1$ that for all $H$ such that length$(H) \leq \min(t, N-2)$, and all $j \in \cJ, i \in \cI$ that
\begin{align*}
 \nu_{t+1}((H, (j,1)), i) &= \nu_t(H, i) \pi(H, j) A(i,j)\\
& = \nu_\pi(H, i) \pi(H,j) A(i,j) \\
& = \nu_\pi((H, (j,1)), i).
\end{align*}
A similar argument applies to show that $\nu_{t+1}((H, (j,0)), i) = \nu_\pi((H,(j,0)),i)$. In other words, we have shown that $\nu_{t+1}(H', i) = \nu_\pi(H',i)$ for all $i \in \cI$ and $H'$ such that $1\leq \textup{length}(H') \leq \min(t+1, N-1)$.  Further, new arrivals (or worker type regenerations) at time $t + 1$ ensure that $\nu_{t+1}(\phi, i) = {\rho}_i = \nu_\pi(\phi, i)$. It remains to show that $\nu_{t+1}(H', i) = 0$ for all $H'$ such that length$(H')> \min(t+1, N-1)$. But this is immediate since, by our inductive hypothesis, $\nu_t(H, i) = 0$ for all $H$ such that length$(H)> t$, and workers leave after $N$ periods.
Induction completes the proof of \eqref{eq:transient-induction} for all $t \geq 0$.

Finally, note that the last sentence of the lemma saying that the system is in steady state for all $t \geq N-1$ follows immediately from \eqref{eq:transient-induction}.\hfill $\Box$
\end{proof}

\subsection{Proof of Proposition \ref{prop:who_sufficiency}: Sufficiency of worker-history-only policies }
\label{app:who_sufficiency}

In this section we prove Proposition \ref{prop:who_sufficiency}, showing that there is a worker-history-only (WHO) policy that achieves a rate of payoff accumulation that is arbitrarily close to the maximum possible. We will think of $N$ as being fixed throughout this section.
Throughout the section, we repeatedly rely on the exact law of large numbers (ELLN) (Theorem 2.16 in \cite{sun2006exact}.)

Fix a policy $\pi$, possibly randomized. We call a sample path (under $\pi$) \emph{typical} if it is such that at each time $t = 1, 2, \dots$, for every possible history $H \in \cH$ and worker type $i$ and job type $j$,  the total reward generated by the mass $m_{\pi, t}(H, i,j)$ of workers with history $H$ and true type $i$ assigned to job type $j$ is $m_{\pi, t}(H, i,j) A(i,j)$, i.e., consistent with the ELLN.  By the ELLN, we know that the realized sample path will be typical with probability 1. Note that the rate of payoff generation over $T$-periods $V_T(\pi)$ given by \eqref{eq:V_T} is a random variable (i.e., a function of the sample path), since the $\big (x_{\pi,t}(i,j)\big )$s are random variables as noted in Section~\ref{subsec:matching-policies}. Also recall that due to the ELLN, for any WHO policy $\piwho$, the steady state payoff $W^N(\piwho)$ is deterministic,  as given by \eqref{prob:mixedbandit}, together with \eqref{eq:recursion1}-\eqref{eq:x_pi}. The following lemma shows that for any policy $\pi$ and any typical sample path (evaluated over a long enough horizon), there is a WHO policy $\piwho$ which does almost as well (in steady state). The proposition will be easily proved using this lemma.

\begin{lemma}
Fix $\eps> 0$ and the worker lifetime $N$. There exists $\uT = \uT(N, |\cJ|, 1/\eps)< \infty$ such that the following holds. Consider any  policy $\pi$, any typical sample path under the policy, and any horizon $T \geq \uT$. Let $V_T(\pi)$ be the rate of payoff generation as per \eqref{eq:V_T} for the given sample path.  Then there is a WHO policy $\piwho$ such that
$$W^N(\piwho) > V_T(\pi)-\eps/2\, ,$$
where $W^N(\piwho)$ is the steady state rate of payoff generation under $\piwho$ as given by \eqref{prob:mixedbandit}.
\label{lma:WHO-can-match-any-sample-path}
\end{lemma}
\begin{proof}{Proof of Lemma~\ref{lma:WHO-can-match-any-sample-path}.}
Throughout this proof, we suppress dependence on $\pi$.  
{Consider any typical sample path.} Let $\nu_t(H_k)$ 
be the measure of workers in the system with history $H_k$ just before the start of time $t$, and abusing notation, let $\nu_t(H_k)_j$ be the measure of such workers who are assigned to job type $j$ at time $t$.
(Note that $\nu_t(H_k)_j = \sum_{i \in \cI} m_{\pi, t}(H_k, i,j)$.)
{Here $k \in \{1, 2, \dots, N\}$ is the length of the history $H_k$ plus 1.} 
Since the policy cannot assign more jobs than have arrived in any period, we have
\begin{align}
  \sum_{k=1}^{N} \sum_{H_k} \nu_t(H_k)_j \leq \mu_j  \qquad \textup{for all} \ t\geq 1 \, .
  \label{eq:det_cap_constraints}
\end{align}
Fix {$T \geq \uT$, subsequently we will choose $\uT$ sufficiently large}. The average measure of workers with history $H_k$ who are present is
\begin{align}
\bar{\nu}_T(H_k) = \frac{1}{T} \sum_{t=1}^T \nu_t(H_k) \qquad \textup{for all} \ H_k \ \textup{and} \  k = 1, 2, \ldots, N \, .
\end{align}
The average measure of such workers who are assigned job $j$ is similarly defined and (abusing notation) denoted by $\bar{\nu}_T(H_k)_j$.  We immediately have that
\begin{align}
  \sum_{k=1}^{N} \sum_{H_k} \bar{\nu}_T(H_k)_j \leq \mu_j \, ,
  \label{eq:arbpi_av_cap_constraints}
\end{align}
by averaging Eq.~\eqref{eq:det_cap_constraints} over times until $T$. Now, consider a worker with history $H_k$ assigned a job of type $j$. Using the known $A$ matrix and arrival rates $\rho$, we can infer the posterior distribution of the worker type based on $H_k$ {(the computed posterior will be correct since the sample path is typical)}, and hence, the likelihood of the job of type $j$ being successfully completed.
Let $p(H_k, j)$ denote this probability of success. Then the distribution of $H_{k+1}$ for the worker is given by
\begin{align*}
H_{k+1} = \big (\, H_k, \big( j, \textup{Bernoulli}(p(H_k, j))\big) \, \big )\, .
\end{align*}
{Since the sample path is typical, we thus have
\begin{align}
\nu_{t+1} \big (\, H_k, \big( j, 1)\big) &= \nu_t (H_k)_j p(H_k, j) \nonumber \\
\nu_{t+1} \big (\, H_k, \big( j, 0)\big) &= \nu_t (H_k)_j (1- p(H_k, j)) \, .
\end{align}
for all $H_k$ with $k< N$, all $j \in \cJ$ and all $t \in \mathbb{N}$.
}
Barring the edge effect at time $T$ caused by workers whose history was $H_k$ at time $T$, this identity allows us to relate $\bar{\nu}_T(H_{k+1})$ to $\bar{\nu}_T(H_k)_j$'s.
In particular, for any $\delta_1 > 0$, if $T \geq \max_{i \in \cI} \rho_i/(N \delta_1)$ we have that
\begin{align}
\bar{\nu}_T(H_{k}, (j,1) ) &\approxt{\delta_1}  \bar{\nu}_T(H_k)_j p(H_k, j) \nonumber \\
\bar{\nu}_T(H_{k}, (j,0) ) &\approxt{\delta_1}  \bar{\nu}_T(H_k)_j \, \big (1 - p(H_k, j) \big ) \, .
\label{eq:arbpi_measure}
\end{align}
Here, $a \approxt{\delta} b$ represents the bound $|a - b|\leq \delta$.
Note that we have
\begin{align}
  V_T =   \sum_{k=1}^{N} \sum_{H_k}  \bar{\nu}_T(H_{k})_j p(H_k,j) \, .
  \label{eq:arbpi_payoff}
\end{align}

We are now ready to define our WHO policy $\piwho$.  Fix $\delta_2 > 0$.  For every $H_k$ such that $\bar{\nu}_T(H_k) \geq \delta_2$, this policy will attempt to assign a worker of history $H_k$ to job type $j$ with probability $\bar{\nu}_T(H_k)_j/\bar{\nu}_T(H_k)$, independently of other workers.  For now we ignore capacity constraints, and return to ensure they are met below.  Note that the WHO policy is {\em stationary}, even though it depends on the time $T$ behavior of the original arbitrary policy $\pi$.

We define a history as {\em rare} if $\bar{\nu}_T(H_k) < \delta_2$; note that rarity is defined w.r.t.~frequency of occurrence under $\pi$.  Workers with rare histories will be assigned the empty job under $\piwho$.  This specification uniquely defines $\piwho$ as well as the steady state cross-sectional distribution of worker histories under $\piwho$.  In particular, again using the ELLN, the steady state mass $\unu (H_k)$ under $\piwho$ of workers with history $H_k$ that are not rare is bounded as:
\begin{align}
 (1- \delta_1/\delta_2)^{k-1} \bar{\nu}_T(H_k) \leq \unu (H_k) &\leq  (1+ \delta_1/\delta_2)^{k-1}\bar{\nu}_T(H_k)\nonumber
\end{align}
using Eq.~\eqref{eq:arbpi_measure}, and the fact that all subhistories of $H_k$ are also not rare, along with induction on $k$. It follows that
\begin{align}
 \unu (H_k) &\approxt{\delta_3} \bar{\nu}_T(H_k) \quad \textup{where} \ \delta_3 = \max(\exp(N\delta_1/\delta_2) - 1, \delta_2) \, ,
 \label{eq:piwho_close}
\end{align}
for all histories (including rare histories), using $k \leq N$ and $\bar{\nu}_T(H_k) \leq 1$ and assuming $\delta_2> \delta_1 \Rightarrow \exp(Nx) -1 \geq (1+x)^N -1 \geq 1-(1-x)^N$ for $x = \delta_1/\delta_2 \in (0,1)$.

Ignoring capacity violations, the steady state rate of accumulation of payoff under $\piwho$ is
\begin{align}
\sum_{k=1}^{N} \sum_{H_k}  \unu(H_{k})_j p(H_k,j) \,
\approxt{\delta_5} \sum_{k=1}^{N} \sum_{H_k}  \bar{\nu}(H_{k})_j p(H_k,j) = V_T(\pi)\nonumber \\
\textup{where}\  \delta_5 = 2^N |\cJ|^{N-1} \delta_3 \,
\end{align}
again using Eq.~\eqref{eq:piwho_close} and the fact that there are $\sum_{k \leq N} (2|\cJ|)^{k-1} \leq 2^N |\cJ|^{N-1} $ possible histories. 

We now return to consider capacity constraint violations by $\piwho$.  Violation of the $j$-capacity constraint under $\piwho$ can be bounded as follows:
\begin{align*}
  &\left( \sum_{k=1}^{N} \sum_{H_k} \unu(H_k)_j -  \mu_j \right)_+ \leq  \left(  \sum_{k=1}^{N} \sum_{H_k} \bar{\nu}(H_k)_j - \mu_j  \right)_+ + 2^N |\cJ|^{N-1} \delta_3 = 2^N |\cJ|^{N-1} \delta_3
\end{align*}
using Eq.~\eqref{eq:piwho_close} and Eq.~\eqref{eq:arbpi_av_cap_constraints}, and the fact that there are $\sum_{k \leq N} (2|\cJ|)^{k-1} \leq 2^N |\cJ|^{N-1} $ possible histories. It follows that the sum of capacity constraint violations across $j \in \cJ$ is bounded by $(2|\cJ|)^N \delta_3$.

{Now we eliminate capacity violations, if any, for each job type $j$ in turn by taking (in any sequence) the histories such that $\unu(H_k)_j > 0$ and reducing the likelihood under $\piwho$ that workers with history $H_k$ are assigned job type $j$ just enough that $\sum_{k=1}^{N} \sum_{H_k} \unu(H_k)_j =  \mu_j$. 
We compensate for the reduction in likelihood of being assigned job type $j$ by increasing by an equal amount the likelihood of assigning the history $H_k$ worker to the ``unassigned'' type $\kappa$ in that period. Note that this increases $\unu(H_k, (\kappa, 0))$, and so we make further adjustments to $\piwho$ as follows. For all $H_{k'}$ which is a continuation/superhistory of $(H_k, (\kappa, 0))$ (including $(H_k, (\kappa, 0))$ itself), we ensure that $(\unu(H_{k'})_{j'})_{j' \in \cJ \backslash \{\kappa\}}$ remain unchanged by increasing the relative likelihood of assigning a history $H_{k'}$ worker to $\kappa$ by the right amount. (This ensures that we are not increasing the utilization of any \emph{other} job type $j' \in \cJ \backslash \{\kappa \}$ in the process of satisfying the capacity constraint for $j$. Also, note that our adjustment has preserved the property that $\piwho$ is a WHO policy.) A useful (non-WHO) \emph{interpretation} of masses $\unu$ resulting from this adjustment is that a worker who is once denied job type $j$ due to an adjustment as above is never again assigned any job during their lifetime while other workers are unaffected. Each such worker loses payoff no more than $N$. Aggregating over all job types, the mass of such workers is no more than the sum of capacity constraint violations $(2|\cJ|)^N \delta_3$ and so the total resulting payoff loss is no more than $\delta_4= N(2|\cJ|)^N \delta_3$.} 

As per \eqref{prob:mixedbandit}, we use $W^N(\pi)$ to denote the true steady state rate of accumulation of payoff under $\piwho$ with this modification to ensure capacity constraints are met. Combining the above and noting $\delta_5 < \delta_4$, we deduce that $W^N(\piwho)\geq V_T(\pi)- 2\delta_4$.
Hence, it suffices to have $\delta_4 = \eps/4$, which can be achieved using $\delta_3 = \delta_2= \eps/(4N(2|\cJ|)^N)$ and $\delta_1 = \delta_3 \log (1+ \delta_3)/N$ and
$\uT = \max_{i \in \cI} \rho_i/(N \delta_1)$.
This yields the required bound of 
$W^N(\piwho)\geq V_T(\pi) - \eps/2$ for any $T \geq \uT$.\hfill $\Box$
\end{proof}

With Lemma~\ref{lma:WHO-can-match-any-sample-path} in place, the proof of Proposition~\ref{prop:who_sufficiency} is straightforward.
\begin{proof}{Proof of Proposition \ref{prop:who_sufficiency}.}
As in the proof of Lemma~\ref{lma:WHO-can-match-any-sample-path}, we suppress the dependence on $\pi$ in this proof.
By definition of $\bV$, we know that there exists an increasing sequence of times $T_1, T_2, \ldots$ such that $\E[V_{T_n}] > \bV -\eps/2$ for all $n = 1, 2, \ldots$. 
For the constant $\uT$ in the statement of Lemma~\ref{lma:WHO-can-match-any-sample-path}, let $n$ be such that $T_n \geq \uT$. Let $T \triangleq T_n$.
By definition of expectation and since a typical sample path arises with probability 1 (also, the payoff per period is bounded by $1$ for \emph{all} sample paths, so atypical sample paths have no contribution to $\E[V_{T}]$), there must be a typical sample path such that $V_T$ under the sample path is at least $\E[V_T]$. Furthermore, using Lemma~\ref{lma:WHO-can-match-any-sample-path}, we have that there is a WHO policy $\piwho$ such that $W^N(\piwho) + \eps/2$ exceeds $V_T$ under the sample path. It follows that $W^N(\piwho)> \E[V_T] -\eps/2 >  \bV -\eps$, where we used that $\E[V_T] > \bV -\eps/2$. This completes the proof of the proposition.\hfill $\Box$
\end{proof}
\subsection{Proof of Proposition~\ref{prop:learningset}: $\cX^N$ is a convex polytope }\label{apx:polytope}
\begin{proof}{Proof of Proposition~\ref{prop:learningset}}
For the purpose of this proof, let
$$\overline{\cX}^N=\{Nx:x\in \cX^N\}.$$
We will show that $\overline{\cX}^N$ is a polytope, from which the result will follow. We will prove this using an induction argument. We will represent each point in $\overline{\cX}^N$ as a $|\cC|\times|\cS|$ matrix $(x(i,j))_{|\cC|\times|\cS|}$. Let worker types in $\cC$ be labeled as $i_1,\dots,i_{|\cC|}$ and let job types in $\cS$ be labeled as $j_1,\dots,j_{|\cS|}$.

Now clearly, $\overline{\cX}^0=\{(0)_{|\cC|\times|\cS|}\}$ which is a convex polytope. We will show that if $\overline{\cX}^N$ is a convex polytope, then $\overline{\cX}^{(N+1)}$ is one as well, and hence the result will follow. To do so, we decompose the assignment problem with $(N+1)$ jobs, into the first job and the remaining $N$ jobs. \\

A policy in the $(N+1)$- jobs problem is a choice of a randomization over the jobs in $\cS$ for the first job, and depending on whether a reward was obtained or not with the chosen job, a choice of a point in $\overline{\cX}^N$ to be achieved for the remaining $N$ jobs. Each such policy gives a point in $\overline{\cX}^{(N+1)}$. Suppose that $\eta_1\in\Delta(\cS)$ is the randomization chosen for job $1$, and let $W(j,1)\in \overline{\cX}^N$ and $W(j,0)\in\overline{\cX}^N$ be the points chosen to be achieved from job $2$ onwards depending on the job $j$ that was chosen, and whether a reward was obtained or not, i.e.. $W(.,.)$ is a mapping from $\cS\times \{0,1\}$ to the set $\overline{\cX}^N$. Then this policy achieves the following point in the $(N+1)$- jobs problem:
\begin{eqnarray*}
\begin{bmatrix}
\eta_1(j_1) & \eta_1(j_2) & \dots & \eta_1(j_{|\cS|})\\
\vdots & \vdots & \ddots & \vdots\\
\eta_1(j_1) & \eta_1(j_2) & \dots & \eta_1(j_{|\cS|})
\end{bmatrix}+\sum_{j\in\cS}\eta_1(j)\bigg(\textup{Diag}[A(.,j)]W(j,1)+\textup{Diag}[\bar{A}(.,j)]W(j,0)\bigg),
\end{eqnarray*}
where
$$\textup{Diag}[A(.,j)]=
\begin{bmatrix}
A(i_1,j) & 0 & \dots & 0\\
0 & A(i_2,j)& \dots & 0\\
\vdots & \vdots & \ddots &0\\
0 & 0 &\dots & A(i_{|\cC|},j)
\end{bmatrix}$$
and
$$
\textup{Diag}[\bar{A}(.,j)]=\begin{bmatrix}
1-A(i_1,j) & 0 & \dots & 0\\
0 & 1-A(i_2,j)& \dots & 0\\
\vdots & \vdots & \ddots &0\\
0 & 0 &\dots & 1-A(i_{|\cC|},j)
\end{bmatrix}.
$$
And thus we have
\begin{eqnarray*}
&&\overline{\cX}^{(N+1)}=\\
&&~~\bigg\{\begin{bmatrix}
\eta_1(j_1) & \eta_1(j_2) & \dots & \eta_1(j_{|\cS|})\\
\vdots & \vdots & \ddots & \vdots\\
\eta_1(j_1) & \eta_1(j_2) & \dots & \eta_1(j_{|\cS|})
\end{bmatrix}+\sum_{j\in\cS}\eta_1(j)\bigg(\textup{Diag}[A(.,j)]W(j,1)+\textup{Diag}[\bar{A}(.,j)]W(j,0)\bigg)\\
&&~~~: \eta_1\in \Delta(\cS),\,\, W(.,.)\in \overline{\cX}^N\bigg\}.
\end{eqnarray*}

Let $\mathbf{1}_s$ be the $|\cC|\times |\cS|$ matrix with ones along column corresponding to job type $j$ and all other entries $0$. Then the set
$$\mathcal{J}(s)=\bigg\{\mathbf{1}_s+ \textup{Diag}[A(.,j)]W(j,1)+\textup{Diag}[\bar{A}(.,j)]W(j,0): W(s,.)\in \overline{\cX}^N\bigg\},$$
is a convex polytope, being a linear combination of two convex polytopes, followed by an affine shift. It is easy to see that $\overline{\cX}^{(N+1)}$ is just a convex combination of the polytopes $\mathcal{J}(s)$ for $j\in\cS$, and hence $\overline{\cX}^{(N+1)}$ is a convex polytope as well.\hfill $\Box$
\end{proof}

\subsection{Proof of Proposition~\ref{prop:uniqueness_of_prices}: Uniqueness of prices under generalized imbalance}\label{apx:uniqueprices}
We now show that the optimal prices for the job types are unique under generalized imbalance. To prove this, we will first prove the following key lemma, which is an important ingredient in the proofs of many other results in the paper.

\begin{lemma}\label{lma:imbalancepath}
Suppose that the generalized imbalance condition is satisfied. Consider any feasible routing matrix $[x(i,j)]_{\cC\times\cS}$. Consider any job $j$ such that $\sum_{i\in\cC}\rho_ix(i,j)=\mu_j$. Then there is a (finite) sequence of types $(j_0,i_1,j_1,i_2,j_2,\dots,i_\ell,j_\ell)$ starting from $j_0=j$ which alternates between job types in $\cJ$ and worker types in $\cI$ with the following properties:
\begin{itemize}
\item The sequence starts with job type $j_0 = j$.
\item The sequence ends with a job type $j_\ell$ whose capacity is under-utilized (this job type is permitted to be $\kappa$).
\item The sequence consists of distinct job types, i.e.,  $j_k \neq j_{k'}$ for $k\neq k'$. In particular, the sequence is finite.
\item Every other job type in the sequence is operating at capacity/all jobs are being served:
    \begin{align*}
      \sum_{i \in \cI} \rho_{i} x(i,j_k) = \mu_{j_k} \qquad \forall k = 1, 2, \dots, \ell-1 \, .
    \end{align*}
    (All worker types are fully utilized $\sum_{j \in \cJ}  x(i,j_k) = 1 \ \forall i \in \cI$ by definition, since we formally consider an unassigned worker as being assigned to job type $\kappa$.)
\item For every consecutive pair of worker type and job type, there is a positive rate of jobs routed on that edge in $x$:
    \begin{align*}
      x(j_{k-1}, i_k) > 0 \quad \textup{and} \quad x(i_k, j_k) > 0 \qquad \forall k = 1, 2, \dots, \ell-1 \, .
    \end{align*}
\end{itemize}
\end{lemma}

\begin{proof}{Proof.}
Consider the bipartite graph $\cG= (\cJ, \cI, \cE)$ between the job types $\cJ$ and the worker types $\cI$ such that there is an edge between a job type $j$ and a worker type $i$ if and only if $x(i,j)>0$:
$$\cE \triangleq \{(j, i) \in \cJ \times \cI: x(i,j)>0\}\, .$$
Consider the connected component $\mathcal{C}$ of job type $j$ in $\cG$, and let $\cI_{\mathcal{C}} \subseteq \cI$ be the worker types in $\cI$ and let $\cJ_{\mathcal{C}} \subseteq \cJ$ be the job types in $\cJ$. Note that by definition of $\cG$, all worker (resp., job) types in $\mathcal{C}$ are assigned only to job (resp., worker) types in $\mathcal{C}$, and in particular we have
\begin{align}
  \sum_{j' \in \cJ_{\mathcal{C}}}  x(i,j') = \sum_{j' \in \cJ} x(i,j')= 1 \qquad \forall i \in \cI_{\mathcal{C}} \, .
  \label{eq:worker-types-utilized}
\end{align}

As the main step in proving the lemma, we will now prove by contradiction that there exists a job type in $\cJ_{\mathcal{C}}$ that is underutilized: Suppose there is no job type in $\cJ_{\mathcal{C}}$ that is underutilized, i.e.,
\begin{align}
      \sum_{i \in \cI} \rho_{i} x(i,j') = \sum_{i \in \cI_{\mathcal{C}}} \rho_{i} x(i,j') = \mu_{j' } \qquad \forall j' \in \cJ_\mathcal{C}\, .
\label{eq:job-types-utilized}
\end{align}
Then, it follows from \eqref{eq:worker-types-utilized} and \eqref{eq:job-types-utilized} that the total mass of worker types in $\cC$ is exactly equal to total capacity of job types in $\cC$
\begin{align*}
  \sum_{i \in \cI_{\mathcal{C}} } \rho_i = \sum_{i \in \cI_{\mathcal{C}} }  \sum_{j' \in \cJ_{\mathcal{C}} }\rho_i x(i,j') =\sum_{j' \in \cJ_{\mathcal{C}} } \sum_{i \in \cI_{\mathcal{C}} } \rho_i x(i,j')
  = \sum_{j' \in \cJ_{\mathcal{C}} } \mu_{j'}
\end{align*}
 But this is a contradiction since generalized imbalance \eqref{eqn:genimb} holds. Hence there exists an underutilized job type $j'$ that can be reached from $j$. 
 
 Take any shortest path from $j$ to $j'$ in $\cG$. It is easy to see that no job type appears twice in the path (if a job type appears twice, eliminating the part of the path between consecutive occurrences of the same job type will produce a shorter path).  Now construct a sequence of types as follows. Traverse the path starting from $j$ and stop the first time you encounter an underutilized job type. The sequence of types satisfies all the properties in the lemma.\hfill $\Box$
\end{proof}

We are now ready to prove Proposition~\ref{prop:uniqueness_of_prices}.
\begin{proof}{Proof of Proposition~\ref{prop:uniqueness_of_prices}.}
Recall that the dual to problem \eqref{eq:opt1}--\eqref{eq:opt3} can be written as
\begin{align}
  \textup{minimize} \;&\sum_{j\in\cS} \mu_j p_j  + \sum_{i \in \cC} \rho_i v_i \label{eq:dual1} \\
  \textup{subject to} \nonumber \\
  p_j + v_i &\geq A(i,j) \quad \forall i \in \cC, j\in\cS \, ,\label{eq:dual2}\\
  p_j &\geq 0 \quad \forall j\in\cS\, ,\nonumber \\
  v_i &\geq 0 \quad \forall i \in \cC \, .\nonumber
\end{align}
The dual variables are $(P,V)$ where ``job prices'' $P=(p_j)_{j\in\cS}$ and ``worker values'' $V=(v_i)_{i \in \cC}$.
Now consider a job type $j \in \cJ$ that has at least two different optimal prices. Then one of the prices must be positive and hence, by complementary slackness, this job is fully utilized in any primal solution. Pick a primal solution $x^*$. By Lemma~\ref{lma:imbalancepath}, there exists some other job $j'$ that is underutilized under $x^*$ and there exists a path P between $j$ and $j'$ in the bipartite graph between the worker and the job types (where an edge $(i,j)$ exists if $x^*(i,j)>0$), such that every job type on this path is fully utilized. Now since $j'$ is underutilized, its optimal price $p^*_{j'} = 0$ is uniquely determined. Next, for the worker type $i'$ such that $(i',j')$ is on path P, since $x^*(i',j') >0$, by complementary slackness, we deduce that \eqref{eq:dual2} is tight, i.e., $p_{j'} + v_{i'} = A(i',j')$. As a result, $v^*_{i'} = A(i',j')$ is uniquely determined. Continuing this argument along the path towards $j$, using complementary slackness, we deduce all the optimal dual variables for the nodes on P, including $p^*_j$ are uniquely determined. This contradicts the presumption that job $j$ that has at least two different optimal prices.\hfill $\Box$
\end{proof}
\section{Appendix to Section~\ref{sec:DEEM}}
\subsection{Computation of the policy $\alpha$ in the confirmation subphase}\label{apx:alpha}
In this section, we show how to compute $\alpha(i)$ as defined in Figure \ref{fig:def-alphai-ystar}. Denoting
$$\min_{i'\in \Str(i)}\sum_{j\in \cS}\alpha_j\KL(i,i'|j) \triangleq h$$ (where $h$ is non-negative), the optimization problem in \eqref{eqn:alpha} is the same as:
\begin{align*}
\min &\sum_{j\in \cS} \frac{\alpha_j}{h}\big(U(i)-[A(i,j)-p^*_j]\big)\\
\textrm{ s.t. } &\sum_{j\in \cS}\frac{\alpha_j}{h}\KL(i,i'|j) \geq 1\textrm{ for all }i'\in \Str(i),\\
& \sum_{j\in \cS}\frac{\alpha_j}{h} = \frac{1}{h},\,\,\frac{1}{h}\geq 0 \textrm{ and }\frac{\alpha_j}{h}\geq 0 \textrm{ for all }j.
\end{align*}
Now redefine $\frac{\alpha_j}{h}\triangleq \bar{\alpha}_j$ and $\frac{1}{h}\triangleq \bar{h}$ to obtain the linear program (LP):
\begin{align*}
\min &\sum_{j\in \cS} \bar{\alpha}_j\big(U(i)-[A(i,j)-p^*_j]\big)\\
\textrm{ s.t. } &\sum_{j\in \cS}\bar{\alpha}_j\KL(i,i'|j) \geq 1\textrm{ for all }i'\in \Str(i),\\
&\bar{\alpha}_j\geq 0 \textrm{ for all }j\, .
\end{align*}
Here we eliminated the constraints $\bar{h} = \sum_{j\in \cS}\bar{\alpha}_j$ and $\bar{h} \geq 0$ and the variable $\bar{h}$ in stating the LP: The reasoning is that the requirement $\bar{h} \geq 0$ follows automatically from $\bar{h} =  \sum_{j\in \cS}\bar{\alpha}^*_j$ and $\bar{\alpha}_j \geq 0$, so we can eliminate it. And then $\bar{h}$ appears only once in the entire LP, in the constraint $\bar{h} = \sum_{j\in \cS}\bar{\alpha}_j$, so we can eliminate that constraint and $\bar{h}$.

Given any optimal solution $\bar{\alpha}^*$ to the LP, we can recover $\bar{h}^* = \sum_{j\in \cS}\bar{\alpha}^*_j$, and thus $\alpha(i)_j =\frac{\bar{\alpha}^*_j}{\sum_{\hat{j}\in \cS}\bar{\alpha}_{\hat{j}}}$. Note that a feasible solution exists to this linear program as long as $\KL(i,i'|j)>0$ for some $j$ for each $i'$. When there are multiple solutions,  we choose the solution with the largest learning rate; i.e., we choose a solution with the smallest $\bar{h}^*$, i.e., $\sum_{j\in \cJ}\bar{\alpha}^*_j$. One way to accomplish this is to modify the objective to minimize $\sum_{j\in \cS} \bar{\alpha}_j\big(U(i)-[A(i,j)-p^*_j] + \tau \sum_{j\in \cS} \bar{\alpha}_j$ for some small $\tau>0$.

For small problem instances, we can simply evaluate all the finite extreme points of the constrained set $\{\bar{\alpha}:\,\sum_{j\in \cS}\bar{\alpha}_j\KL(i,i'|j) \geq 1\textrm{ for all }i'\in \Str(i); \,\,\bar{\alpha}_j\geq 0 \textrm{ for all }j\}$; i.e., all the extreme points such that $\bar{\alpha}_j<\infty$ for all $j$. This is sufficient because we can show that there always exists a finite solution to the linear program.  To see this, note that
$$\bar{\alpha}_j=\bar{\alpha}^m_j \triangleq \sum_{i'\in\Str(i)}\frac{1}{\KL(i,i'|j)}\ind_{\{\KL(i,i'|j)>0\}}$$ is feasible and finite. Further, for any solution such that $\alpha^*_j>\bar{\alpha}^m_j$, $\alpha^*_j$ can be reduced to $\bar{\alpha}^m_j$ without loss in objective while maintaining feasibility.

For the practical heuristic \DEEMplus, $\alpha_{\textup{conf}}(\MAP,\lambda)$ can be similarly computed as a solution to a linear program as follows:
Defining
$$1/h\triangleq\max_{i'\in\Str(\MAP)} \frac{\log N + \log R(\MAP,i')}{\sum_{j\in \cS}\alpha_j \KL(\MAP,i'|j)},$$
$\alpha_{\textup{confirm}}(\lambda,\MAP)$ is a solution to the optimization problem
\begin{align*}
\min &\sum_i \lambda(i) \sum_{j\in \cS} \frac{\alpha_j}{h}\big(U(i)-[A(i,j)-p^*_j]\big)\\
\textrm{ s.t. } &\sum_{j\in \cS}\frac{\alpha_j}{h}\KL(\MAP,i'|j) \geq \log N+ \log R(\MAP,i')   \textrm{ for all }\, i'\in\Str(\MAP),\\
& \sum_{j\in \cS}\frac{\alpha_j}{h} = \frac{1}{h},\,\,\frac{1}{h}\geq 0 \textrm{ and }\frac{\alpha_j}{h}\geq 0 \textrm{ for all }j.
\end{align*}
We can again define $\bar{\alpha}_j \triangleq \alpha_j/h$ and $\bar{h}\triangleq 1/h$ to obtain a linear program. For any optimal solution $\bar{\alpha}^*$, return $\alpha =(\bar{\alpha}^*_j/ \sum_{j'\in \cS}\bar{\alpha}^*_j)_{j\in\cJ}$. If there are multiple solutions, then we can enumerate all the finite extreme points that are solutions and pick the solution with the smallest $\bar{h}^*$.

\subsection{Proof of Proposition~\ref{prop:construct}: Existence of $y^*$}\label{apx:construct}
We now prove Proposition~\ref{prop:construct}, which ensures that the matrix $y^*$ satisfying \eqref{eq:y-support} and \eqref{eq:y-capfull} exists (part of the definition of DEEM, see Figure~\ref{fig:def-alphai-ystar}) exists. The proof of this result will crucially use Lemma~\ref{lma:imbalancepath} from Appendix~\ref{apx:uniqueprices}. We remark that the $y^*$ we construct in the proof satisfies $\lVert y^* - x^* \rVert = o(1)$.

\begin{proof}{Proof of Proposition~\ref{prop:construct}.}
The defining condition \eqref{eq:y-m} for $m(i,j)$ can be written as
\begin{align}
m(i,j) = \rho_i y(i,j) + (l(i,i)-\rho_i)y(i,j) + \mexp(i,j) +  \sum_{i' \neq i}l(i,i') y(i',j) \quad \forall i \in \cI, j \in \cJ \, .
\label{eq:m-y-rewritten}
\end{align}
A key fact that we will use is that all terms except the first one are $o(1)$. (This holds because the Explore phase of DEEM lasts only $O(\log N)$ periods in expectation, and because a fraction $o(1)$ of workers receive the wrong type label at the end of exploration.) In particular, $m(i,j) = \rho_i y(i,j) + o(1)$.


We will find a $y$ such that \eqref{eq:y-support} and \eqref{eq:y-capfull} hold and $\lVert y - x^* \rVert = o(1)$. Note that this will imply \eqref{eq:y-capslack}, since
$$\sum_i m(i,j) = \sum_i \rho_i y(i,j) + o(1) = \sum_i \rho_i x^*(i,j) + o(1) < \mu_j \quad \forall j \in \cJ \backslash \cJfull \, ,$$
for $N$ large enough.


The requirement \eqref{eq:y-capfull} can be written as a set of $|\cJfull|$ linear equations using \eqref{eq:m-y-rewritten}. Instead of working with $y$, we work with a ``re-scaled'' version $\tilde{y} \triangleq \rho^T y = [\rho_i y(i,j)]_{i,j}$, where $\tilde{y}$ must have non-negative entries with the $i$-th row summing to $\rho_i$. Similarly, let $\tilde{x}^* \triangleq \rho^T x^*$. In the rest of this proof, we write $\ty$ (and later also $\tx^*$) as a column vector with $|\cC||\cS|$ elements:
\begin{align*}
B \ty + \hat{\eps} = (\mu_j)_{j\in\cJfull} \, .
\end{align*}
Here we have $\lVert \hat{\eps} \rVert = o(1)$ and matrix $B$ can be written as $B= B_0 + B_\eps$, where
 \begin{align*}
   B_0(j,(i,j')) = \left \{
   \begin{array}{ll}
     1 & \mbox{ if } j' = j\\
     0 & \mbox{ otherwise.}
   \end{array}\right .
 \end{align*}
 and $\lVert B_\eps \rVert= o(1)$. Expressing $\ty$ as $\ty = \tx^* + z$, we are left with the following requirement for $z$,
\begin{align}
B z = - (B_\eps \tx^* + \hat{\eps})
\end{align}
using the fact that $B_0 x^* = (\mu_j)_{j\in\cJfull}$ by definitions of $B_0$ and $\cJfull$. We will look for a solution to this underdetermined set of equations with a specific structure: we want $z$ to be a linear combination of flows along $|\cJfull|$ paths coming from Lemma \ref{lma:imbalancepath}, one path $\lambda_j$ for each $j\in\cJfull$. Each $\lambda_j$ can be written as a column vector with $+1$'s on the odd edges (including the edge incident on $j$) and $-1$'s on the even edges. Let $\Lambda = [\lambda_j]_{j\in\cJfull}$ be the path matrix. Then $z$ with the desired structure can be expressed as $\Lambda \eta$, where $\eta$ is the vector of flows along each of the paths. Now note that $Bz = (B_0 + B_\eps)\Lambda \eta = (I+B_\eps \Lambda) \eta$. Here we deduced $B_0 \Lambda = I$ from  the fact that $\lambda_j$ is a path which has $j$ as one end point, and a worker or else a job not in $\cJfull$ as the other end point. Our system of equations reduces to
\begin{align*}
(I + B_\eps \Lambda) \eta = - (B_\eps \tx^* + \hat{\eps})\, ,
\end{align*}
Since $\lVert B_\eps \rVert = o(1) $, the coefficient matrix is extremely well behaved, being only $o(1)$ different from the identity, and we deduce that this system of equations has a unique solution $\eta^*$ that satisfies $\lVert \eta^* \rVert = o(1)$. This gives us $z^* = \Lambda \eta^*$ that is also of size $o(1)$, and supported on permissible edges as per \eqref{eq:y-support} since each of the paths is supported on permissible edges (Lemma \ref{lma:imbalancepath}). Thus, we finally obtain $\ty^* = \tx^* + z^*$, and corresponding $y^*$, possessing all the desired properties. Notice that the (permissible) edges on which $y^*$ differs from $x^*$ had strictly positive values in $x^*$ by Lemma \ref{lma:imbalancepath}, and hence this is also the case in $y^*$ for large enough $N$. \hfill $\Box$
\end{proof}

\section{Appendix to Section~\ref{sec:mainresult}}
\subsection{Proof of Theorem~\ref{thm:mainresult}: The main result}\label{apx:mainproof}
We refer the reader to the roadmap laid out in the proof-sketch in Section~\ref{sec:proofsketch} for help navigating the following proof.
For the rest of this section, recall that:
\begin{align*}
C(i)&=\min_{\alpha\in \Delta(\cS)} \frac{\sum_{j\in \cS} \alpha_j\big(U(i)-[A(i,j)-p^*_j]\big)}{\min_{i'\in \Str(i)}\sum_{j\in \cS}\alpha_j\KL(i,i'|j)}\, ;\qquad
C=\sum_{i\in\cC}\rho_iC(i)\, .
\end{align*}
Recall optimization problem \eqref{prob:mixedbandit}--\eqref{eq:feasibility}. We will first show the following lower bound on the difference between $V^*$ and $W^N$, which follows directly from Theorem 3.1 in \citet{agrawal1989asymptotically}.
\begin{proposition}\label{prop:lowerbound}
$$\limsup_{N\rightarrow\infty}\frac{N}{\log{N}}\big(V^*-W^N\big) \geq C.$$
\end{proposition}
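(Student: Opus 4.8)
The plan is to relax the capacitated problem to an uncapacitated one with the fixed externality price $p^*$, decompose it across worker types, and invoke the asymptotic lower bound of \citet{agrawal1989asymptotically} for each type's single-worker bandit. \textbf{First}, I would pass to the relaxation \eqref{approxpricedlearn}: for any WHO policy $\pi$ feasible for \eqref{prob:mixedbandit}--\eqref{eq:feasibility}, the capacity constraints \eqref{eq:capacity_constraints} together with $p^*(j)\ge 0$ make the penalty term in \eqref{approxpricedlearn} non-negative, so $W^N(\pi)\le W^N_{p^*}(\pi)\le W^N_{p^*}$; hence $V^*-W^N\ge V^*-W^N_{p^*}$, and it suffices to bound $V^*-W^N_{p^*}$ from below.

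\textbf{Second}, writing $U(i)=\max_{j\in\cS}\big(A(i,j)-p^*(j)\big)$ and using strong LP duality for the benchmark \eqref{eq:opt1}--\eqref{eq:opt3} (which, via generalized imbalance, gives a unique $p^*$ and the relation $V^*=\sum_{i\in\cC}\rho(i)U(i)+\sum_{j\in\cS}\mu(j)p^*(j)$), a direct expansion of the objective of \eqref{approxpricedlearn} yields the identity $W^N_{p^*}(\pi)=V^*-\sum_{i\in\cC}S_N^\pi(i)$, where $S_N^\pi(i)\triangleq\rho(i)\sum_{j\in\cS}x_\pi(i,j)\big(U(i)-[A(i,j)-p^*(j)]\big)\ge 0$ is the per-period externality-adjusted shortfall attributable to type $i$. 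So it remains to show that $\sum_{i\in\cC}S_N^\pi(i)\ge (C\log N/N)(1+\mathrm{o}(1))$ uniformly over feasible WHO policies $\pi$.

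\textbf{Third} (the crux), I would fix a type $i$ and observe that the restriction of $\pi$ to the type-$i$ sub-population is an adaptive policy for a stochastic bandit with $N$ pulls, in which arm $j$ returns a $\Bern(A(i,j))$ reward and the unknown arm-distribution vector lies in the finite family indexed by the worker types (distinct rows of $A$); since $p^*$ is a fixed vector, replacing $A(i,j)$ by $A(i,j)-p^*(j)$ merely shifts the optimal per-pull reward to $U(i)$ without changing the learning problem. Then either (a) $\pi$ has $\Omega(N)$ expected lifetime shortfall against some type $i_0$, in which case $\sum_i S_N^\pi(i)\ge S_N^\pi(i_0)=\Omega(1)\ge(C\log N/N)(1+\mathrm{o}(1))$ for $N$ large; or (b) $\pi$ has $\mathrm{o}(N)$ lifetime shortfall against every type, so it is uniformly good and \citet{agrawal1989asymptotically} applies: against true type $i$, $\pi$ must place $\Omega(\log N)$ pulls on job types that distinguish $i$ from each $i'\in\Str(i)$, and the cheapest such sampling incurs expected lifetime shortfall at least $C(i)\log N\,(1+\mathrm{o}(1))$ per unit mass, with $C(i)$ exactly the optimized ratio in \eqref{eq:Cdef} (and $C(i)>0$ exactly when $i$ lies in a difficult type pair, Definition~\ref{def:diff_pair}). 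Converting from per-worker lifetime to steady-state rate multiplies by the arrival rate $\hat\rho(i)=\rho(i)/N$, giving $S_N^\pi(i)\ge(\rho(i)C(i)\log N/N)(1+\mathrm{o}(1))$; summing over the finitely many types yields $\sum_i S_N^\pi(i)\ge(C\log N/N)(1+\mathrm{o}(1))$, and combining with the first two steps gives $V^*-W^N\ge(C\log N/N)(1+\mathrm{o}(1))$, i.e.\ $\limsup_{N\to\infty}(N/\log N)(V^*-W^N)\ge C$.

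I expect the main obstacle to be making the reduction in the third step airtight: one must justify that the continuum / steady-state WHO description of the type-$i$ sub-population corresponds, sample path by sample path, to a legitimate single-agent adaptive bandit policy of the kind covered by \citet{agrawal1989asymptotically}, with the confusing configurations being exactly the types in $\Str(i)$ and the emergent constant exactly $C(i)$; and one must check that splitting a type's mass into heterogeneous sub-populations cannot beat the single-agent bound, which holds because $S_N^\pi(i)$ is linear in the population mass while the Agrawal et al.\ bound is stated per unit mass. The duality step and the final summation over the finitely many types are then routine.
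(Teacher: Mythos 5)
Your proposal is correct and follows essentially the same route as the paper: relax the capacitated problem to the price-penalized problem \eqref{approxpricedlearn}, use weak duality ($W^N \le W^N_{p^*}$ since the penalty is non-negative under feasibility), and then invoke the lower bound of \citet{agrawal1989asymptotically} to get $V^*-W^N_{p^*}\ge (C\log N/N)(1+\mathrm{o}(1))$. The only difference is that you unfold the per-type decomposition and the uniformly-good-policy dichotomy that live inside the cited Theorem 3.1, whereas the paper simply cites that theorem as a black box.
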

\begin{proof}{Proof.}
Consider the relaxed problem \eqref{approxpricedlearn} restated here:
\begin{align*}
W^N_{p^*}=\max_{x \in \cX^N} \sum_{i\in\cC}\rho_i\sum_{j\in\cS} x(i,j) A(i,j)- \sum_{j\in\cS}p^*_j\bigg[\sum_{i\in \cC}\rho_ix(i,j)-\mu_j\bigg ].
\end{align*}
By a standard duality argument, we know that $W^N_{p^*}\geq W^N$.
Then from Theorem 3.1 in \citet{agrawal1989asymptotically},
it follows that
$$\limsup_{N\rightarrow\infty}\frac{N}{\log{N}}\big(V^*-W^N_{p^*}\big)\geq C.$$
The result then follows from the fact that $W^N\leq W^N_{p^*}$.\hfill $\Box$
\end{proof}

Now, let $W^N_{p^*}(\textup{DEEM})$ be the value attained by DEEM in optimization problem \eqref{approxpricedlearn}. 
We will prove an upper bound on the difference between $V^*$ and $W^N_{p^*}(\textup{DEEM})$, and the only property of the exploitation phase routing matrix we will use here is that $y^*(i,\cdot)$ is supported on $\cS(i)$ for all $i$. Note that the difference between $W^N_{p^*}(\textup{DEEM})$ and $V^*$ is the same as the difference between $$\sum_{i \in \cC} \rho_i \sum_{j\in\cS}x_{\tDEEM}(i,j) (A(i,j)-p^*_j),$$ and $\sum_{i\in\cC}\rho_iU(i)$.
The following is the result.
\begin{proposition}\label{prop:upperbound} Consider {any sequence of WHO policies $(\pi^*_N)_{N\geq1}$ that has an Explore phase identical to that of DEEM and such that the routing matrix $y$ used in the exploitation phase satisfies $y(i,\cdot)\in\Delta(\cS(i))$ for all $i \in \cI$.} Then,
$$\limsup_{N\rightarrow\infty}\frac{N}{\log{N}}\big(V^*-W^N_{p^*}(\pi^*_N)\big) \leq C.$$
Further, if $C=0$, then,
$$\limsup_{N\rightarrow\infty}\frac{N}{\log\log{N}}\big(V^*-W^N_{p^*}(\pi^*_N)\big) \leq K$$
where $K=K(\rho, \mu, A)\in [0,\infty)$ is some constant.
\end{proposition}
In order to prove this proposition, we will need two technical lemmas. The first lemma is the following.
\begin{lemma} \label{lma:core}
For a fixed worker, let $r_1,r_2,\cdots$ be i.i.d. random variables where $r_k$ is the outcome of choosing a job type $j_k\in\cS$ according to a distribution $\alpha\in \Delta(\cS)$, i.i.d. for each $k$. For any $\hat{i} \in \cI$, let $\mathrm{E}_{\hat{i}}[\cdot]$ denote expectations and $\mathrm{P}_{\hat{i}}(\cdot)$ denote probabilities of events when the true worker type is $\hat{i}$.
Suppose $i\in \cC$ and $\cB \subseteq \cC\setminus \{ i\}$ are such that
$$ \sum_{j\in\cS}\alpha_j \KL(i,i'|j)>0$$
for each $i'\in \cB$. Let
$$\Lambda^{\cB}_k(i)\triangleq \min_{i'\in \cB}\lambda_k(i)/\lambda_k(i')\, ,
\textup{ where } \lambda_k(\hat{i}) \triangleq \rho_{\hat{i}}\prod_{k'=1}^k \big[(A(\hat{i},j_{k'})\mathbf{1}_{\{r_{k'}=1\}}+(1-A(\hat{i},j_{k'}))\mathbf{1}_{\{r_{k'}=0\}})\big] \ \forall \hat{i} \in \cI \, .
$$ Then 
\begin{enumerate}
\item
$$ \limsup_{m \rightarrow\infty}\frac{\mathrm{E}_{i}\big [\inf\!\big \{k> 0: \Lambda^{\cB}_k(i)\geq m \big\}\big ]}{\log{m}}\leq  \frac{1}{\min_{i'\in \cB}\sum_{j\in\cS}\alpha_j \KL(i,i'|j)},$$
\item For any $a>0$,
$$\mathrm{P}_{i'}\big ( \max_{1 \leq k \leq N}\lambda_k(i)/\lambda_k(i') \, \geq \, a  \big )\leq \frac{\rho_i/\rho_{i'}}{a}\, .$$
\end{enumerate}
\end{lemma}
\begin{proof}{Proof.}
In order to prove the first statement, we need the following fact from \cite{agrawal1989asymptotically}.  Let $X_1,X_2,\ldots$ be i.i.d. r.v.'s on some finite state space $\mathcal{X}$, with marginals $p(x)$. Let $f^{(i)}:\mathcal{X}\rightarrow\mathbb{R}$ be such that $0<\mathrm{E}(f^{(i)}(X_1)<\infty$, $i\in I$, finite. Let $S^{(i)}_k = f^{(i)}(X_1) +f^{(i)}(X_2)+\cdots+f^{(i)}(X_k)$, $L^{(i)}_a = \sum_{k=1}^{\infty}\mathbf{1}_{\{\inf_{t\geq k}S^{(i)}_t\leq a\}}$ (this is the last time $k$ that $S^{(i)}_k$ takes value $\leq$ a) , and $L_a=\max_{i\in I}L^{(i)}_a$ (this is the time after which $S^{(i)}_k>a$ for all $i\in I$). Then it is shown in Lemma 4.3 in \cite{agrawal1989asymptotically} that
$$\limsup_{a\rightarrow\infty} \frac{\mathrm{E}(L_a)}{a}\leq \frac{1}{\min_{i\in I}  \mathrm{E}(f^{(i)}(X_1))}.$$
If we define $M_a= \inf\{k>0:\min_{i\in I}S^{(i)}_k > a\}$ (this is the time $k$ at which $S^{(i)}_k>a$ for all $i\in I$ for the first time), then clearly $M_a\leq L_a+1$, and thus
$$\limsup_{a\rightarrow\infty} \frac{\mathrm{E}(M_a)}{a}\leq \frac{1}{\min_{i\in I}  \mathrm{E}(f^{(i)}(X_1))}.$$
To show that the first statement follows from this, first fix $i\in \cI$ and then map
\begin{align*}
X_k &= (j_k,r_k),\\
I&=\mathcal{B},\\
f^{(i')}(X_k) &= \log\bigg(\frac{A(i,j_{k})\mathbf{1}_{\{r_{k}=1\}}+(1-A(i,j_{k}))\mathbf{1}_{\{r_{k}=0\}}}{A(i',j_{k})\mathbf{1}_{\{r_{k}=1\}}+(1-A(i',j_{k}))\mathbf{1}_{\{r_{k}=0\}}}\bigg),\\
\mathrm{E}(f^{(i')}(X_k)) &=  \sum_{j\in\cS}\alpha_j \KL(i,i'|j)>0,\\
S^{(i')}_k & = \log\bigg(\frac{\lambda_k(i)\rho_{i'}}{\lambda_k(i')\rho_i}\bigg).
\end{align*}
We can thus conclude that
$$ \limsup_{m \rightarrow\infty}\frac{\mathrm{E}_{i}\big [\inf\!\big \{k> 0: \lambda_k(i)/(\lambda_k(i')\geq m (\rho_i/\rho_{i'}) \textrm{ for all }i'\in\cB \big\}\big ]}{\log{m}}\leq  \frac{1}{\min_{i'\in \cB}\sum_{j\in\cS}\alpha_j \KL(i,i'|j)}.$$
Now defining $m'=m\max_{i'\in\cB}\rho_i/\rho_{i'}$, we have that
\begin{align*}
&\inf\!\big \{k> 0: \lambda_k(i)/(\lambda_k(i')\geq m (\rho_i/\rho_{i'}) \textrm{ for all }i'\in\cB \big\}\\
&~~\leq \inf\!\big \{k> 0: \lambda_k(i)/(\lambda_k(i')\geq m'\textrm{ for all }i'\in\cB \big\}\\
&~~=\inf\!\big \{k> 0: \Lambda^{\cB}_k(i)\geq m' \big\}.
\end{align*}
Thus we have,
\begin{align*}
\limsup_{m' \rightarrow\infty}\frac{\mathrm{E}_{i}\big [\inf\!\big \{k> 0: \Lambda^{\cB}_k(i)\geq m' \big\}\big ]}{\log{m'}}&\leq\limsup_{m' \rightarrow\infty}\frac{\mathrm{E}_{i}\big [\inf\!\big \{k> 0: \Lambda^{\cB}_k(i)\geq m' \big\}\big ]}{\log{m'}-\log(\max_{i'\in\cB}\rho_i/\rho_{i'})}\\
&\leq \frac{1}{\min_{i'\in \cB}\sum_{j\in\cS}\alpha_j \KL(i,i'|j)}.
\end{align*}
To show the second statement, observe that if the true worker type is $i$, then the sequence
$$\bigg(\frac{\lambda_k(i')\rho_i}{\lambda_k(i)\rho_{i'}}\bigg)_{k>0}$$
is a martingale. The statement then follows from Doob's martingale inequality (see \cite{ross2008stochastic}).\hfill $\Box$
\end{proof}
The next technical lemma considers $V$ random walks, each with positive drift. It considers the event $E$ that the running minimum of these random walks descends to some specific lower level $\underline{b}$, and shows that the expectation of the (time at which $E$ occurs)$\times$(the indicator of event $E$) is bounded uniformly  in $\ub$ and the starting values of each of the $V$ random walks. 

\begin{lemma}\label{lma:driftreversal}
Fix a positive integer $V$ and positive reals $M$, $\ub$ and $\ob$ such that $\ub < \ob$. For each $j=1,\cdots, V$, fix reals $k^{(j)} \in (\ub, \ob)$ and $m^{(j)}>0$, and  let the random walk $S^{(j)}_n$ have a deterministic starting value $k^{(j)}$ and i.i.d. steps $X^{(j)}_1, X^{(j)}_2, \ldots $ with mean $\mathrm{E}(X^{(j)}_i)= m^{(j)}$ and bounded as $|X^{(j)}_i|\leq M$,  
$$S^{(j)}_n\triangleq  k^{(j)} +\sum_{i=1}^n X^{(j)}_i \, .$$
Let $T\triangleq \inf\{n: \min_{1 \leq j \leq V} S^{(j)}_n < \ub \}$ and let $E\triangleq \{T<\infty\}$.
 Then $$\mathrm{E}[T\mathbf{1}_{E}]\leq Z$$
for some $Z = Z (V, M, (m^{(j)})_{j \leq V}) <\infty$ that does not depend on $\ub$, $\ob$, or $(k^{(j)})_{j \leq V}$.
\end{lemma}

\begin{proof}{Proof.}
 Define $k^{(j)}-\ub\triangleq z^{(j)}$.  If we define $T^{(j)}=\inf\{n: S^{(j)}_n < \ub \}$ and $E^{(j)} = \{T^{(j)}<\infty\}$ and, then we have $E\subseteq \cup_jE^{(j)}$, and thus we have
$\mathrm{E}[T\mathbf{1}_E]\leq \sum_{j=1}^V \mathrm{E}[T\mathbf{1}_{E^{(j)}}]\leq \sum_{j=1}^V \mathrm{E}[T^{(j)}\mathbf{1}_{E^{(j)}}]$. Now we have
\begin{align*}
\mathrm{E}[T^{(j)}\mathbf{1}_{E^{(j)}}] &= \sum_{n=1}^\infty  n \mathrm{P}(T^{(j)}=n)\\
&\leq \sum_{n=1}^\infty  n \mathrm{P}\Big(\sum_{i=1}^n X^j_i \leq -z^{(j)}\Big)\\
&\leq \sum_{n=1}^\infty  n \exp\Big(\tfrac{-(nm^{(j)}+z^{(j)})^2}{4nM^2}\Big)\\
&=\sum_{n=1}^\infty  n \exp\Big(-\tfrac{n(m^{(j)})^2}{4M^2} - \tfrac{m^{(j)}z^{(j)}}{2M^2} - \tfrac{(z^{(j)})^2}{4nM^2}\Big)\\
&\leq \sum_{n=1}^\infty  n \exp\left (-\tfrac{n(m^{(j)})^2}{4M^2}\right ) = Z(m^{(j)},M) < \infty,
\end{align*}
where the second inequality results from the Hoeffding bound. Taking $Z= \sum_{j=1}^VZ(m^{(j)},M)$ proves the result.\hfill $\Box$
\end{proof}
We can now proceed to prove Proposition \ref{prop:upperbound}.
\begin{proof}{Proof of Proposition \ref{prop:upperbound}.}
Let $X$ denote the type of the worker. Let $\textup{Reg}(i)$ denote the expected total regret over the lifetime of a worker on the event $\{X=i\}$, defined as
$$\textup{Reg}(i)=N\max_{j\in\cS}[A(i,j)- p^*_j]-N\sum_{j\in\cS}x_{\pi^*}(i,j)[A(i,j)- p^*_j)].$$
Here $Nx_{\pi^*}(i,j)$ is the expected total number of times a job of type $j$ is allotted to a worker of type $i$ under the policy $\pi^*(N)$. We will refer to the above quantity as just regret.  For the rest of the proof, all the expectations are on the event $\{X=i\}$. The proof will utilize the fact that the log of the ratio of the posteriors, $\log(\lambda_k(i)/\lambda_k(i'))$, for any $i$ and $i'$ is a random walk. That is, if $\alpha^{(k)}$ is the probability distribution over job types chosen at opportunity $k$, $j_k$ is the random job chosen, and $r_k$ is the random reward obtained, then $$\log(\frac{\lambda_{k+1}(i)}{\lambda_{k+1}(i')}) - \log(\frac{\lambda_k(i)}{\lambda_k(i')}) = \log\frac{A(i,j_k)\mathbf{1}_{\{r_k=1\}}+(1-A(i,j_k))\mathbf{1}_{\{r_k=0\}}}{A(i',j_k)\mathbf{1}_{\{r_k=1\}}+(1-A(i',j_k))\mathbf{1}_{\{r_k=0\}}}\triangleq \Delta_k,$$
where the random variables $\big \{\Delta_k\big \}_k$ are independent random variables with a finite support (since $r_k$ and $j_k$ take finite values), and with mean $\sum_j\alpha^{(k)}_j\KL(i,i'|j)$, which we will refer to as the drift of the random walk at opportunity $k$. Note here that if $\sum_j\alpha^{(k)}_j\KL(i,i'|j)=0$ then since $\KL(i,i'|j)\geq 0$, it must be that $\KL(i,i'|j) = 0$ for all $j$ such that $\alpha^k_j >0$, and in this case we must have $A(i,j)= A(i,j')$ for all such $j$. Thus $\Delta_k= 0$, i.e., the if the drift of the random walk is $0$ at some $k$ then the random walk has stopped. Additionally, recall that $\Delta_0 = \log(\rho_i/\rho_{i'})$.

Our goal is to compute an upper bound on $\textup{Reg}(i)$. To do so we first compute the expected regret incurred till the end of the exploration phase in our algorithm. Denote this by $\textup{Reg}_{\textup{xplr}}(i)$. Below, we will find an upper bound on this regret assuming that the worker performs an unbounded number of jobs. Clearly, the same bound holds on the expected regret until the end of exploration phase if the worker leaves after $N$ jobs.

Our strategy is as follows: we will decompose the regret till the end of exploration into the regret incurred till the first time one of the following two events occurs:
\begin{enumerate}
\item Event $\textup{Good}$:  $\min_{i'\neq i}\log(\lambda_k(i)/\lambda_k(i'))\geq \log\log N$ 
\item Event $\textup{Bad}$:   $\min_{i'\neq i}\log(\lambda_k(i)/\lambda_k(i'))\leq -\log\log N$ 
\end{enumerate}
followed by the residual regret, which will depend on which event occurred first (one of these two events will occur with probability $1$). The event $\textup{Good}$ occurs when the algorithm exits the Guessing mode by correctly identifying $i$ as a guess, and the event $\textup{Bad}$ occurs when for some $i'\neq i$, the log posterior odd $\log(\lambda_k(i)/\lambda_k(i'))$ crosses the lower threshold of $-\log\log N$.

We will compute two different upper bounds, depending on two different regimes of initial prior distributions of the different types. 
Define the following two sets:
\begin{align*}
\mathcal{L}_1\triangleq \bigg\{\rho'\in\Delta(\cI):& -\log\log N \leq \min_{i'\neq i}\log(\rho'_i/\rho'_{i'})\leq \log\log N \textrm{ and}  \\
&\min_{i'\neq i}\log(\rho'_i/\rho'_{i'})\geq \min_{i'\neq i}\log(\rho_i/\rho_{i'})\bigg\} \textrm{ and }\\
\mathcal{L}_2\triangleq \bigg\{\rho'\in\Delta(\cI):& -\log\log N \leq \min_{i'\neq i}\log(\rho'_i/\rho'_{i'})\leq \log\log N \textrm{ and}\\
& \min_{i'\neq i}\log(\rho'_i/\rho'_{i'})< \min_{i'\neq i}\log(\rho_i/\rho_{i'})\bigg\}.
\end{align*}
Let $\textup{Reg}^{(1)}(i)$ ($\textup{Reg}^{(2)}(i)$) be the highest expected regret incurred over all possible priors in $\mathcal{L}_1$ ($\mathcal{L}_2$). Then clearly, $\textup{Reg}_{\textup{xplr}}(i)\leq \textup{Reg}^{(1)}(i)$.


Let $G(i)$ denote the maximum expected regret incurred by the algorithm until one of $\textup{Good}$ or $\textup{Bad}$ occurs, where the maximum is taken over all possible starting priors in $\mathcal{L}_1\cup \mathcal{L}_2$. For convenience, we denote $\textup{Good}<\textup{Bad}$ as the event that $\textup{Good}$ occurs before $\textup{Bad}$ and vice versa (we use the same notation to signify the precedence order of any two events).
Thus we have for $s\in\{1,2\}$,
\begin{align*}
\textup{Reg}^{(s)}(i) &\leq G(i) + \sup_{\rho'\in \mathcal{L}_s}\mathrm{P}(\textup{Good}<\textup{Bad}\mid\rho')\mathrm{E}(\textrm{Residual regret}\mid \textup{Good},\rho') \\
&~~+ \sup_{\rho'\in \mathcal{L}_s}\mathrm{P}(\textup{Bad}<\textup{Good}\mid \rho')\mathrm{E}(\textrm{Residual regret}\mid \textup{Bad},\rho').
\end{align*}
$G(i)$ is upper-bounded as follows.
$$G(i)\leq \mathrm{E}(\inf\{k>0: \min_{i\neq i'}\log (\lambda_k(i)/\lambda_k(i'))\geq 2\log \log N\}\})=\mathrm{O}(\log\log N).$$
This inequality follows from Lemma~\ref{lma:core}, since if neither condition for event $\textup{Good}$, nor for event $\textup{Bad}$ is satisfied, then the policy in the Guessing mode, and thus all job types are utilized with positive probability. Hence the condition in the Lemma of the requirement of a positive learning rate for each distinction is satisfied.  Also, from the second statement in Lemma~\ref{lma:core}, since the posteriors in $\mathcal{L}_1$ are such that $\min_{i'\neq i}\log(\rho'_i/\rho'_{i'})\geq \min_{i'\neq i}\log(\rho_i/\rho_{i'})$, we have that $\mathrm{P}(\textup{Bad}<\textup{Good}\mid \rho')\leq \mathrm{P}(\textrm{Bad ever occurs})\leq \mathrm{O}(1)/\log N$. Finally we have $\sup_{\rho'\in \mathcal{L}_2}\mathrm{P}(\textup{Bad}<\textup{Good}\mid \rho')\leq w$ for some $w\in [0,1)$. This follows from the fact that the probability that a random walk with a positive drift ever returns to its starting point is strictly less than 1.
We thus have
\begin{align}
\textup{Reg}^{(1)}(i) &\leq \mathrm{O}(\log\log N)  + \sup_{\rho'\in \mathcal{L}_1}\mathrm{E}(\textrm{Residual regret}\mid \textup{Good},\rho')\nonumber \\
&~~+ \frac{\mathrm{O}(1)}{\log N}\sup_{\rho'\in \mathcal{L}_1}\mathrm{E}(\textrm{Residual regret}\mid \textup{Bad},\rho').\\
\textup{Reg}^{(2)}(i) &\leq \mathrm{O}(\log\log N)  + \sup_{\rho'\in \mathcal{L}_2}\mathrm{E}(\textrm{Residual regret}\mid \textup{Good},\rho')\nonumber \\
&~~+ w\sup_{\rho'\in \mathcal{L}_2}\mathrm{E}(\textrm{Residual regret}\mid \textup{Bad},\rho').
\end{align}

We next find upper bounds on $\sup_{\rho'\in \mathcal{L}_s}\mathrm{E}(\textrm{Residual regret}\mid \textup{Good},\rho')$ and $\sup_{\rho'\in \mathcal{L}_s}\mathrm{E}(\textrm{Residual regret}\mid \textup{Bad},\rho')$ for $s=1,2$. First, consider $\sup_{\rho'\in \mathcal{L}_s}\mathrm{E}(\textrm{Residual regret}\mid \textup{Good},\rho')$. Now the residual regret after event $\textup{Good}$ has occured depends on which of the following two events happens next:

\begin{enumerate}
\item Event $\textup{Revert}$:  $\min_{i'\neq i}\log(\lambda_k(i)/\lambda_k(i'))< \log\log N$ 
\item Event $\textup{Confirm}$:  $i$ gets confirmed, i.e.,  $\min_{i'\in \Str(i)}\log(\lambda_k(i)/\lambda_k(i'))>\log N$ 
\end{enumerate}
Again conditional on event $\textup{Good}$, one of the two events will occur with probability $1$. We have
\begin{align*}
&\sup_{\rho'\in \mathcal{L}_s}\mathrm{E}(\textrm{Residual regret}\mid \textup{Good},\rho') \\
&~~=\sup_{\rho'\in \mathcal{L}_s}\bigg[\mathrm{E}(\textrm{Residual regret}\mid \textup{Good},\textup{Revert}<\textup{Confirm},\rho') \mathrm{P}(\textup{Revert}<\textup{Confirm}\mid \textup{Good},\rho')\\
&~~~~+ \mathrm{E}(\textrm{Residual regret}\mid \textup{Good},\textup{Confirm}<\textup{Revert},\rho') \mathrm{P}(\textup{Confirm}<\textup{Revert}\mid \textup{Good},\rho')\bigg].
\end{align*}
Now from Lemma~\ref{lma:driftreversal} it follows that
\begin{align*}
&\mathrm{E}(\textrm{Residual regret}\mid \textup{Good},\textup{Revert}<\textup{Confirm},\rho') \mathrm{P}(\textup{Revert}<\textup{Confirm}\mid \textup{Good},\rho')\\\
&~~= \mathrm{E}(\textrm{Residual regret }\ind_{\{\textup{Revert}<\textup{Confirm}\}} \mid  \textup{Good},\rho')\\
&~~\leq M + \textup{Reg}^{(1)}(i) \mathrm{P}(\textup{Revert}<\textup{Confirm}\mid \textup{Good},\rho')
\end{align*}
for some constant $M>0$ that does not depend on $\rho'$ or $N$. To see this, note that $\textup{Revert}<\textup{Confirm}$ is the event that, starting from some values between $\log\log N$ and $\log N$, the random walk $\log(\lambda_k(i)/\lambda_k(i'))$ for some $i'\neq i$ crosses the lower threshold $\log\log N$ before the random walks $\log(\lambda_k(i)/\lambda_k(i''))$ for all $i''\in \Str(i)$ cross the upper threshold $\log N$. Now when all the random walks (corresponding to all $i''\in\cI$) are between these two thresholds, the job distribution $\alpha_k$ equals $\alpha(i)$ for all $k$. In particular, the drift for the random walks corresponding to $i''\in\Str(i)$ is strictly positive. Further, as we argued earlier, if the drift for any of the other random walks is $0$, then that random walk has stopped, and such random walks can be ignored. Thus the conditions of Lemma~\ref{lma:driftreversal} are satisfied, and hence $\mathrm{E}(\textrm{(Time till $\textup{Revert}$) }\ind_{\{\textup{Revert}<\textup{Confirm}\}} |\textup{Good},\,\rho') \leq\mathrm{E}(\textrm{(Time till $\textup{Revert}$) }\ind_{\{\textup{Revert} \textrm{ happens}\}} |\textup{Good},\,\rho')\leq  G <\infty$. Since the regret per unit of time is bounded, the deduction follows.

Next, we have
\begin{align*}
& \mathrm{E}(\textrm{Residual regret}\mid \textup{Good},\textup{Confirm}<\textup{Revert},\rho') \\
&~~\leq \; \mathrm{E}(\inf\{k>0: \min_{i' \in \Str(i)}\lambda_k(i)/\lambda_k(i')\geq  N\})\sum_{j\in \cS} \alpha_j\big(U(i)-[A(i,j)-p^*_j]\big).
\end{align*}
Thus we have
\begin{align*}
&\sup_{\rho'\in \mathcal{L}_s}\mathrm{E}(\textrm{Residual regret}\mid \textup{Good},\rho')\\
&\leq \; \mathrm{O}(1)+ \sup_{\rho'\in \mathcal{L}_s}\bigg[\mathrm{P}(\textup{Revert}<\textup{Confirm}\mid \textup{Good},\rho')\textup{Reg}^{(1)}(i)\\
&~~+(1-\mathrm{P}(\textup{Revert}<\textup{Confirm}\mid \textup{Good},\rho'))\times\\
&~~\mathrm{E}(\inf\{k>0: \min_{i' \in \Str(i)}\lambda_k(i)/\lambda_k(i')\geq  N\})\big[\sum_{j\in \cS} \alpha_j\big(U(i)-[A(i,j)-p^*_j]\big)\big]\bigg].
\end{align*}
Hence, finally we have
\begin{align*}
&\sup_{\rho'\in \mathcal{L}_s}\mathrm{E}(\textrm{Residual regret}\mid \textup{Good},\rho')\leq \mathrm{O}(1)+w_s\textup{Reg}^{(1)}(i)\\
&~~+(1-w_s)\mathrm{E}(\inf\{k>0: \min_{i' \in \Str(i)}\lambda_k(i)/\lambda_k(i')\geq  N\})\big[\sum_{j\in \cS} \alpha_j\big(U(i)-[A(i,j)-p^*_j]\big)\big],
\end{align*}
for some $w_s\in (0,1)$ for $s=1,2$,  since $\sup_{\rho'\in \mathcal{L}_s}\mathrm{P}(\textup{Revert}<\textup{Confirm}\mid \textup{Good},\rho')<1$ and $\inf_{\rho'\in \mathcal{L}_s}\mathrm{P}(\textup{Revert}<\textup{Confirm}\mid \textup{Good},\rho')>0$.

Next, consider $\sup_{\rho'\in \mathcal{L}_s}\mathrm{E}(\textrm{Residual regret}\mid \textup{Bad},\rho')$. Now the residual regret after event $\textup{Bad}$ has occured depends on which of the following two events happens next:

\begin{enumerate}
\item Event $\textup{Revert}$:  $\min_{i'\neq i}\log(\lambda_k(i)/\lambda_k(i'))< \log\log N$  or 
\item Event $\textup{Bad-confirm}$:  Some $i'\neq i$ gets confirmed, i.e.,  $\min_{i''\in \Str(i')}\log(\lambda_k(i')/\lambda_k(i''))>\log N$. 
\end{enumerate}
Again, conditional on $\textup{Bad}$, one of the two events will occur with probability $1$. Let $K(i)$ be the maximum expected regret incurred till either $\textup{Revert}$ or $\textup{Bad-confirm}$ occurs given that $\textup{Bad}$ has occurred and the starting likelihoods were in $\mathcal{L}_s$. Note that if $\textup{Bad-confirm}<\textup{Revert}$ then the exploration phase ends and hence there is no residual regret (Although, note that if $i'$ is such that $i\in \Str(i')$, then $\mathrm{P}(\textup{Bad-confirm}<\textup{Revert}\mid\textup{Bad},\, \rho')\leq \mathrm{O}(1)/N$ from the second statement in Lemma~\ref{lma:core}.). Then we have
$$\sup_{\rho'\in \mathcal{L}_s}\mathrm{E}(\textrm{Residual regret}\mid \textup{Bad},\rho')  \leq K(i) + \sup_{\rho'\in \mathcal{L}_s}\mathrm{P}(\textup{Revert}<\textup{Bad-confirm}\mid\textup{Bad},\, \rho') \textup{Reg}^{(2)}(i).$$
We first show that if there is a type $i'$ such that $i \in \cC \setminus \Str(i')$, then $K(i)\leq \mathrm{O}(\log N)$, where as if there is no such type, then $K(i)=\mathrm{O}(1)$. Let $T(i)$ be the maximum expected time until either $\textup{Revert}$ or $\textup{Bad-confirm}$ occurs given that $\textup{Bad}$ has occurred and the starting likelihoods were in $\mathcal{L}_s$. Clearly $K(i)\leq T(i)$ since the price adjusted payoffs lie in $[0,1]$. Now, let $T_1$ be the time spent after $\textup{Bad}$ has occurred, before $\textup{Revert}$ or $\textup{Bad-confirm}$ occurs, while either a) algorithm is in the Guessing mode or b) the algorithm is in the Confirmation mode for some guessed type $i'$ such that $\alpha(i')_j>0$ for some $j$ such that $\KL(i,i'|j)>0$. Under this case, we will say that the algorithm is in state 1, and let $\mathbf{1}_k$ be the event that the algorithm is in state 1 at time $k$. Next let $T_2$ be the time spent after $\textup{Bad}$ has occurred, before $\textup{Revert}$ or $\textup{Bad-confirm}$ occurs, while the algorithm is in the Confirmation mode for some guessed type $i'$ such that $\alpha(i')_j=0$ for all $j$ such that $\KL(i,i'|j)>0$ (clearly this can happen only for $i'$ such that $i\in \cC \setminus \Str(i')$; thus if such an $i'$ doesn't exist, then $T_2=0$). Under this case, we will say that the algorithm is in state 2, and let $\mathbf{2}_k$ be the event that the algorithm is in state 2 at time $k$. Now we clearly have $T(i) \leq \sup_{\rho'\in \mathcal{L}_s} \textup{E}(T_1\mid \textup{Bad},\,\rho')+\sup_{\rho'\in \mathcal{L}_s} \textup{E}(T_2\mid \textup{Bad},\,\rho')$.

Let $\Gamma_k(i) \triangleq \min_{i'\neq i}\log(\lambda_k(i)/\lambda_k(i'))$. Then observe that $\textup{E}[(\Gamma_{k+1}(i)-\Gamma_{k}(i))\ind_{\mathbf{1}_k}\mid \textup{Bad},\,\rho']>\psi$ for some $\psi>0$ that depends only on the primitives of the problem and $\textup{E}[(\Gamma_{k+1}(i)-\Gamma_{k}(i))\ind_{\mathbf{2}_k}\mid \textup{Bad},\,\rho']=0$; i.e, when the algorithm is in state 1, the drift of $\Gamma_k(i)$ is strictly positive where as when the algorithm is in state 2, then $\Gamma_k(i)$ does not change.

Now consider $\textup{E}(T_1\mid \textup{Bad},\,\rho')$. Let $k^*$ be the opportunity that $\textup{Bad}$ occurred for the first time. Then clearly $\Gamma_{k^*}(i) = -\log\log N -\epsilon$, where $\epsilon\geq 0$ is such that $\epsilon <M'$, where $M'$ is a constant depending only on the problem instance. Thus $\textup{P}(T_1>t\mid \textup{Bad},\,\rho') \leq \textup{P}(\Gamma_{k'}(i)- \Gamma_{k^*}(i) \leq \epsilon\mid \textup{Bad},\,\rho')$ for some $k'>k^*$ such that the algorithm has been in state 1 exactly $t$ times at opportunity $k'$. Thus our observation above, in particular that $\psi>0$, implies by a standard concentration bound that $\textup{P}(T_1>t\mid \textup{Bad},\,\rho') \leq \exp(-ct)$ for some $c>0$. Thus $\textup{E}(T_1\mid \textup{Bad},\,\rho')= \mathrm{O}(1)$.

Next consider $\textup{E}(T_2\mid \textup{Bad},\,\rho')$.  Consider the successive returns of the algorithm to state 2. Conditional on the algorithm having entered state $2$, the expected time spent in that state is bounded by the expected time till the guessed type $i'$ is confirmed, which is $\mathrm{O}(\log N)$ from Lemma~\ref{lma:core}, and the conditional probability that $i'$ gets confirmed is some $q>0$. Thus the total expected number of returns to state $2$ is bounded by $1/q$. Thus $\textup{E}(T_2\mid \textup{Bad},\,\rho')= \mathrm{O}(\log N)$ as well. Thus $K(i)\leq \textup{O}(\log N)$ and we have
$$\sup_{\rho'\in \mathcal{L}_s}\mathrm{E}(\textrm{Residual regret}\mid \textup{Bad},\rho')  \leq \textup{O}(\log N) + \textup{Reg}^{(2)}(i).$$
And thus we finally have
\begin{align}
\textup{Reg}^{(1)}(i) &\leq \mathrm{O}(\log\log N)  + w_1\textup{Reg}^{(1)}(i) + \frac{1}{\log N}\big(\mathrm{O}(\log N+ \textup{Reg}^{(2)}(i)\big)\nonumber\\
&+(1-w_1)\mathrm{E}(\inf\{k>0: \min_{i' \in \Str(i)}\lambda_k(i)/\lambda_k(i')\geq  N\})\big[\sum_{j\in \cS} \alpha_j\big(U(i)-[A(i,j)-p^*_j]\big)\big]; \\
\textup{Reg}^{(2)}(i) &\leq \mathrm{O}(\log\log N) + w_2\textup{Reg}^{(1)}(i)+ w\mathrm{O}(\log N)+ w\textup{Reg}^{(2)}(i)\nonumber\\
&  +(1-w_2)\mathrm{E}(\inf\{k>0: \min_{i' \in \Str(i)}\lambda_k(i)/\lambda_k(i')\geq  N\})\big[\sum_{j\in \cS} \alpha_j\big(U(i)-[A(i,j)-p^*_j]\big)\big].
\end{align}

Combining the above two equations, we deduce that
\begin{align}
\textup{Reg}_{\textup{xplr}}(i) &\leq \textup{Reg}^{(1)}(i) \leq\frac{1-q_1}{1-q_1-q_2/\log N}\big( \mathrm{O}(\log\log N) \nonumber\\
&~~+ \mathrm{E}[\inf\{k>0: \min_{i' \in \Str(i)}\lambda_k(i)/\lambda_k(i')\geq  N\}]\sum_{j\in \cS}\alpha_j\big(U(i)-[A(i,j)-p^*_j]\big)\big)\nonumber\\
&= \mathrm{O}(\log\log N) \nonumber \\
&~~+ (1+\textup{O}(\frac{1}{\log N}))\mathrm{E}[\inf\{k>0: \min_{i' \in \Str(i)}\lambda_k(i)/\lambda_k(i')\geq  N\}]\sum_{j\in \cS}\alpha_j\big(U(i)-[A(i,j)-p^*_j]\big).
\end{align}
Now, we observed earlier that $\mathrm{P}(\textrm{$i'$ gets confirmed }\mid X=i)\leq \textup{O}(1)/N$ if $i' \in \Str(i)$. Thus the regret in the exploitation phase is in the worst case of order $\mathrm{O}(N)$ with probability $1/N$ and $0$ otherwise. Thus the total expected regret in the exploitation phase is $\mathrm{O}(1)$. Thus
\begin{align*}
\textup{Reg}(i)\leq \; &\mathrm{O}(\log\log N) \; + \\
&(1+\mathrm{O}(\frac{1}{\log N}))\mathrm{E}[\inf\{k>0: \min_{i' \in \Str(i)}\lambda_k(i)/\lambda_k(i')\geq  N\}]\sum_{j\in \cS}\alpha_j\big(U(i)-[A(i,j)-p^*_j]\big).  
\end{align*} 
Thus Lemma~\ref{lma:core} implies the result. (Note that if there are no difficult type pairs, then $\sum_{j\in \cS}\alpha_j\big(U(i)-[A(i,j)-p^*_j]\big)=0$.)




\end{proof}




Finally, we show 
that DEEM
asymptotically achieves the required upper bound on regret.
\begin{proposition}\label{prop:mainupperbound}
Suppose that the generalized imbalance condition is satisfied. 
Let $W^N(\textup{DEEM})$ be the value attained by $\textup{DEEM}_N$ in optimization problem \eqref{prob:mixedbandit}-\eqref{eq:feasibility}.
Then
$$\limsup_{N\rightarrow\infty}\frac{N}{\log{N}}\big(V^*-W^N(\textup{DEEM})\big) \leq C.$$
Further, suppose that there are no difficult type pairs. Then,
$$\limsup_{N\rightarrow\infty}\frac{N}{\log\log{N}}\big(V^*-W^N(\textup{DEEM})\big) \leq K$$
where $K=K(\rho, \mu, A)\in (0,\infty)$ is some constant.
\end{proposition}
\begin{proof}{Proof.}
From Proposition~\ref{prop:construct} along with \eqref{eq:y-capfull} and \eqref{eq:y-capslack}, it follows that the policy $\textup{DEEM}_N$ is feasible in problem \eqref{prob:mixedbandit}-\eqref{eq:feasibility}, and further
\begin{eqnarray*}
W^N_{p^*}(\textup{DEEM})&=&\sum_{i\in\cC}\rho_i\sum_{j\in\cS} x_{\tDEEM}(i,j) A(i,j)- \sum_{j\in\cS}p^*_j\bigg [\sum_{i\in \cC}\rho_ix_{\tDEEM}(i,j)-\mu_j\bigg ]\\
&=& \sum_{i\in\cC}\rho_i\sum_{j\in\cS} x_{\tDEEM}(i,j) A(i,j),
\end{eqnarray*}
where the second equality follows from the fact that if $p^*_j>0$, then $\sum_{i\in \cC}\rho_ix^*(i,j)-\mu_j=0$ by complementary slackness, and hence from \eqref{eq:y-capfull} we obtain that $\sum_{i\in \cC}\rho_ix_{\tDEEM}(i,j)-\mu_j=0$ as well for these $j$.
Thus the rate of accumulation of payoff under DEEM in problem~\eqref{prob:mixedbandit}-\eqref{eq:feasibility} is exactly $W^N_{p^*}(\textup{DEEM})$. Moreover, $y^*(i, \cdot) \in\Delta(\cS(i))$ for all $i \in \cI$ by \eqref{eq:y-support} and the optimality of $x^*$ for the problem with known types. The result then follows from Proposition~\ref{prop:upperbound}.\hfill $\Box$
\end{proof}

\begin{proof}{Proof of Theorem~\ref{thm:mainresult}.}
The result follows from Propositions \ref{prop:lowerbound} and  \ref{prop:mainupperbound}.\hfill $\Box$
\end{proof}

\subsection{Proof of Proposition \ref{prop:difficulty-frequent-with-multiple-skills}: Difficult type pairs occur frequently with multiple skill dimensions}\label{apx:multiskills}
\begin{proof}{Proof of Proposition \ref{prop:difficulty-frequent-with-multiple-skills}.}
Starting with Assumption \ref{ass:non-triviality-of-learning} and the corresponding $i \in \cI$, $i' \in \cI$ and $j \in \cJ$, the main idea is to construct an instance such that in any solution under the full information setting, the worker type $i$ will be matched exclusively to job type $j$, whereas the type $i'$ will be matched exclusively to jobs types other than $j$. And that this remains true in a neighborhood of the given set of parameters.

Let $j$ be the job type and $i$ and $i'$ be the worker types corresponding to Assumption \ref{ass:non-triviality-of-learning}. Let $j'$ be some other job type such that $i'$ differs from $i$ on $\Sk_{j'}$. (Since $\Sk_j \subset \Sk$, but $\cup_{\hat{j} \in \cJ} \Sk_{\hat{j}} = \Sk$, it follows that there is such a job type $j'$.) 
Let $A(i,j)=1/2$. Let $\cI(i,\Sk_j)$ denote the set of worker types whose skill levels along dimensions in $\Sk_j$ are identical to $(i_s)_{s \in \Sk_j}$.  Then, by definition of $\Sk_j$, we have $A(\hat{i},j)=A(i,j)=1/2$ for all $\hat{i} \in \cI(i,\Sk_j)$. (In particular, $A(i',j) = 1/2$.)
Also let $A(i',j')=3/4$, and as above, it follows that $A(\hat{i}',j')=A(i',j')=3/4$ for all $\hat{i}' \in \cI(i', \Sk_{j'})$. Let all other terms in the payoff matrix $A$ be $1/4$.

\noindent{\bf Case $\sum_{\hat{j} \in \cJ} \mu_{\hat{j}} > 1$.} We start with the case $\sum_{\hat{j} \in \cJ} \mu_{\hat{j}} > 1$. Let
$$\rho_i = \frac{\min(\mu_j,1)}{3}\, ,\; \rho_{i'} = \frac{\min(\mu_{j'},1)}{3}\, ,\; \rho_{\hat{i}} = \frac{1- \rho_i - \rho_{i'}}{|\cI|-2} \ \textup{for all} \, \hat{i} \notin \{ i,i'\}\, .$$ 
 It follows that there is an optimal solution $x^*$ to the static planning problem \eqref{eq:opt1}-\eqref{eq:opt3} such that:
\begin{enumerate}[label=(\roman*)]
  \item All workers types are fully matched $\sum_{\hat{j}\in \cJ} x^*(\hat{i},\hat{j}) = 1$ for all $\hat{i} \in \cI$ and for all job types $\hat{j} \in \cJ$, the shadow price $p^*(\hat{j}) = 0$.
  \item Worker type $i$ has a unique optimal job type $\cJ(i) = \{j\}$ which it is fully matched to, i.e., we have $x^*(i,j) = 1$.
  \item Worker type $i'$ has a unique optimal job type $\cJ(i') = \{j'\}$ which it is fully matched to, i.e., we have $x^*(i',j') = 1$.
\end{enumerate}
In fact, it is easy to check that \emph{the above properties hold for any instance in a neighborhood of the instance above} (Here we flexibly choose the metric, for concreteness we use the $L^\infty$ distance between instances, for the purpose of defining the neighborhood). It follows that for any instance in a neighborhood of the above instance, the pair $(i,i')$ is a difficult type pair. It follows that $\cPdiff$ has full dimension.

\noindent{\bf Case $\sum_{\hat{j} \in \cJ} \mu_{\hat{j}} < 1$.} The case $\sum_{\hat{j} \in \cJ} \mu_{\hat{j}} < 1$ can be handled similarly, except that we need to construct $\rho$ in a way that generalized imbalance \eqref{eqn:genimb} holds in order to ensure unique $p^*$ using Proposition~\ref{prop:uniqueness_of_prices}, and then to obtain sufficient control on $p^*$. Begin with $\rho$ defined as above, which simplifies to $\rho_i = \frac{\mu_j}{3}$, $\rho_{i'} = \frac{\mu_{j'}}{3}$, and $\rho_{\hat{i}} = \frac{1- \rho_i - \rho_{i'}}{|\cI|-2}$ for all $\hat{i} \notin \{ i,i'\}$. If generalized imbalance holds then proceed with this $\rho$. If not, then add a small random perturbation to $\rho$ as follows, upon which generalized imbalance will hold with probability 1: For $\eps \triangleq \frac{\min(\mu_j, \mu_{j'})}{3|\cI|}$, add i.i.d. Uniform$[0, \epsilon]$ random variables to $\rho_{\hat{i}}$ for all $\hat{i} \notin \cI$. Note that  this will increase $\sum_{\hat{i} \in \cI \backslash \{i\}} \rho_{\hat{i}}$ by at most $(|\cI|-1) \eps < \frac{\mu_j}{3}$ and so we can safely define $\rho_i = 1- \sum_{\hat{i} \in \cI \backslash \{i\}} \rho_{\hat{i}}  \in (0, \frac{\mu_j}{3}]$. As a result of this perturbation, for each strict subset $\cI' \subset \cI$, the left-hand side $\sum_{\hat{i} \in \cI'} \rho_{\hat{i}}$ of \eqref{eqn:genimb} is perturbed by a random amount which has a non-atomic distribution, and hence for any $\cJ' \subseteq \cJ$ there is a probability $0$ that the perturbed value is exactly equal to the right-hand side  $\sum_{\hat{j} \in \cJ'} \mu_{\hat{j}}$ of \eqref{eqn:genimb}. Using a union bound over subset pairs $(\cI', \cJ')$ and noting that for $\cI' = \cI$ the condition \eqref{eqn:genimb} is satisfied automatically because $\sum_{\hat{i} \in \cI} \rho_{\hat{i}} = 1 > \sum_{\hat{j} \in \cJ} \mu_{\hat{j}} \geq \sum_{\hat{j} \in \cJ'} \mu_{\hat{j}} $ for all $\cJ' \subseteq \cJ$, we obtain that generalized imbalance \eqref{eqn:genimb} holds with probability 1.

It follows from generalized imbalance that $p^*$ is unique using Proposition~\ref{prop:uniqueness_of_prices}.
Note that we only need to show (ii) and (iii) above to deduce that $(i, i')$ is a difficult type pair, property (i) is unnecessary.
Define $A$ as above.
Then notice that 
since $\sum_{\hat{j} \in \cJ} \mu_{\hat{j}} < 1$, for the specified $A$ the unique shadow prices are simply $p^*(\hat{j}) =1/4$ for all $\hat{j} \in \cJ$, which implies (ii) and (iii).
In fact, for any $\delta > 0$, there exists small enough $\delta'>0$,  such that
for any instance in at $L^\infty$ distance at most $\delta'$ from the constructed instance, the following property holds:
\begin{enumerate}[label=(\roman*)]
  \item[(i')] For all job types $\hat{j} \in \cJ$, the shadow price $p^*(\hat{j})$ satisfies $|p^*(\hat{j})- \frac{1}{4}| \leq \delta$.
\end{enumerate}
We deduce property (i') from $\rho_{i} \in (0, \frac{\mu_j}{3} + \delta] \subseteq (0, \mu_j)$ and $\rho_{i'} \in [\frac{\mu_{j'}}{3}, \frac{\mu_{j'}}{3} + \eps + \delta] \subseteq [\frac{\mu_{j'}}{3}, \mu_{j'})$ for small enough $\delta$, and $|A(\hat{i}, \hat{j}) - 1/4| \leq \delta$ for $(\hat{i}, \hat{j}) \notin \{(i,j), (i', j') \}$ as follows: The proof of Proposition~\ref{prop:uniqueness_of_prices} tells us that both $p^*$ and worker type shadow prices $v^*$ are uniquely determined in the dual problem \eqref{eq:dual1} and \eqref{eq:dual2}. (Our argument will in fact independently establish this.) Our argument will control these shadow prices. Clearly all job types are fully utilized since $\sum_{\hat{j} \in \cJ} \mu_{\hat{j}} < 1$. 
Consider the path constructed in Lemma~\ref{lma:imbalancepath} from job type $\hat{j}$ on the complete bipartite graph between $\cI$ and $\cJ$ (to recap informally: the path has positive assignment probability along every edge, type $j$ is one endpoint, and every job type and worker type on the path is fully assigned except the job type which is the other end point). It must end in the unmatched job type $\kappa$ since all job types in $\cJ$ are fully utilized. Let the path be $\kappa\text{---}i^1\text{---}j^1\text{---}\ldots\text{---}i^\ell\text{---}j^\ell \!=\! \hat{j}$. Complementary slackness tells us that
\begin{align*}
  v_{i^1}^* &= 0 \\
  v_{i^l}^* + p^*_{j^l} &= A(i^l, j^l) \qquad \forall \, l = 1, 2, \dots, \ell\, , \\
v_{i^{l+1}}^* + p^*_{j^l} &= A(i^{l+1}, j^l) \qquad \forall \, l = 1, 2, \dots, \ell-1\, .
\end{align*}
Is it possible that $(i^1, j^1)$ is one of the ``heavy'' edges, $(i^1, j^1) \in \{(i,j), (i', j') \}$? If so, then $p^*_{j^1} = A(i^1, j^1) \geq \frac{1}{2}- \delta' > \frac{1}{4} + \delta' \geq A(i^2, j^1)$ for $\delta'$ small enough, i.e., $p^*_{j^1} > A(i^2, j^1)$ which contradicts positive assignment probability $x^*(i^2, j^1) > 0$. We deduce that $(i^1, j^1) \notin \{(i,j), (i', j') \}$ and hence that $|p^*_{j^1}-\frac{1}{4}| \leq \delta'$. This immediately implies (using the complementary slackness condition above) that $(i^2, j^1)$ is also not one of the ``heavy'' edges, $(i^2, j^1) \notin \{(i,j), (i', j') \}$, and we deduce from $|A(i^2,j^1)-\frac{1}{4}| \leq \delta'$ that $v^*_{i^2} \leq 2 \delta'$. Proceeding inductively along the path, we  infer that for $\delta'$ small enough, no edge along the path is in $\{(i,j), (i', j') \}$, and that
\begin{align}
  |p^*_{j^l} - \tfrac{1}{4}| &\leq (2l -1)\delta' \qquad \forall \, l = 1, 2, \dots, \ell\\
  v^*_{i^l} &\leq 2(l-1)\delta'\, \qquad \forall \, l = 1, 2, \dots, \ell.
\end{align}
In particular, $|p^*(\hat{j})- \frac{1}{4}| \leq 2|\cI| \delta' \leq \delta$. Repeating for all $\hat{j} \in \cJ$, we deduce property (i') for $\delta'$ small enough.
From (i') we immediately deduce that the heavy edges must be saturated in any optimal assignment, i.e., (ii) and (iii) hold, and it follows that $(i,i')$ is a difficult type pair as required.\hfill $\Box$
\end{proof}

\section{Appendix to Section~\ref{sec:deemplus}}
\subsection{DEEM-discrete: An extension of DEEM to finite settings}\label{sec:deemdiscrete}
In this section, we develop a discrete version of DEEM for a finite setting that is a close discrete analogue of our continuum model studied in the main text. We call this policy \emph{DEEM-discrete}.
\footnote{Since we strive to be close to the continuum setting here, the model will be slightly different than the environment in Section~\ref{sec:simulations} under which we do our simulation study of \DEEMplus: in Section~\ref{sec:simulations}, the jobs arrivals are ``spread out'' over the duration of each period and moreover jobs join a virtual queue upon arrival. Here, unlike in Section~\ref{sec:simulations}, we will assume that jobs arrive synchronously at each time and do not queue up.} We shall see via an example that DEEM-discrete works quite well for large $N$.
However, 
since DEEM was designed to minimize the leading order term of regret \emph{asymptotically} in $N$, DEEM-discrete may not always perform well for practical values of $N$. We discuss this in more detail in Remark~\ref{rem:deemdisc} later in this section. 

Consider the following finite analog of the continuum model of workers and jobs. Time is discrete as before. 
At each time $t= 0, 1, 2, \dots$, an integer $M$ workers arrive into the system. Each worker stays in the system for $N$ time periods. Hence, for each $t= N-1, N, N+1, N+2, \dots$, there are exactly $MN$ workers who are present in the system. The type of each arriving worker is independently sampled from the distribution $(\rho_i)_{i\in\cI}$. At each time $t$, $C_j=\lceil{MN\mu_j}\rceil$ jobs of type $j$ arrive for each\footnote{DEEM-discrete can be easily extended to cases where $M$ and $C_j$ are random quantities in a straightforward manner.} $j\in\cJ$. In each period, the assignment policy chooses how to match the available workers to available jobs. If a worker type $i$ is matched to a job type $j$, then the payoff is Bernoulli$(A(i,j))$. Jobs left unmatched leave the system at the end of each period.

We now develop {\em DEEM-discrete}, a translation of DEEM to the model above.  DEEM-discrete is also a WHO policy, i.e., it acts independently on each worker, based exclusively on that worker's history.  DEEM-discrete inherits the two-phase structure of DEEM: Explore, then Exploit.  The details of DEEM-discrete are as follows.
\begin{itemize}
\item {\bf Explore phase.} This phase is identical to that in DEEM.
\item {\bf Exploit phase.} This phase is identical to that in DEEM, except that a different randomized allocation policy $\yd$ is utilized in lieu of $y^*$, where $\yd$ depends on the parameter $M$ that controls the ``system size'', in addition to $N$ and the model preliminaries. The details of the computation of $\yd$ are presented below.
\item {\bf Job capacity constraints.}  If a particular job allocation is desired for a worker under the policy and if the corresponding job type is unavailable, then the policy doesn't assign any job to the worker in that time period and attempts to resume with the prescribed allocations in the next time period.\footnote{Unlike our continuum model, we are not assuming here that the set of job types $\cJ$ includes a type $\kappa$ with infinite capacity and $0$ payoff.}
\end{itemize}

The main difference between DEEM and DEEM-discrete at a worker level is the allocation policy during the Exploit phase ($\yd$ vs. $y^*$). The main idea behind the design of this policy is as follows.
Recall that DEEM is a WHO policy and hence, in principle, can be directly implemented independently for each worker in this discrete setting.  Recall that DEEM satisfies capacity constraints {\em exactly} in the continuum model, by design of $y^*$.  However, the analogous implementation in the finite model will only satisfy capacity constraints {\em on average}. In particular, the stochastic nature of job assignments resulting from the randomness in payoffs and the randomized choices of DEEM could result in capacity violations and hence job unavailability; {without the exact law of large numbers (ELLN) exploited in the continuum setting, such capacity violations have positive probability.}

{The possibility of capacity violations} is addressed by the new randomized allocation policy $\yd$ that is carefully designed for the Exploit phase of DEEM-discrete, which ensures that the capacity constraints are not violated with high probability. The key idea is to pretend that the job capacities are {\em slightly lower} than their true values, so that the slack absorbs any fluctuations in the demand resulting from the assignment policy. As the system size, i.e., $M$, becomes large, this slack tends to zero and we asymptotically achieve the optimal regret rate of the continuum model.


The randomized allocation policy $\yd$ to be used in the Exploit phase of DEEM-discrete is computed in the two steps described below.
\begin{enumerate}
\item{\bf Estimate resource consumption during Explore under DEEM-discrete via simulation.}
In this step, we estimate the following three quantities pertaining to the Explore phase of DEEM-discrete (this is the same as the Explore phase of DEEM, as defined in Figure~\ref{fig:def-DEEM}).
{\begin{align}
\tilde{\rho}_i & \triangleq \textup{probability that a worker gets labeled type $i$ at the end of Explore,}\label{eq:est1}\\
N^{\textup{xplr}}_i &\triangleq \textup{the mean duration of Explore for worker labeled type $i$,}\label{eq:est2}\\
c^{\textup{xplr}}_j &\triangleq \textup{the mean number of jobs of type $j$ assigned to a worker during Explore.}\label{eq:est3}
\end{align}}
These three quantities can easily be estimated to arbitrary precision through simulations. {Consequently, for simplicity, we will not make any notational distinction between these quantities (and other quantities to follow) and their estimates.}

To construct a sample Explore phase, 
 first we sample a worker type $i$ from the distribution $(\rho_i)_{i\in\cI}$. We then implement the Explore phase of DEEM for this worker. We obtain the label $i'$ at the end of Explore (potentially $i'\neq i$). We also obtain the time until Explore finishes, $n^{\textup{xplr}}_{i'}$. If the exit from Explore condition is not satisfied until time $N$ (line 15 of DEEM), then we define $n^{\textup{xplr}}_{i'}\triangleq N$ and we define the label $i'$ to be the MAP estimate of the type at $N$. Finally, we obtain the number of jobs allocated $c_j$ for each job type $j$ until the end of Explore. After generating $K$ 
such samples (we use a sample size of $K=1000000$ in our illustrative example to follow), we define $c^{\textup{xplr}}_j$ to be the average across the samples of $c_j$, i.e., the jobs consumed of type $j$.
We define $\tilde{\rho}_i$ be the fraction of workers labeled as type $i$ across the samples. Finally, we define $N^{\textup{xplr}}_i$ to be the average of $n^{\textup{xplr}}_{i}$ for each $i\in\cI$ across those samples for which the labeled type was $i$.


\item{\bf Adjust remainder capacity and compute $\yd$.}
{From our calculations of step $1$, we obtain estimates for the following system-level quantities at steady state.
 \begin{align}
M^{\textup{xplt}}_i &\triangleq M\tilde{\rho}_i(N- N^{\textup{xplr}}_{i}): \textup{mean number of workers in Exploit labeled type $i$,}\label{eq:est4}\\
C^{\textup{xplr}}_j&\triangleq Mc^{\textup{xplr}}_j: \textup{mean demand for type $j$ jobs from workers in Explore.}\label{eq:est5}
\end{align}
Here, $\tilde{\rho}_i$, $N^{\textup{xplr}}_{i}$, and $c^{\textup{xplr}}_j$ are defined in \eqref{eq:est1}, \eqref{eq:est2}, and \eqref{eq:est3} respectively. Note again that these estimates can be made arbitrarily accurate by increasing the sample size $K$.}

A naive approach to the Exploit phase would involve maximizing the expected payoff assuming the worker type labels are correct, under the constraint that the expected number of job allocations are at most the mean number of jobs available to allocate in the Exploit phase (this is a simple linear program). Note that the mean number of jobs of type $j$ available to allocate to workers in the Exploit phase is $C_j-C^{\textup{xplr}}_j$. However, as noted above, the randomness in the allocations (both during Explore and Exploit phases) could lead to violation of the capacity constraints. We address this concern by defining DEEM-discrete such that the expected number of jobs of type $j$ assigned to workers per period in steady state does not exceed a ``reduced'' capacity
slightly smaller than $C_j$. We define this reduced capacity to be $\frac{C_j}{1+\delta_j}$
for
\begin{align}
  \delta_j \triangleq 
  \sqrt{\frac{2\log (MN)}{MN\mu_j}} \, ,
  \label{eq:delta-j-def}
\end{align}
{The choice of $\delta_j$ results from a careful balance of the additional regret due to the reduced capacity and that due to capacity violations; we discuss this choice in detail in Section~\ref{sec:deem-disc-regret} below.} We then define the following adjusted leftover capacity of job $j$ that can be allocated to workers in the Exploit phase.\footnote{The quantity $C_j/(1+\delta_j) - C^{\textup{xplr}}_j$ for each $j$ is positive for a large enough $N$ and $M$, since at most $\textup{O}(\log N/N)$ fraction of type $j$ jobs are consumed in expectation during Explore, and a large $M$ and $N$ ensures that $\delta_j$ is small enough. See also Remark~\ref{rem:deemdisc} to follow.}
 \begin{align}
&C^{\textup{xplt}}_j \triangleq \max\big(\frac{C_j}{1+\delta_j} - C^{\textup{xplr}}_j,0\big): \textup{capacity of job type $j$ available for workers in Exploit.} \label{eq:est6}
\end{align}

We are now in a position to compute a randomized job allocation policy for the workers in the Exploit phase. This policy solves the following linear program.

\begin{align}
\text{maximize}\ \ \  & \sum_{i \in \cC}  M^{\textup{xplt}}_i \sum_{j\in \cS} y(i,j) A(i,j) \label{eq:optdisc1}\\
\text{subject to}\ \ \ & \sum_{i\in \cC} M^{\textup{xplt}}_iy(i,j) \leq C^{\textup{xplt}}_j\qquad \forall j \in \cS \, ; \label{eq:optdisc2}\\
& y \in \cD.\label{eq:optdisc3}
\end{align}
Here, $M^{\textup{xplt}}_i$ and $C^{\textup{xplt}}_j$ are defined in \eqref{eq:est4} and \eqref{eq:est6}, respectively. We denote the resulting optimal randomized job allocation policy by $y^D$, i.e., a worker labeled as type $i$ at the end of the Exploit phase is assigned a job of type drawn from the distribution $\yd(i,\cdot)$.

\begin{remark}\label{rem:deemdisc}
Asymptotically in  $N$,  a vanishing fraction of the jobs are utilized for assignments to workers who are currently in the Explore phase. However, for small values of $N$, this fraction could be significant. In fact, certain job types $j$ could be completely exhausted by workers who are in the Explore phase, and hence unavailable for workers in the Exploit phase; i.e., $C^{\textup{xplt}}_j = 0$.
In these scenarios, the performance of DEEM-discrete is expected to be poor. The main issue is that in this regime, the shadow prices $p^*$ from the full information linear program \eqref{eq:opt1}-\eqref{eq:opt3} do not capture the externalities imposed by the capacity constraints correctly, since the Explore phase accounts for a non-vanishing fraction of jobs. The more sophisticated algorithm \DEEMplus presented in Section~\ref{sec:deemplus} effectively addresses this issue by 
estimating the appropriate shadow prices from the queue-length information of the jobs, irrespective of the value of $N$.
\end{remark}
\end{enumerate}


\subsubsection{An example}\label{apx:exdiscrete}
We consider the example that we discussed in Section~\ref{subsec:example}.
Recall there were three worker types (in order): Programmers, Designers, and All-rounders; and three job types (in order): Programming, Design, and Mixed. The payoff matrix (consistent with the subsets of relevant skills) was:
\begin{align}
A = \left [
\begin{matrix}
  0.5 & 0.2 & 0.1\\
  0.3 & 0.8 & 0.2\\
  0.5 & 0.8 & 0.6
\end{matrix} \right ]\label{example}
\end{align}
The worker type distribution was $\rho = [\begin{matrix}0.4/1.9 & 0.6/1.9 & 0.9/1.9\end{matrix}]^T \approx  [\begin{matrix}0.2105 & 0.3157 & 0.4738\end{matrix}]^T$ and the job capacities were $\mu = [\begin{matrix}1/1.9 & 1/1.9 & 1/1.9\end{matrix}]^T$.

First, for $N \in\{ 50, 75, 125, 250\}$, we estimate $\tilde{\rho}$, $(N^{\textup{xplr}}_i)_{i\in\cI}$, and $(c^{\textup{xplr}}_j)_{j\in\cJ}$ under DEEM (we use a sample size of $K = 1000000$ worker lifetime trajectories while computing these estimates). These numbers are presented in Table~\ref{tbl:stepone} below.
\begin{table}[h]
\centering
\begin{tabular}{c c c c}
         \toprule
         $N$ &  $\tilde{\rho}$  & $(N^{\textup{xplr}}_i)_{i\in\cI}$  & $(c^{\textup{xplr}}_j)_{j\in\cJ}$ \\ \midrule
          50           & (0.212, 0.362, 0.426) &   (12.3, 17.1, 15.6) & (3.86, 5.54, 6.05)  \\
          75   & (0.211, 0.361, 0.428)  &  (13.3, 18.1, 17.3)  & (4.11, 5.90, 6.72)\\
         125           & (0.211, 0.359, 0.430) &   (14.3, 19.5, 19.4) & (4.48, 6.53, 7.39)  \\
          250  & (0.211, 0.353, 0.437)  &  (15.3, 21.4, 22.0)  & (4.83, 7.09, 8.44)\\ \bottomrule
  \end{tabular}
 \vspace{0.1in}
 \caption{Estimates of various mean quantities pertaining to the Explore phase (see  \eqref{eq:est1}, \eqref{eq:est2}, and \eqref{eq:est3}) for different values of $N$. The sample size is $1000000$. The largest standard errors in these quantities are 0.014 for the $N^{\textup{xplr}}_i$ and $c^{\textup{xplr}}_j$, and $0.0005$ for the $\tilde{\rho}_i$, across all $i\in\cI$, $j\in\cJ$ and $N$.  }
       \label{tbl:stepone}
\end{table}
There are a few illustrative points to note in this table, which we briefly discuss.

 First, the workers do not get perfectly labeled and so the distribution of worker labels is different from the distribution of true worker types. In particular, a comparison of the distribution of worker labels to the distribution of true worker types, which is $[\begin{matrix}0.2105 & 0.3157 & 0.4738\end{matrix}]^T$, suggests the possibility that a significant fraction of All-rounders are labeled as Designers, which we verify is indeed true based on the misclassification rates for the different pairs of types (we suppress the detailed numerics for brevity).
This is expected since, as we discussed in Section~\ref{subsec:example}, Designers are only weakly distinguished from All-rounders under DEEM. Note that the data suggests that this misclassification rate decreases as $N$ becomes large, which we again verify to be true.

Second, the fraction of time spent in the Explore phase for the different types decreases with $N$ (and the absolute length of the Explore phase increases very slowly with $N$). This is expected given that the expected length of the Explore phase is $\textup{O}(\log N/N)$.

Finally, note that even for the smallest value of $N=50$, the total number of jobs of type $j$ utilized in the Explore phase, $C^{\textup{xplr}}_j$ ($= Mc^{\textup{xplr}}_j$; see \eqref{eq:est5} and \eqref{eq:est3}), is a relatively small fraction of the total number of available  jobs, $C_j$ $(=MN\mu_j)$: the fraction $C^{\textup{xplr}}_j/C_j = c^{\textup{xplr}}_j/(N\mu_j) =   (1.9\times c^{\textup{xplr}}_j)/N$ is the largest for $j = 3$, in which case we have $C^{\textup{xplr}}_3/C_3 = (1.9\times 6.0501)/50 = 0.2299$. Hence, we can hope that the shadow prices $p^*$ from the full information linear program capture the externalities imposed by the capacity constraints reasonably well for the $(\rho, \mu, A)$ and $N$ values considered here.


Moving on to the next step, we consider $M\in \{50,100,200\}$, and for different values of $M$ and $N$, we calculate $\yd$ by solving \eqref{eq:optdisc1}-\eqref{eq:optdisc3}, for $(C^{\textup{xplt}}_j)_{j \in \cJ}$ and $(M^{\textup{xplt}}_i)_{i \in \cI}$ computed using the simulation-based estimates above (see \eqref{eq:est4}, \eqref{eq:est5}, and \eqref{eq:est6}).

{\bf Simulation results.} For this example, we implement DEEM-discrete in a simulated market setting (with dynamics as defined in the first paragraph of the current appendix section) using these computed job allocation policies $\yd$ for the different values of $N$ and $M$. Because of the adjustment to the capacities in the computation of $\yd$, the job capacity constraints were satisfied in all periods in all of the settings in our simulations. In Table~\ref{tbl:perfdiscrete}, we present the ratios of the average payoff obtained per worker per period by the platform at steady state and the optimal per worker per period payoff if the worker types were known a priori to the platform, for different values of $M$ and $N$. We refer to this ratio as the performance ratio (PR). The numbers show that the proposed translation of DEEM to a discrete market setting performs reasonably well, especially for a large $N$. Also, as expected, the PR increases with both $M$ and $N$.
\begin{table}[h]
\centering
\begin{tabular}{c c c c}
         \toprule
         $N$ & PR ($M=50$) & PR ($M=100$) & PR ($M=200$)\\ \midrule
          50           & 0.883 &   0.913  & 0.919 \\
          75   & 0.904  &  0.933 & 0.941\\
         125     & 0.921     & 0.951  &  0.959  \\
        250 & 0.938  &  0.966 &  0.975\\ \bottomrule
  \end{tabular}
  \vspace{0.1in}
  \caption{The performance ratios (PR) for different values of $M$ and $N$.}
       \label{tbl:perfdiscrete}
\end{table}

{To summarize, DEEM-discrete is a natural translation of DEEM to a discrete analogue of our continuum model. This translation works well in instances where a small fraction of the total supply of jobs is utilized for exploration (as is the case in the example above, and in general when $N$ is sufficiently large), but is not expected to work well in settings where a large fraction of the total supply of jobs is utilized for exploration, as discussed in Remark~\ref{rem:deemdisc} above. Instead, our practical recommendation is our heuristic \DEEMplus along with the queue-based computation of shadow prices, which works well for practical values of $N$.}

\subsubsection{The choice of $\delta_j$ and the regret of DEEM-discrete.}\label{sec:deem-disc-regret}  {The justification for the specific choice of $\delta_j$ (see \eqref{eq:delta-j-def}) in the definition of the adjusted capacity $C^{\textup{xplt}}_j$ in DEEM-discrete (see \eqref{eq:est6}) is as follows. }We know that there are $MN$ workers in the system available at any given time. {Since DEEM-discrete is a WHO policy, worker histories are independent of each other (assuming no capacity violations) and so the total number of requests of job type $j$ is a sum of $MN$ independent Bernoulli random variables.}
Suppose that we ensure that the mean number of requests of job type $j$ at any time is less than or equal to a reduced capacity of $\frac{C_j}{1+\delta_j}$. Then using the multiplicative Chernoff bound, 
the probability that there is a capacity violation of job type $j$ in any arbitrary period is at most $\exp(-\frac{\delta_j^2 C_j}{(1+\delta_j)(2+\delta_j)}) \leq \exp(-\frac{\delta_j^2C_j}{4})$ for any $\delta_j \leq 1/2$. 
Recall that if DEEM-discrete is unable to assign a job to a worker due to unavailability, then the policy doesn't assign any job to the worker in that time period and attempts to resume with the allocations in the next time period. Thus if a capacity violation occurs in a particular time period, then in the worst case, a worker will face an additional regret of $1$. Hence the expected additional regret per worker over her lifetime is at most $N\sum_{j\in\cJ}\exp(-\frac{\delta_j^2C_j}{\gamma})$. And hence, the per worker per period additional regret due to capacity violations is at most $\sum_{j\in\cJ}\exp(-\frac{\delta_j^2C_j}{\gamma})$.

Now we try to balance two costs: the additional regret due to the reduction of job capacities which is at most\footnote{Since the modified capacity for job type $j$ is $C_j/(1+\delta_j)\geq C_j(1-\delta_j)$, the capacity reduction is at most $C_j\delta_j\geq MN\mu_j\delta_j$. These ``lost'' jobs each reduce payoff by at most $1$, so the additional regret per worker per period is at most $MN\mu_j\delta_j/(MN) =\mu_j\delta_j$ for each job type $j \in \cJ$. 
} $\sum_{j\in\cJ}\mu_j \delta_j$, and the additional regret due to the capacity violations which is at most $\sum_{j\in\cJ}\exp(-\frac{\delta_j^2C_j}{4})$ as we argued above. It is easy to verify that the choice of\footnote{We write $f(t) = \tilde{\textup{O}}(g(t))$ if there exists a $k>0$ such that $f(t) = \textup{O}(g(t)(\log t)^k)$.} $\delta_j$ as in \eqref{eq:delta-j-def} balances these two costs for each $j \in \cJ$, resulting in a total additional regret of $\tilde{\textup{O}}(1/\sqrt{MN})$ for $(MN)\rightarrow\infty$ (for example, one can consider fixed $N$ and let the system size $M \to \infty$). 
We want $\delta_j \leq 1/2$ for all $j \in \cJ$ so that the Chernoff bound holds (see above), and we ensure this by requiring  $M$ and $N$ to satisfy
$$\frac{\log (MN)}{MN} \leq \tfrac{1}{8} \min_{j \in \cJ} \mu_j \, . $$
The bound on regret obtained from the above argument is captured in the following remark. 
\begin{remark}
\label{rem:DEEM-discrete-addnl-regret}
For any $M$ and $N$ that satisfy
$$\frac{\log (MN)}{MN} \leq \tfrac{1}{8} \min_{j \in \cJ} \mu_j \, ,$$
the additional regret of DEEM-discrete in the above finite setting relative to the regret under DEEM in the continuum setting of Section~\ref{sec:model} is bounded above as
$$\textup{Regret(DEEM-discrete)} - \textup{Regret(DEEM)}  \leq \frac{1}{\sqrt{MN}} \Big ( |\cJ| + \sqrt{2 \log (MN)} \sum_{j \in \cJ} \sqrt{\mu_j}\Big ) \, .$$
\end{remark}

\section{Appendix to Section~\ref{sec:simulations}}
\subsection{Description of instances used in our simulations} \label{apx:instances}


In this section, we describe the instances that we used in our simulations in Section~\ref{sec:simulations}.
Building upon our example in Section~\ref{subsec:example}, we consider instances with $4$ types of workers arising from a two dimensional skill set, namely Programming (P) and Design (D). Each type either has or doesn't have a skill. Let the type space be defined as $\cI=\{ 00,\, 01,\, 10,\,11\}$ where the first (resp., second) bit of each type signifies whether or not the worker has skill P (resp., D). There are $3$ types of jobs, denoted by the set $\cJ= \{\text{Programming},\, \text{Design},\, \text{Mixed}\}$. Programming jobs benefit from skill P, Design jobs benefit from skill D, and Mixed jobs require both skills P and D. Formally, Programming jobs have they same payoff for $00$ and $01$ worker types (and thus cannot distinguish them); and they also have the same payoff for $10$ and $11$ worker types (and thus cannot distinguish them).  Similarly, Design jobs have the same payoff for $00$ and $10$ worker types, and for $01$ and $11$ worker types. Mixed jobs can distinguish all worker types. Thus the $4\times 3$ $A$ matrix is fully specified by $8$ payoff entries: (1) Programming job payoffs for $00/01$ and $10/11$ worker types; (2) Design job payoffs for $00/10$ and $01/11$ worker types; and (3) Mixed job payoffs for all four types.

The interpretation of types based on skill levels implies a natural order on the payoffs: a Programming job's expected payoff for the $00/01$ types will be smaller than that for the $10/11$ types, and similarly a Design job's expected payoff for the $00/10$ types will be smaller than that for the $01/11$ types.  Finally, we arbitrarily assume that Programming skills are more important than Design skills for Mixed jobs, and hence the expected payoff for these jobs is increasing in the type sequence $00$, $01$, $10$, and $11$ (this assumption is without loss of generality since we can swap the roles of Programming and Design arbitrarily across instances). Notice that an $A$ matrix with this structure satisfies Assumption \ref{ass:non-triviality-of-learning}.

We assume that $\rho_i = 0.25$ for each worker type $i$.  We first generated $10,000$ instances, where for each instance: (1) $\mu_j$ is sampled independently across $j$ from a uniform distribution on $[1/6,1/2]$ (thus, the expected sum of job arrival rates is $1$, equal to the mass of workers in the system at any time); and (2) for each of the job types, the payoffs for different worker types are assigned to be the appropriate order statistics of independent uniformly generated random variables in $[0,1]$, so that the payoffs are monotonic in skills.
We found that out of the $10000$ instances, $8301$ instances, i.e., $\approx$ $83\%$, had a difficult type pair. The fact that a non-trivial fraction of instances have a difficult type pair is consistent with Proposition~\ref{prop:difficulty-frequent-with-multiple-skills}. We then randomly chose $350$ instances of those that had at least one difficult type pair and focused on these instances for our simulations.

\subsection{Using PD-control to stabilize queue-lengths under \DEEMplus}\label{sec:pd-control}

In the implementation of $\DEEMplus$ in our simulations in Section~\ref{sec:simulations}, we use PD-control on the prices with the goal of stabilizing queue-lengths. These prices consist of two terms: the proportional term, and the derivative term. The proportional term captures the idea that if the queue length is small then this signifies that the job type is in high demand and hence the price for this type should be high. Thus, if the queue length of job type $j$ at any epoch $l$ is $q_j(l)$, the proportional component of the price of $j$ at any time between that epoch and the next is set to be
\begin{equation}\label{eq:proportional-term}
p^{P}_j(l) = (B-q_j(l))/B
\end{equation} where $B$ is the buffer size of the queue. A purely proportional price control does not necessarily stabilize queue-lengths; it may lead to oscillations of queue-sizes and hence of the prices, which could be detrimental to performance.

In order to dampen possible oscillations we add a \emph{derivative} term. Let $q_j(l)$ be the queue-length of job type $j$ at job arrival/assignment epoch $l$. For a fixed window size $W$, we keep track of the moving-average queue-length $q_j^{\textup{avg, }W}(l)$, defined as
$$q^{\textup{avg, }W}_j(l)=(1-1/W) q^{\textup{avg, }W}_j(l-1)+1/Wq_j(l).$$
Informally, $(q_j(l)-q_j^{\textup{avg},W}(l))/W$ 
is an estimate of the derivative of the mean queue-length at epoch $l$ (the window size should be small enough, but not too small, so that stochastic fluctuations are averaged out). Then we define the derivative term of the prices to be
\begin{equation}\label{eq:derivative-term}
p^D_j(l) = -\frac{\zeta}{B}(q_j(l)-q_j^{\textup{avg, }W}(l)).
\end{equation}
Finally, the price for each job type $j$ between epochs $l$ and $l+1$ is given by
$$p^q_j(l) = p^P_j(l) + p^D_j(l).$$
It remains to decide the values of $W$ and $\zeta$.

In our simulations, we use two derivative terms as per \eqref{eq:derivative-term} corresponding to window sizes $W_1=2400$ and $W_2= 2400/1.8\approx 1333$. For both these terms, we choose a common value of $\zeta = 5$.  All these parameters were chosen through trial-and-error to achieve a reasonable degree of stability in the prices (c.f., Table~\ref{tbl:p}) and they were held constant throughout our simulations.

\subsection{Definitions of PA-TS and TS-\DEEMplus}\label{apx:other}
The two policies PA-TS and TS-\DEEMplus are defined in Figures~\ref{fig:def-PA-TS} and \ref{fig:def-DTS}.
\begin{figure}[h]
\fbox{{\footnotesize\begin{minipage}{\textwidth}
\begin{center} {\vspace{0.1in}\normalsize \bf PA-TS} \vspace{0.1in}\end{center}
\textbf{Input parameters:} $\cI$, $\cJ$, $A$, $\rho$, $N$, queue-based prices $p^q$.\\
{\bf Pre-compute:} The sets $\cJ^q(i)$ defined in \eqref{eqn:opt-jobs-queue-prices} for each worker type $i\in\cI$.
\medskip
\hrule
\medskip
\begin{algorithmic}[1]
\LineComment{Main Routine}
\Procedure{PA-TS}{} \Comment{Acts independently on each worker, over her lifetime, from arrival to departure}

\medskip
\State $\lambda(i) \gets \rho_i$ for all $i\in \cI$ \Comment{The un-normalized posterior probabilities; initialized to the prior}
\State $k \gets 0$ \Comment{Number of time steps the worker has been in the system}
\medskip


\While{$k<N$}
	\State Assign job type $j_k\sim$  \Call{TS}{$\lambda$} \Comment{At the next time step}
	\State Observe payoff $r_k$
	\State $\lambda(i) \gets \lambda(i) \times (A(i,j_k)\mathbf{1}_{\{r_k=1\}}+(1-A(i,j_k))\mathbf{1}_{\{r_k=0\}})$, for all $i \in \cI$
	\State $k \gets k+1$
\EndWhile
\medskip

 \EndProcedure
 \medskip
 \hrule
 \medskip
 \LineComment{Functions}
\Function{TS}{$\lambda$}
\State  For each $j\in\cJ$, define \Comment{Thompson sampling}
\begin{align}
\alpha_j &= \bigg(\sum_{i\in\cI}\lambda(i)\frac{\ind_{j\in\cJ^q(i)}}{|\cJ^q(i)|}\bigg)\bigg/ \sum_{i\in\cI}\lambda(i) \nonumber
\end{align}

\State {\bf return} $\alpha=(\alpha_j)_{j\in\cJ}$
\EndFunction

\end{algorithmic}
\end{minipage}}}
\linespread{1}
\caption{Definition of PA-TS. The prices $p^q$ depend on the queue-lengths of the different job types, as discussed in Section~\ref{subsec:queueprices}, and formally defined in Appendix~\ref{sec:pd-control}.}
\label{fig:def-PA-TS}
\end{figure}

\begin{figure}[htb]
\fbox{{\footnotesize\begin{minipage}{\textwidth}
\begin{center} {\vspace{0.1in}\normalsize \bf TS-\DEEMplus \vspace{0.1in}} \end{center}
\textbf{Input parameters:} $\cI$, $\cJ$, $A$, $\rho$, $N$, queue-based prices $p^q$.\\
{\bf Pre-compute:}\begin{itemize}
\item The sets $\cJ^q(i)$ defined in \eqref{eqn:opt-jobs-queue-prices} for each worker type $i\in\cI$.
\item The set of worker types $\Str^q(i)$ defined in Definition~\ref{def-weak} for all $i \in \cI$.
\item The maximal externality-adjusted payoffs $U^q(i)$ for all $i\in\cI$ defined in \eqref{def-max-utility-queue-prices} and the maximal per-step mislabeling regrets $R(i,i')$ for all $i,\,i'\in \cI$ defined in \eqref{def-onestepreg-str}--\eqref{def-onestepreg-weak}.
\end{itemize}
\medskip
\hrule
\medskip
\begin{algorithmic}[1]
\LineComment{Main Routine}
\Procedure{TS-\DEEMplus}{} \Comment{Acts independently on each worker, over her lifetime, from arrival to departure}

\LineComment{Initialization:}
\State $\lambda(i) \gets \rho_i$ for all $i\in \cI$ \Comment{The un-normalized posterior probabilities; initialized to the prior}
\State $\MAP \gets \argmax_{i\in\cI} \lambda(i)$ \Comment{Initialization of the MAP estimate}
\State $Label \gets \emptyset$ \Comment{Worker label; initially unassigned, denoted by $\emptyset$}
\State $k \gets 0$ \Comment{Number of time steps the worker has been in the system}
\LineComment{Explore phase:}
\While{$Label = \emptyset$ and $k<N$}
	\State Assign job type $j_k\sim$  \Call{Explore}{$\lambda$} \Comment{At the next time step}
	\State Observe payoff $r_k$
	\State $\lambda(i) \gets \lambda(i) \times (A(i,j_k)\mathbf{1}_{\{r_k=1\}}+(1-A(i,j_k))\mathbf{1}_{\{r_k=0\}})$, for all $i \in \cI$
	\State $\MAP \gets \argmax_{i\in\cI} \lambda(i)$
	\State \If{$\min_{i\in \Str^q(\MAP)}\frac{\lambda(\MAP)}{\lambda(i)R(\MAP,i)} \geq N$} \label{pc:label-condition}\Comment{if confirmation is complete}
		\State $Label \gets \MAP$  \Comment{Worker label assigned. Will cause while loop to exit.}
		\EndIf 
	\State $k \gets k+1$
\EndWhile
\LineComment{Exploit phase:}
\While{$k<N$}
 		\State Assign job type $j_k=$ \Call{Exploit}{$\lambda$} \Comment{At the next time step}
 	\State $k \gets k+1$
 \EndWhile
 \EndProcedure
 \medskip
 \hrule
 \medskip
 \LineComment{Functions}
\Function{Explore}{$\lambda$}
\State  For each $j\in\cJ$, define \Comment{Thompson sampling}
\begin{align}
\alpha_j &= \bigg(\sum_{i\in\cI}\lambda(i)\frac{\ind_{j\in\cJ^q(i)}}{|\cJ^q(i)|}\bigg)\bigg/ \sum_{i\in\cI}\lambda(i) \nonumber
\end{align}

\State {\bf return} $\alpha=(\alpha_j)_{j\in\cJ}$
\EndFunction
\Function{Exploit}{$\lambda$}
\State $j^* = \argmax_{j\in \cJ}\sum_{i\in\cI} \lambda(i)\big[A(i,j)-p^q_j\big]$ \Comment{Greedy}\label{pc:exploit-sample}
	\State {\bf return} $j^*$
\EndFunction

\end{algorithmic}
\end{minipage}}}
\linespread{1}
\caption{Definition of TS-\DEEMplus. The prices $p^q$ depend on the queue-lengths of the different job types, as discussed in Section~\ref{subsec:queueprices}, and formally defined in Appendix~\ref{sec:pd-control}.}
\label{fig:def-DTS}
\end{figure}

\subsection{Quality of our regret estimate for finite $N$}\label{apx:dpconfirm}

In this section we investigate the quality of our regret estimate for finite $N$. 

From the asymptotic analysis of DEEM, it is clear that optimizing the regret vs. learning tradeoff in the Confirmation mode is critical in achieving the optimal leading order term of regret. Our definition of the job sampling distribution $\alpha(i)$ in DEEM (see Figure~\ref{fig:def-alphai-ystar}) is specifically designed to achieve the smallest possible regret incurred in confirming workers of type $i \in \cI$ to leading order as $N \to \infty$ as discussed in Section~\ref{subsec:example} ``Minimizing regret during Confirmation''. This regret as a function of $\alpha(i)$ is $\frac{\log N}{N}\frac{\sum_{j\in \cS} \alpha_j(i)\big(U(i)-[A(i,j)-p^*_j]\big)}{\min_{i'\in \Str(i)}\sum_{j\in \cS}\alpha_j(i) \KL(i,i'|j)}$ and the
smallest regret achievable by optimizing over $\alpha(i) $ is hence $C(i)\frac{\log N}{N}$ for
\begin{align}
C(i)\triangleq \min_{\alpha\in\Delta(\cJ)} \frac{\sum_{j\in \cS} \alpha_j\big(U(i)-[A(i,j)-p^*_j]\big)}{\min_{i'\in \Str(i)}\sum_{j\in \cS}\alpha_j \KL(i,i'|j)} \, .
\label{eq:Ci-again}
\end{align}
where we repeated the definition \eqref{def:ci} of $C(i)$ for the convenience of the reader.
However, we do not expect $C(i)\frac{\log N}{N}$ to be a good estimate of the regret for small $N$ (we find empirically that indeed it is not).
This is the reason our heuristic \DEEMplus has a modified Confirmation mode of Explore (relative to that of DEEM) which accounts for small $N$. Corresponding to the modified Confirmation mode design of \DEEMplus, in this subsection, we define a modified regret estimate $C^+(i)$ for the regret incurred under \DEEMplus in confirming working type $i$ when $N$ is small. We find that the average regret estimate across true worker types $C^+ = \sum_{i \in \cI} \rho_i C^+(i)$ correlates strongly with the observed regret under \DEEMplus. This finding provides suggestive evidence that the proxy objective that \DEEMplus optimizes during Explore reasonably captures the true regret, and thus sheds light on why \DEEMplus does significantly better than TS-\DEEMplus; see Section~\ref{sec:simresults} (recall that the policy TS-\DEEMplus replaces the confirmation mode of \DEEMplus with externality-adjusted Thompson sampling).  



We now motivate and define the estimate $C^+(i)$ for the regret in confirming type $i$ under \DEEMplus. Given the queue-based shadow prices $(p^q_j)_{j\in\cJ}$, and given the posterior distribution over the worker type $\lambda=(\lambda(i))_{i\in\cI}$, the job type distribution $\alpha$ at any opportunity in the Confirmation mode of \DEEMplus solves the following optimization problem (see lines 43-46 in Figure~\ref{fig:def-DEEMplus-cont}).
\begin{align}
 \min_{\alpha\in \Delta(\cS)}\bigg\{\underbrace{\bigg[\sum_{i\in\cI} \lambda(i)\sum_{j\in \cS} \alpha_j\big(U^q(i)-[A(i,j)-p^q_j]\big)\bigg]}_{\text{(a)}}\underbrace{\bigg[\max_{i'\in\Str^q(\MAP)} \frac{\log N + \log R(\MAP,i')}{\sum_{j\in \cS}\alpha_j \KL(\MAP,i'|j)}\bigg]}_{\text{(b)}}\bigg\},
 \end{align}
where $\MAP \in \argmax_{i\in\cI} \lambda(i)$. This optimization problem is closely related to the optimization problem in the definition \eqref{eq:Ci-again} of $C(i)$. In particular, the second term (b) is (approximately) the expected number of opportunities it is projected to take to attain all learning goals (``confirm'') for the $\MAP$ estimate under the stationary allocation policy $\alpha$. The first term (a) approximates the expected regret per worker per job opportunity under the stationary allocation policy $\alpha$ given the posterior distribution $\lambda$.
This term (a) evolves with the posterior $\lambda$; however, on the event that the true worker type is $i$, at the end of the Guessing mode, with probability $1-\textup{O}(1/\log N)$, the posterior 
will place at most $\textup{O}(1/\log N)$ mass on the types that $i$ needs to be (either weakly or strongly) distinguished from. Thus a reasonable proxy
for this evolving optimization problem during Confirmation is the following time-independent optimization problem, for the different possibilities of $\MAP = i\in\cI$.
\begin{align}
 \min_{\alpha\in \Delta(\cS)}\bigg\{\bigg[\sum_{j\in \cS} \alpha_j\big(U^q(i)-[A(i,j)-p^q_j]\big)\bigg]\bigg[\max_{i'\in\Str^q(i)} \frac{\log N + \log R(i,i')}{\sum_{j\in \cS}\alpha_j \KL(i,i'|j)}\bigg]\bigg\}&\triangleq C^{+}(i).
 \end{align}
The objective in this problem is an approximation to the expected cumulative regret under \DEEMplus until confirmation of a type $i$ on the event that $i$ is indeed the true type (here we ignore the regret incurred in the Guessing mode).
Since the probability of this event is $\rho_i$, in summary, the design of the Confirmation mode of \DEEMplus corresponds roughly to an estimate of $C^{+} \triangleq \sum_{i\in\cI}\rho_i C^{+}(i)$ for the lifetime regret per worker that \DEEMplus is expected to incur (the resulting per-period regret estimate is $\frac{C^+}{N}$).
\begin{table}[h]
\centering
\begin{tabular}{c  c}
        \toprule
         $N$ & PCC \\ \midrule
         10           & 0.47 \\
          20   & 0.50 \\
         30     & 0.55 \\
        40 & 0.63 \\ \bottomrule
\end{tabular}
 \vspace{0.1in}
  \caption{The Pearson correlation coefficient (PCC) between $C^+$ and the actual regret incurred per worker on average under \DEEMplus (from simulation) for different values of $N$ across the $350$ instances.}
       \label{tbl:pearson}
\end{table}
To investigate if $C^+$ indeed does a good job of capturing the regret, for different values of $N\in\{10,20,30,40\}$, we compute the Pearson correlation coefficient between $C^{+}$ and the incurred regret across the $350$ test instances.\footnote{We utilize the estimated steady-state values of the queue-based prices $p^q$ from our simulations for each instance in our computations.}
The results, shown in Table~\ref{tbl:pearson}, show a significant correlation for all values of $N$ we consider, thus providing some justification for the design of our heuristic \DEEMplus.


\end{APPENDIX}




\end{document}